\newtheorem*{unnumberedtheorem}{Theorem}
\newtheorem*{unnumberedprop}{Proposition}
\newcommand{\I}{\mathbb{I}}
\newcommand{\de}{\mathbb{E}}
\newcommand{\md}{\mathcal{D}}
\newcommand{\dP}{\mathbb{P}}
\newcommand{\ts}{\pi_{\mathrm{TS}}}
\newcommand{\ids}{\pi_{\mathrm{IDS}}}
\newcommand{\app}{\pi_{\mathrm{app}}}
\newcommand{\kl}{D_{\mathrm{KL}}}
\newcommand{\E}{\mathcal{E}}
\definecolor{LightCyan}{rgb}{0.8, 0.9, 1}
\definecolor{LightCyan}{rgb}{0.8, 0.9, 1}
\renewcommand{\textcolor}[2]{#2}
\newcommand*{\rom}[1]{\expandafter\@slowromancap\romannumeral #1@}
\title{\huge Sample-Efficient Reinforcement Learning from Human Feedback via Information-Directed Sampling}
\author
{
  Han Qi\thanks{Equal contribution.}
  \thanks{Shanghai AI Laboratory. Email: {\tt zhangqiaosheng@pjlab.org.cn}}
  \thanks{Xi'an Jiaotong University. Email: {\tt qihan19@stu.xjtu.edu.cn}}
  \and
	Haochen Yang$^{*}$\thanks{Peking University. Email: {\tt hcyang@stu.pku.edu.cn}}
	\and
	Qiaosheng Zhang$^\dagger$ 
  \and
  Zhuoran Yang\thanks{Yale University. Email: {\tt zhuoran.yang@yale.edu}}
}
\begin{document}
\date{}
\maketitle

\begin{abstract}

We study the problem of reinforcement learning from human feedback (RLHF), a critical problem in training large language models, from a theoretical perspective. Our main contribution is the design of novel sample-efficient RLHF algorithms based on information-directed sampling (IDS), an online decision-making principle inspired by information theory.
Our algorithms  maximize the sum of the value function and a mutual information term that encourages exploration of the unknown environment (which quantifies the information gained about the environment through observed human feedback data). To tackle the challenge of large state spaces  and  improve sample efficiency, we construct a simplified \emph{surrogate environment} and introduce a novel distance measure (named the \emph{$\ell_g$-distance}), enabling our IDS-based algorithm to achieve a Bayesian regret upper bound of order 
$O(H^{3/2}\sqrt{\log(K(\epsilon)) T})$, where $H$ is the episode length, $T$ is the number of episode and   $K(\epsilon)$  is related to the  covering number  of the  environment. Specializing to the tabular settings, this regret bound is of order $\tilde{O}(H^2\sqrt{SAT})$, where $S$ and $A$ are the numbers of states and actions. Finally, we propose an Approximate-IDS algorithm that is computationally more efficient while maintaining nearly the same sample efficiency. The  design principle of this approximate algorithm is not only effective in RLHF settings but also applicable to the standard RL framework. Moreover, our work showcases the value of information theory in reinforcement learning and in the training of large language models.

\end{abstract}

\section{Introduction}
Reinforcement learning from human feedback (RLHF) is a key technique for aligning large language models (LLMs) to human values \cite{ouyang2022training}, and has also shown immense potential in many other fields, such as stock prediction, robot training, medical treatments \cite{zhu2023principled}. It can be viewed as an extension of standard reinforcement learning (RL) in the sense that feedback is not given as a numerical reward, but as a one-bit preference over a trajectory pair. Compared to standard RL, this preference-based setting is often more aligned with real-world scenarios, especially for tasks involving human evaluations~\cite{chen2022human}. 
However, a key challenge for applying RLHF algorithms is their reliance on extensive human feedback data, which is usually expensive and time-intensive to collect. To address this challenge, recent works on RLHF mainly focus on developing online learning methods that encourage exploration to improve sample efficiency, thereby reducing the amount of human feedback needed~\cite{xie2024exploratorypreferenceoptimizationharnessing}. This brings the RLHF problem back to a fundamental question in RL: \emph{how to effectively balance the trade-off between exploration and exploitation to improve sample efficiency?}

To tackle this trade-off, two major design principles have been introduced. The first approach, \emph{Optimism in the Face of Uncertainty} (OFU), typically relies on constructing confidence sets that include the true environment with high probability to construct corresponding policies, one example of which is the Upper Confidence Bound (UCB) approach \cite{tossou2019nearoptimaloptimisticreinforcementlearning, ye2024theoretical}. In this paper, however, we focus on the less explored second approach, \emph{Posterior Sampling}, which adopts the Bayesian framework and treats the environment as a random variable. \textcolor{blue}{The Bayesian reinforcement learning is a natural framework for studying in-context RL with LLMs (Transformer-based policies)}.
One classical posterior sampling algorithm is Thompson Sampling (TS), which has been proved to be sample-efficient and enjoy sublinear Bayesian regret upper bounds  in both RL \cite{moradipari2023improved} and RLHF settings~\cite{wu2023making}.

Apart from TS, \emph{information-directed sampling} (IDS) emerges as a novel and principled online decision-making approach. By incorporating a mutual information term into the policy selection procedure, IDS manages to further encourage exploration about the unknown environment, thus tackling the exploration-exploitation tradeoff to a certain extent \cite{hao2022regret, russo2014learning}. 
Compared with UCB and TS, IDS is more adept at learning complex information-regret structures, and is more flexible and robust to observation noise \cite{zhang2024provably}. In addition, empirical evidence has demonstrated that IDS performs exceptionally well across a range of scenarios, such as sparse linear bandits \cite{hao2021information}, bandits with graph feedback \cite{hao2022contextual}, Markov Decision Processes (MDPs) \cite{hao2022regret}.

Despite their theoretical and empirical advantages, existing IDS-based algorithms are restricted to RL problems with explicitly observable rewards, and are not applicable to RLHF settings. In the LLM era, there is a pressing need for sample-efficient RLHF algorithms,  particularly for scenarios with  large state spaces. 
To tackle these challenges, we first introduce the concept of \emph{surrogate environment}, a compressed (simplified) representation of the potentially complex environment, which helps address the issue of large state spaces. Building on this, and inspired by rate-distortion theory, we design IDS-based RLHF algorithms that are not only theoretically sample-efficient but also computationally easy to implement.

\textbf{Main contributions:} The contribution of this paper can be summarized as follows.

\begin{enumerate}
    \item We first introduce a basic IDS-based algorithm for the RLHF setting where the reward is unobservable and only preference feedback is available (see Sec. \ref{sec:basic_alg}). In each episode, it follows the Bayesian posterior sampling paradigm, and solves an optimization problem that maximizes the sum of an expected value term (exploitation) and a mutual information term (exploration). Here, the mutual information quantifies the amount of information about a learning target (e.g., the environment) that can be gained through the trajectrories and preference.

    \item To tackle the challenge posed by large state spaces, we construct a simplified surrogate environment as the learning target in our algorithm. Using tools from information theory and posterior consistency theory, we prove that our IDS-based algorithm with surrogate environment (Algorithm~\ref{alg}) achieves a Bayesian regret bound of $O(H^{3/2}\sqrt{\log(K(\epsilon)) T})$, where $H$ is the episode length, $T$ is the number of episode, and $K(\epsilon)$  is related to the \emph{covering  number} of  the  environment. We also specialize our algorithm and results to the tabular RLHF, linear RLHF, and contextual dueling bandit settings, and demonstrate the advantages of our algorithm over existing ones.

    \item \textcolor{blue}{In the literature on information-directed sampling (IDS), to the best of our knowledge, there is no efficient implementation of IDS that learns a surrogate environment, which hinders its practical application.}  We propose an Approximate-IDS algorithm (Algorithm~\ref{alg2}) that is computationally more efficient than Algorithm~\ref{alg} while maintaining nearly the same sample efficiency. The advantage of this algorithm is that it does not need to construct the surrogate environment. This algorithm selects policies using an alternative optimization objective that can be optimized with standard RL techniques, such as PPO \cite{schulman2017proximal}. 
    Furthermore, we note that  the
    design principle of  Algorithm \ref{alg2}  is not only effective for preference-based learning but is also applicable to general RL tasks.
\end{enumerate}

\textbf{Highlights on technical novelty:} In the process of constructing the surrogate environment, we introduce a novel distance measure, the \emph{$\ell_g$-distance}, to quantify the discrepancy between two probability measures (see Eqn.~\eqref{new_dis} in Sec. \ref{surr}). 
\textcolor{blue}{The newly proposed distance measure facilitates the design of our computationally efficient algorithm (Algorithm~\ref{alg2}),  while the KL divergence and $\ell_1$-distance are unable to achieve this. 
However, it simultaneously introduces new challenges for theoretical analysis. We overcome this difficulty by investigating the unique properties of the new distance measure, as elaborated in Appendices~\ref{appendix_A} and~\ref{lg}.
}

\textbf{Comparisons with related works:} 
First, we note that most existing works on RLHF assume deterministic rewards, whereas our work considers a more general framework where both transitions and rewards are stochastic. Among existing RLHF algorithms, the most relevant to ours is the TS-based algorithm by~\cite{wu2023making}. Although in the general setting their algorithm's regret bound is not directly comparable to ours (as theirs depends on the \emph{eluder dimension}, while ours depends on the covering number), we note that in the tabular setting, our bound is superior if we coarsely substitute the dimension $d$ with $SA$ in their linear setting. When comparing with prior works on standard RL, we note that our regret bound\footnote{We say $f(n) = \tilde{O}(g(n))$ if $f(n) = O(g(n)\cdot \text{polylog}(n))$.} $\tilde{O}(H^2\sqrt{SAT})$ is superior to the regret bound  $\tilde{O}(H^2\sqrt{S^2A^2T})$ of the surrogate-IDS algorithm by \cite{hao2022regret}, even though we consider a more challenging RLHF setting where we rely only on human feedback to learn the reward model. \textcolor{blue}{Moreover, compared to a prior work on TS for standard RL~\cite{moradipari2023improved}, our analysis method removes a technical assumption that almost all optimal policies visit almost all state action
pairs.}  

\section{Related Works}

\textbf{Reinforcement Learning from Human Feedback (RLHF)}: RLHF has emerged as a critical approach in aligning AI systems with human values, especially in complex tasks where human feedback plays a crucial role \cite{achiam2023gpt,touvron2023llama}. The RLHF framework typically involves a three-stage process: supervised fine-tuning (SFT), reward modeling (RM), and reinforcement learning (RL) using algorithms like Proximal Policy Optimization (PPO)\cite{ouyang2022training,ziegler2019fine}.
Direct Preference Optimization (DPO)~\cite{rafailov2024direct}  is another approach that directly uses generative models as reward models and trains them using preference data. 

The practical success of RLHF has also sparked a variety of theoretical studies. According to the type of preference feedback, these works can be roughly divided into two categories: \emph{action preference}~\cite{furnkranz2012preference,saha2021optimal,ji2024reinforcement,sekhari2024contextual,li2024feel,bai2025online} and \emph{trajectory preference}~\cite{busa2014preference,xu2020preference,pacchiano2021dueling,chen2022human,taranovic2022adversarial,wu2023making}. The literature  on action preferences is generally referred to as the \emph{contextual dueling bandits}.
In this paper, we focus on the trajectory preference. Most of the existing work in this area follow the OFU principle 
with the exception of~\cite{wu2023making,li2024feel} and~\cite{li2024feel}, who investigate a well-known Bayesian method—TS. Note that \cite{wu2023making} uses trajectory preferences and can be applied to the general function approximation framework while \cite{li2024feel} focuses on contextual dueling bandits. We also use the posterior sampling method, but unlike TS, our method follows the princple of  information-directed sampling.

\textbf{Information-Directed Sampling (IDS):} IDS is a design principle for sequential decision-making problems, which balances
exploration and exploitation by evaluating the information gain from each action or trajectory. Ref.~\cite{russo2014learning} first introduces the IDS principle in the bandit setting. They decompose the Bayesian regret into a information ratio term and a cumulative information gain term, and bound the regret by tools from information theory. Based on their work, many studies use this method to analyze the regret of the TS algorithm in bandit settings \cite{russo2016information,dong2018information,bubeck2020first,liu2018information,kirschner2021asymptotically,hao2021information,hao2022contextual}. 

Recently,  \cite{hao2022regret,moradipari2023improved} study the Bayesian regret of IDS and TS without any prior assumptions for MDP settings. Ref.~\cite{moradipari2023improved} focuses on analyzing TS in general settings while~\cite{hao2022regret} proposes a regularized-IDS algorithm for tabular and linear settings. Ref.~\cite{zhang2024provably} uses the principle of IDS to design a   set of algorithms for multi-agent reinforcement learning.
They both use the surrogate environment as the learning target to get a sharper bound. However,  implementing the surrogate version of the algorithm is a  challenge. In this paper, we introduce IDS into RLHF for general MDP settings. We propose an easy-to-implement surrogate algorithm and prove that the regret upper bound has the same order as the original version.

\section{Preliminaries} 
\subsection{Notations} 
For any positive integer $n$, we use $[n]$ to denote the set $\{1,2,\ldots, n\}$. For a measurable space $\mathcal{X}$ and a probability measure $\mu$ on it, we let $\Delta(\mathcal{X}, \mu)$ denote the set of all possible probability distributions over $\mathcal{X}$ that are absolutely continuous with respect to $\mu$. When $\mu$ is clear from the context, we use $\Delta(\mathcal{X})$ for brevity. For two probability densities $p,q$ on $\mathcal{X}$, we denote their Kullback-Leibler (KL) divergence $\kl$ as
\begin{equation*}
    \kl(p\Vert q)\triangleq \int_{\mathcal{X}}p(x)\cdot\log\left(\frac{p(x)}{q(x)}\right)\mathrm{d}x.
\end{equation*}
For two random variables $X$ and $Y$, their mutual information $\I(X;Y)$ is defined as
\begin{equation*}
    \I(X;Y)\triangleq \kl(\mathbb{P}((X,Y) \in \cdot \ ) \Vert \mathbb{P}(X \in \cdot \ ) \times \mathbb{P}(Y \in \cdot \ )).
\end{equation*}
The conditional mutual information of $X$ and $Y$, given another random variable $Z$, is defined as
\begin{equation*}
    \begin{aligned}
    \I(X;Y|Z) \triangleq \mathbb{E}_Z[ \kl(\mathbb{P}((X,Y) \in \cdot \  |Z) 
     \Vert \mathbb{P}(X \in \cdot \ |Z) \times \mathbb{P}(Y \in \cdot \  |Z))].
    \end{aligned}
\end{equation*}


\subsection{Finite-horizon MDPs}

The environment is denoted as $ \mathcal{E}=(\mathcal{S},\mathcal{A},H,\{P_h\}_{h=1}^H,\{R_h\}_{h=1}^H)$, where $\mathcal{S}$ and $\mathcal{A}$ are the measurable state and action spaces respectively, and $H$ is the episode length. For each step $h\in[H]$, $P_h:\mathcal{S} \times \mathcal{A}\to \Delta\left({\mathcal{S}}, \mu_{\mathcal{S}}\right)$ is the transition probability kernel, where $\mu_{\mathcal{S}}$ is the base probability measure on $\mathcal{S}$; $R_h:\mathcal{S}\times \mathcal{A} \to \Delta\left([0,1], \mathrm{Lebesgue}\right)$ is the reward function. Since we mostly deal with the mean value of the reward, we define $r_h(s,a)\triangleq \mathbb{E}_x\left[R_h(x|s,a)\right]=\int_0^1xR_h(x|s,a)\mathrm{d}x$. We assume that $\mathcal{S}, \mathcal{A}$ are known while the transition kernels $\{P_h\}_{h=1}^H$ and rewards $\{R_h\}_{h=1}^H$ are unknown and random.

We consider a Bayesian framework, where we treat the environment $\mathcal{E}$ as a random variable and have a prior belief on $\mathcal{E}$. For each step $h\in[H]$,
let $\Theta_h^{P}$ and $\Theta_h^R$ be the function spaces of $P_h$ and $R_h$ respectively, and let $\Theta_h\triangleq\Theta_h^P\times \Theta_h^R$. The spaces $\Theta_h^{P}$ and $\Theta_h^R$ are assumed to be equipped with prior probability measures, denoted as $\rho_h^P$ and $\rho_h^R$ respectively. Define the full function spaces
$
    \Theta^P\triangleq\prod_{h=1}^H \Theta_h^P, \ \Theta^R\triangleq\prod_{h=1}^H \Theta_h^R,\ \Theta\triangleq\prod_{h=1}^H \Theta_h
$,
which parameterize the set of all environments and also induce the product prior probability measure $\rho^P\triangleq\prod_{h=1}^H \rho_h^P$ for $\Theta^P$, $\rho^R\triangleq\prod_{h=1}^H \rho_h^R$ for $\Theta^R$, and $\rho \triangleq \rho^P\otimes\rho^R$ being the prior of environments. Notice that this setting ensures the independence of the priors over different layers.
Since the notion of the convex combination of environments will be used in our analysis, without loss of generality, we assume $\Theta$ is convex.  

\subsection{Interaction protocol}
The process of an agent interacting with a finite-horizon MDP is as follows. 
The agent starts at an initial state $s_1^t$, which is assumed to be fixed for all episodes $t\in[T]$. In each episode $t\in[T]$, the agent selects two policies $(\pi_0^t,\pi_1^t)$ from the set of all possible policies $\Pi$, where a policy $\pi$ is denoted by stochastic maps $(\pi_1,\ldots,\pi_H)$ with each $\pi_h:\mathcal{S}\to\Delta(\mathcal{A})$. Note that by this definition we assume the policy to be stationary, i.e., depends  only on the current state and layer. At layer $h$ in episode~$t$, for $i=0,1$, the agent observes state pair $(s_h^{t,0},s_h^{t,1})$, separately executes $\pi_i^t$ on $s_h^{t,i}$ to obtain action pair $(a_h^{t,0},a_h^{t,1})$ with probability $\pi_i^t(a_h^{t,i}|s_h^{t,i})$, takes the actions and changes to the next random state $s_{h+1}^{t,i}$ with probability $P_h(s_{h+1}^{t,i}|s_{h}^{t,i},a_{h}^{t,i})$. At state $s_{H+1}$, the agent stops acting and obtains two trajectories $\tau_0^t$ and $\tau_1^t$, where $$\tau_i^t \triangleq (s_1^{t,i},a_1^{t,i},...,s_H^{t,i},a_H^{t,i}).$$ 
In the  RLHF setting, the agent cannot directly receive a numerical reward, but only receives a \emph{preference signal} $o_t$ over trajectory pair $(\tau_0^t,\tau_1^t)$, where $o_t$ is a Bernoulli random variable with $\mathbb{P}(o_t=1|\tau_0^t,\tau_1^t)\triangleq\mathbb{P}(\tau_1^t \text{ is preferred to } \tau_0^t)$. We assume the preference follows the \emph{Bradley-Terry (BT) model} \cite{bradley1952rank}, which has been widely used in existing works on RLHF. The BT model assumes the probability of humans preferring one choice to the other is proportional to the exponential of the value of cumulative reward:
\begin{equation*}
    \mathbb{P}(o_t=1|\tau_0^t,\tau_1^t)
    =\sigma(r(\tau_1^t)-r(\tau_0^t)),
\end{equation*}
where $r(\tau^t)\triangleq\sum_{h=1}^{H}r_h(s_h^t,a_h^t)$ for $\tau^t=(s_1^t,a_1^t,\ldots,s_H^t,a_H^t)$, and $\sigma(x) \triangleq 1/(1+e^{-x})$ is the sigmoid function.

Let $\mathcal{H}_t \triangleq (\tau_0^t, \tau_1^t, o_t)$ be the history of episode $t$ that includes both trajectories and preference feedback, and let $\md_t \triangleq (\mathcal{H}_1,...,\mathcal{H}_{t-1})$ be the entire history up to episode $t$. The history of episode $t$ up to layer $h$ is denoted as
\[
\mathcal{H}_{t,h}\triangleq  (s_1^{t,i}, a_1^{t,i}, \ldots, s_h^{t,i}, a_h^{t,i})_{i \in \{0,1\}}.\]
In the Bayesian setting,  we often need to take conditional expectations with regard to $\md_t$. For brevity, we follow the standard notation in~\cite{hao2022regret}, letting $\dP_t(\cdot)\triangleq\dP(\cdot|\md_t)$, and $\de_t[\cdot]\triangleq\de[\cdot|\md_t]$. \textcolor{blue}{ The mean environment $\bar{\mathcal{E}}_t$ is defined to satisfy $P_h^{\bar{\mathcal{E}}_t}(\cdot|s,a)=\mathbb{E}_t[P_h^{\mathcal{E}}(\cdot |s,a)]$ and $R_h^{\bar{\mathcal{E}}_t}(\cdot|s,a)=\mathbb{E}_t[R_h^{\mathcal{E}}(\cdot |s,a)]$ for all $s\in\mathcal{S}$ and $a\in\mathcal{A}$.} Finally, let $\mathcal{R}_{t,h}\triangleq(r_1^{t,i},...,r_h^{t,i})_{i\in\{0,1\}}$ denote the corresponding potential unobserved rewards, where each $r_h^{t,i}$ is a random variable satisfying $r_h^{t,i}\sim R_h(\cdot|s_h^{t,i}, a_h^{t,i})$.

\subsection{Value function and Bayesian regret}
Define the value function $V_{h,\pi}^{\mathcal{E}}:\mathcal{S}\to[0,H]$ as the expected cumulative rewards received under policy $\pi$ interacting with $\mathcal{E}$ at layer $h$:
\begin{equation*}
    V_{h,\pi}^{\mathcal{E}}(s)\triangleq\mathbb{E}_\pi^{\mathcal{E}}\bigg[  \sum_{h'=h}^{H}r_{h'}(s_{h'},a_{h'})|s_h=s  \bigg],
\end{equation*}
where $\mathbb{E}_\pi^{\mathcal{E}}$ denotes the expectation over the trajectory generated under policy $\pi$ and environment $\mathcal{E}$. We set $V_{H+1,\pi}^{\mathcal{E}}(\cdot)\triangleq 0$. For environment $\E$, let $\pi_\E^*$ be the optimal policy that satisfies $\pi_\E^* = \max_{\pi} V_{h,\pi}^{\mathcal{E}}(s)$ for all $s \in \mathcal{S}$ and $h \in [H]$. Note that under Bayesian settings, $\pi_\E^{*}$ is a function of $\mathcal{E}$, which is also a random variable. 

Finally, for a sequence of policies $\pi = (\pi_t)_{t \in [T]}$ over $T$ episodes, we define the \emph{regret} of $\pi$ in environment $\E$ as
\begin{equation}
R_T(\mathcal{E},\pi)\triangleq\sum_{t=1}^{T} V_{1,\pi_{\mathcal{E}}^{*}}^{\mathcal{E}}(s_1^t)-V_{1,\pi^{t}}^{\mathcal{E}}(s_1^t).
\end{equation}
Since this work focuses on the Bayesian setting, we also define the \emph{Bayesian regret}  as 
\begin{equation}
\label{regret_def}
    BR_T(\pi)\triangleq\mathbb{E}_{\E\sim\rho}[R_T(\mathcal{E},\pi)].
\end{equation}
The task of finding a policy $\pi$ with minimal Bayesian regret, in the context of a finite-horizon MDP, is called a Bayesian RLHF problem.

\section{The basic IDS Algorithm}
This section introduces a basic IDS algorithm for RLHF settings. In Sec. \ref{sec:basic_alg}, we present the generic form of our algorithm with an abstract learning target. Sec. \ref{surr} suggests constructing a discrete surrogate environment as the learning target and then describes an IDS algorithm with the surrogate environment (Algorithm~\ref{alg}). Sec. \ref{regret_analysis} provides the Bayesian regret bound for Algorithm~\ref{alg}, while Sec. \ref{application} specializes this result to tabular RLHF, linear RLHF, and contextual dueling bandits.

\subsection{Algorithm description: a generic form}
\label{sec:basic_alg}

At the beginning of episode $t$, based on the prior distribution $\rho$ and history data $\mathcal{D}_t$, the agent first computes the posterior distribution of the environment $\E\sim\dP(\cdot|\md_t)$, or equivalently, the transition $P$ and reward $R$. Then, the agent chooses a stochastic policy $\pi_{\text{IDS}}^t$ by maximizing a weighted
sum of an expected value term and a mutual information term:
\begin{equation}\label{eq4_1}
    \ids^t=\arg\max_{\pi\in\Pi}\mathbb{E}_t[V_{1,\pi}^{\mathcal{E}}(s_1)]+ \frac{\lambda}{2}\cdot\mathbb{I}_t^\pi\left(\chi;(\mathcal{H}_t,\mathcal{R}_{t,H})\right),
\end{equation}
where $\lambda>0$ is a tunable parameter. Here, $\chi$ is called the \emph{learning target}, which is a random variable and is usually selected as the whole environment $\E$ when the state space is not too large. However, in Sec.~\ref{surr} where we consider large state space cases, we will construct a surrogate environment as the learning target to achieve tighter regret bounds. 

The subscript $t$ in $\mathbb{I}_t^\pi\left(\chi;(\mathcal{H}_t,\mathcal{R}_{t,H})\right)$ in Eqn.~\eqref{eq4_1} means that the distributions of $\chi$ and $(\mathcal{H}_t,\mathcal{R}_{t,H})$ are both conditioned on $\mathcal{D}_t$, and the superscript $\pi$ means
that $(\mathcal{H}_t,\mathcal{R}_{t,H})$ are obtained by executing the policy $\pi$. Intuitively, a larger value of $\mathbb{I}_t^\pi\left(\chi;(\mathcal{H}_t,\mathcal{R}_{t,H})\right)$ indicates that the data obtained at episode $t$ contains more information about the learning target $\chi$. Accordingly, the introduction of mutual information in the policy selection procedure further encourages exploration about the unknown environment, while the expected value term $\mathbb{E}_t[V_{1,\pi}^{\mathcal{E}}(s_1)]$ promotes exploitation. In this way, our algorithm manages to tackle the exploration-exploitation tradeoff to a certain extent.

\subsection{Constructing surrogate environments as learning targets}\label{surr}
In real-world scenarios, the environment is often too complex to be fully included as the agent's learning target $\chi$, thus it is better for the agent to focus only on the significant parts of the environment. In this subsection, we construct a discrete surrogate environment and propose an IDS algorithm with this surrogate environment as the learning target. 

\vspace{5pt}
\subsubsection{A new distance measure}
The discrete environment is constructed using a  covering argument with suitable distance measures. 
Unlike previous works on standard RL settings~\cite{hao2022regret,moradipari2023improved} that use either $\ell_1$-distance or KL-divergence, we propose a new distance measure between two probability measures, called the \emph{$\ell_g$-distance},  which is better suited to our RLHF framework:
\begin{equation}
\label{new_dis}
  \begin{aligned}
    \ell_g(P,Q) 
     \triangleq \sup_{o\in\mathcal{O}} \Vert \log P(\cdot|o) - \log Q(\cdot|o) \Vert_1 
     = \sup_{o\in\mathcal{O}} \int_{x\in\mathcal{X}} \left|\log\frac{P(x|o)}{Q(x|o)}\right| \mathrm{d}\mu_{\mathcal{X}},
  \end{aligned}
\end{equation}
where $\mathcal{O}=\mathcal{S} \times \mathcal{A}$.

\begin{remark}
    For any two vector-valued maps $P,Q$, we define 
    \begin{equation}
        \ell_g(P,Q)\triangleq \sup_{o\in\mathcal{O}}\int_{x\in\mathcal{X}} \sum_{i} \left| \log\frac{P_i(x|o)}{Q_i(x|o)} \right| \mathrm{d}\mu_{\mathcal{X} }
    \end{equation}
    where $P_i$ and $Q_i$ are the $i$-th component of $P$ and $Q$ respectively. This generalization of one-dimension case is useful for the analysis of linear RLHF problems (Theorem \ref{corollary2}).
\end{remark}
\begin{remark}
\textcolor{blue}{To guarantee $\ell_g$ is well-defined, we let $\log \frac{0}{0} \triangleq 0$.} 
    Similar to the KL divergence, we allow for taking infinite values of $\ell_g$, e.g., if there exists a subset $\mathcal{X}'\subset\mathcal{X}$ with positive measure such that $Q(x|o)=0$ but $P(x|o)$ is nonzero on $\mathcal{X}'$, by definition we have $\ell_g(P,Q) =\infty$.
\end{remark}
Although similar to the KL divergence, one of the fundamental properties of $\ell_g$ is that $\ell_g$ is a distance metric, which is more convenient for analysis. 
\begin{lemma}\label{metric}
    $\ell_g$ is a distance metric.
\end{lemma}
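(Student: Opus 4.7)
The plan is to verify each of the four defining properties of a metric directly from the definition in Eqn.~\eqref{new_dis}, exploiting the fact that for each fixed context $o\in\mathcal{O}$ the quantity $\int_{\mathcal{X}}|\log P(x|o)-\log Q(x|o)|\,\mathrm{d}\mu_{\mathcal{X}}$ is just the $\ell_1$-norm of the difference of the two measurable functions $\log P(\cdot|o)$ and $\log Q(\cdot|o)$, and the $\ell_1$-norm is already known to be a (possibly extended-valued) metric. The overall structure will be: establish the three easy axioms (non-negativity, symmetry, identity of indiscernibles) first, then handle the triangle inequality via a pointwise-in-$o$ argument followed by a supremum.

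First, I would note that the integrand is a.e.\ non-negative, so $\ell_g(P,Q)\ge 0$, and that swapping $P$ and $Q$ only changes the sign inside the logarithm, i.e.\ $|\log(P/Q)|=|\log(Q/P)|$, yielding symmetry $\ell_g(P,Q)=\ell_g(Q,P)$. For the identity axiom, if $P=Q$ (as elements of $\Theta$, hence as conditional densities for every $o$) then the integrand vanishes and $\ell_g(P,Q)=0$; conversely, $\ell_g(P,Q)=0$ forces $\int_{\mathcal{X}}|\log P(x|o)-\log Q(x|o)|\,\mathrm{d}\mu_{\mathcal{X}}=0$ for every $o\in\mathcal{O}$, so $\log P(x|o)=\log Q(x|o)$ for $\mu_{\mathcal{X}}$-almost every $x$ and all $o$, giving $P=Q$. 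Here I would briefly invoke Assumption~\ref{assumption1}, which guarantees that on their supports the densities are bounded away from $0$ by $\beta$, so taking logarithms causes no ambiguity once one adopts the natural convention $|\log 0-\log 0|=0$ on common zero sets (consistent with the extended-valued convention already flagged in the remark preceding the lemma).

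For the triangle inequality, let $P,Q,R\in\Theta$ and fix an arbitrary $o\in\mathcal{O}$. By the scalar triangle inequality applied pointwise in $x$,
\begin{equation*}
|\log P(x|o)-\log R(x|o)|\;\le\;|\log P(x|o)-\log Q(x|o)|+|\log Q(x|o)-\log R(x|o)|.
\end{equation*}
Integrating against $\mu_{\mathcal{X}}$ and then bounding each of the two resulting integrals by its supremum over $o$ gives
\begin{equation*}
\int_{\mathcal{X}}|\log P(x|o)-\log R(x|o)|\,\mathrm{d}\mu_{\mathcal{X}}\;\le\;\ell_g(P,Q)+\ell_g(Q,R).
\end{equation*}
Since the right-hand side no longer depends on $o$, taking the supremum on the left over $o\in\mathcal{O}$ yields $\ell_g(P,R)\le \ell_g(P,Q)+\ell_g(Q,R)$, as required.

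The only real subtlety—and the piece I expect to be the main obstacle when writing this carefully—is the handling of points where $P$, $Q$, or $R$ vanish, since the logarithm is then $-\infty$ and the argument risks producing expressions of the form $\infty-\infty$. My plan is to dispose of this by (i) first proving the metric axioms on the subset of $\Theta$ where all supports coincide, using Assumption~\ref{assumption1} to ensure the relevant logs are finite and bounded by $|\log\beta|$ and $|\log B|$, so that all integrals are finite and Fubini-type manipulations are trivially justified; and (ii) then extending to the extended-valued setting by observing that if any of the three pairwise $\ell_g$-values is infinite (which, as noted in the remark, happens precisely when the supports disagree on a set of positive $\mu_{\mathcal{X}}$-measure), then the triangle inequality is either vacuous or reduces to the previous case. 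This delivers $\ell_g$ as a bona fide metric (extended-valued, in the sense standard for the topology $\tau_{\ell_g}$ of Assumption~\ref{assumption2}), completing the proof.
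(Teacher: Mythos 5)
Your proof is correct and follows essentially the same route as the paper's: pointwise triangle inequality on $|\log P - \log Q|$, integrate over $x$, then take the supremum over $o$, with symmetry and identity of indiscernibles read off directly from the definition. The additional care you take with vanishing densities and extended values is a welcome refinement that the paper's own proof elides, but it does not change the underlying argument.
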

\begin{proof}
    By definition, it is easy to see that $\ell_g(P,Q)=\ell_g(Q,P)$ and $\ell_g(P,Q)=0\Leftrightarrow P=Q$. It then suffices to show the triangle inequality. For any three probability distributions $P,Q,R$, we have
    \begin{align*}\label{triangle}
    \ell_g(P,Q)= \sup_{o} \int_{x\in\mathcal{X}} \bigg|\log \frac{P(x|o)}{Q(x|o)}\bigg| 
    &= \sup_{o} \int_{x\in\mathcal{X}} \bigg|\log \frac{P(x|o)}{R(x|o)}-\log\frac{Q(x|o)}{R(x|o)}\bigg| \\
    &\leq \sup_{o} \int_{x\in\mathcal{X}} \bigg|\log \frac{P(x|o)}{R(x|o)}\bigg| + \int_{x\in\mathcal{X}} \bigg|\log \frac{Q(x|o)}{R(x|o)}\bigg| \\
    &= \ell_g(P,R)+\ell_g(Q,R),
    \end{align*}
    which completes the proof of Lemma \ref{metric}.
\end{proof}

To guarantee the existence of a finite coverage, we need  the following assumptions:
\begin{assumption}
\label{assumption2}
$(\Theta,\tau_{\ell_g})$ is a compact topological space, where $\tau_{\ell_g}$ is the topology generated by the metric~$\ell_g$.
\end{assumption}

\begin{assumption}
\label{assumption1}
    For any $P \in \Theta$,  there exist $\beta, B>0$ such that 
    \[\beta \leq \inf_{o,x} \{ P(x|o):P(x|o)\neq 0\} \leq \sup_{o,x} \{ P(x|o)\} \leq B. \]
\end{assumption}

\textcolor{blue}{Note that, our assumption permits the probability density to be zero, but restricts the non-zero support to have a lower bound of $\beta$. This lower bound $\beta$ is only required for Algorithm~\ref{alg2} in Section~\ref{sec:app}; Algorithm~\ref{alg} does not rely on this assumption. In the regret upper bound of Algorithm ~\ref{alg2}, the term involving $\beta$ is $\log \frac{1}{\beta}$. Consequently, $\beta$ can be chosen to be extremely small. For instance, if $\beta = e^{-100}$ (a value far beyond the floating-point precision of modern computers), then $\log \frac{1}{\beta} = 100$. Even in this case, the upper bound is only affected by a constant factor relative to the original bound.  }

Given the new distance $\ell_g$, we introduce the definition of  $\epsilon$-covering number.
\begin{definition}[$\epsilon$-covering number]
For a set $\mathcal{G}$, the $\epsilon$-covering number of $\mathcal{G}$ with respect to $\ell_g$ is  the size $K(\mathcal{G},\epsilon)$ of the smallest set $\{G_1,...,G_{K(\mathcal{G},\epsilon)} \} \subset \mathcal{G}$ such that 
    \begin{equation}
        \forall P \in \mathcal{G}, \exists P' \in \{G_1,...,G_{K(\mathcal{G},\epsilon)} \} : \ell_g(P,P') \leq \epsilon.
    \end{equation}
\end{definition}

\vspace{10pt}
\subsubsection{Partition of the environment} First, we  introduce the concept of $\epsilon$-\emph{value partition},  which must exist based on Assumption~\ref{assumption2}.
\begin{definition}[$\epsilon$-value partition]\label{epsilon_value}
Given any $\epsilon>0$, we say a partition $\{\Theta_k^{\epsilon}\}_{k=1}^K$ over $\Theta$ is an $\epsilon$-value partition for a RLHF problem if for any $k\in[K]$ and $\E,\E'\in\Theta_k^{\epsilon}$,
\begin{equation}
    V_{1,\pi_{\mathcal{E}}^{*}}^{\mathcal{E}}(s_1)-V_{1,\pi_{\mathcal{E}}^{*}}^{\mathcal{E}'}(s_1) \leq \epsilon.
\end{equation}
\end{definition}

We now provide a concrete construction of the $\epsilon$-value partition as follows.
For any $\E_0 \in \Theta$, we define the 
$\epsilon$-ball  centered at $\E_0$ as 
\begin{equation}
\label{ball}
B(\E_0,\epsilon)\triangleq\{\E\in\Theta:\ell_g(\E,\E_0) \leq \epsilon\}.
\end{equation}

Let $\delta_P \triangleq \epsilon/6BH^2$ and $\delta_R \triangleq \epsilon/6BH$. Let $K(\Theta_h^P,\delta_P)$ and $ K(\Theta_h^R,\delta_R) $ be the $\delta_P$-covering and $\delta_R$-covering numbers of $\Theta_h^{P}$ and   $\Theta_h^{R}$ respectively. We denote $\{B_{h}^P(i,\delta_P)\}_{i=1}^{K(\Theta_h^P,\delta_P)}$ and $\{B_{h}^R(j,\delta_R)\}_{j=1}^{K(\Theta_h^R,\delta_R)}$ as the corresponding  $\epsilon$-balls that cover  $\Theta_h^{P}$ and $\Theta_h^{R}$.
For each $i_h \in [K(\Theta_h^P,\delta_P) ]$ and $j_h \in [K(\Theta_h^R,\delta_R)]$, we define 
\begin{equation}
\label{cover}
    \Theta^\epsilon_{h,i_h,j_h}\triangleq\left\{\E\in\Theta \mid P_h^{\mathcal{E}} \in B_{h}^P(i_h,\delta_P),R_h^{\mathcal{E}} \in B_{h}^R(j_h,\delta_R)\right\}.
\end{equation}

Setting $K(\epsilon) \triangleq \prod_{h=1}^H K(\Theta_h^P,\delta_P) \times K(\Theta_h^R,\delta_R)$, we can then find a bijective mapping from $(h,i_h,j_h)$ to $[K(\epsilon)]$, and we obtain an $\epsilon$-value  partition that satisfies $\cup_{k=1}^{K(\epsilon)} \Theta_k^{\epsilon} = \Theta$.\footnote{If an environment $\mathcal{E} \in \Theta$ belongs to more than one partition, we will ensure it only appears in a single partition by truncating the other partitions.} Now, we prove that for any $\E,\E'$ belonging to the same partition,  $$V_{1,\pi_{\mathcal{E}}^{*}}^{\mathcal{E}}(s_1)-V_{1,\pi_{\mathcal{E}}^{*}}^{\mathcal{E}'}(s_1) \leq \epsilon.$$

By Lemma \ref{lemma}, we have
    \begin{align}\label{A5}
        &V_{1,\pi_{\mathcal{E}}^{*}}^{\E}(s_1)-V_{1,\pi_{\mathcal{E}}^{*}}^{\E'}(s_1) \nonumber\\
        &=\sum_{h=1}^{H}\mathbb{E}_{\pi_{\E}^{*}}^{\E'}\bigg[\mathbb{E}_{s'\sim P_h^{\E}(\cdot|s_h,a_h)}\left[ V_{h+1,\pi_{\mathcal{E}}^{*}}^{\E}(s')\right] -\mathbb{E}_{s'\sim P_h^{\E'}(\cdot|s_h,a_h)}\left[ V_{h+1,\pi_{\mathcal{E}}^{*}}^{\E}(s')\right]\bigg]+\sum_{h=1}^H\mathbb{E}_{\pi_{\E}^{*}}^{\E'}\left[ R_h^\E(s_h,a_h)-R_h^{\E'}(s_h,a_h)\right]\nonumber\\
        &\leq\sum_{h=1}^H\mathbb{E}_{\pi_{\E}^{*}}^{\E'}\bigg[\int_{\mathcal{S}}\left|P_h^\E(s'|s_h,a_h)-P_h^{\E'}(s'|s_h,a_h)\right|\cdot V_{h+1,\pi_{\E}^{*}}^{\E}(s')\mathrm{d}\mu_{\mathcal{S}}+\int_{[0,1]}\left|x\left(R_h^\E(x|s_h,a_h)-R_h^{\E'}(x|s_h,a_h)\right)\right|\mathrm{d}x\bigg]\nonumber\\
        &\leq \sum_{h=1}^H\mathbb{E}_{\pi_{\E}^{*}}^{\E'}\bigg[HB\cdot\int_{\mathcal{S}}\left|\log\frac{P_h^\E(s'|s_h,a_h)}{P_h^{\E'}(s'|s_h,a_h)}\right|\mathrm{d}\mu_{\mathcal{S}}+B\cdot\int_{[0,1]}\left|\log\frac{R_h^\E(x|s_h,a_h)}{R_h^{\E'}(x|s_h,a_h)}\right|\mathrm{d}x\bigg]\nonumber\\
        &\leq\sum_{h=1}^H\mathbb{E}_{\pi_{\E}^{*}}^{\E'}\left[HB\cdot2\delta_P+B\cdot2\delta_R\right]=\frac{2\epsilon}{3}\leq\epsilon.
    \end{align}
    where the second inequality is due to the fact that $V_{h+1,\pi_{\mathcal{E}}^{*}}^{\E}(s')\leq H$, and $|a-b| \leq B\cdot|\log\frac{a}{b}|$ for any $a,b\in(0,B)$. The last inequality is due to the definition of $\ell_g$: since $\E,\E'$ lie in the same $\Theta_k^\epsilon$, we have $\ell_g(P_h^{\E},P_h^{\E'})\leq 2\delta_P$ and $\ell_g(R_h^{\E},R_h^{\E'})\leq 2\delta_R$.  This shows that $\{\Theta_k^\epsilon\}_{k=1}^K$ gives an $\epsilon$-value partition.

\begin{remark}
  \textcolor{blue}{
An important distinction between the  $\ell_g$-distance and the $\ell_1$-distance is that the $\epsilon$-ball under the $\ell_g$-distance is not  convex. 
In other words, there exist instances of probability measures for which the $\epsilon$-ball defined in Eq.~\eqref{ball} is non-convex. We provide a counterexample in the Appendix~\ref{lg} to demonstrate this non-convexity. While the lack of convexity does not affect the partitioning of the environment, it does influence the construction of the surrogate environment in the subsequent analysis.
}  
\end{remark}

\subsubsection{Construct the surrogate environment}
Based on the above $\epsilon$-value partition, we
explicitly construct the \emph{surrogate environment} $\tilde{\mathcal{E}}_t^{*} $ 
 for episode $t$ as:
\begin{equation}
\label{lemma_tmp1}
    \tilde{\mathcal{E}}_t^{*} = \tilde{\mathcal{E}}_{k,t}^{*}  \text{ iff } \mathcal{E} \in \Theta_k^{\epsilon},
\end{equation}
where $\tilde{\mathcal{E}}_{k,t}^{*} \triangleq \mathbb{E}_t \left[ \mathcal{E}|\mathcal{E} \in \Theta_k^{\epsilon}\right]$.  Since $\cup_{k=1}^{K(\epsilon)} \Theta_k^{\epsilon} = \Theta$, for any $\mathcal{E}$, there exists $k \in [K(\epsilon)]$ such that $\mathcal{E} \in \Theta_k^{\epsilon}$. Hence, the surrogate environment is well defined.
For this surrogate environment, we have the following result.

\begin{lemma}
\label{partition}  Fix $t\in[T]$ and environment $\E\in\Theta$. 
Given the $\epsilon$-value partition $\{\Theta_k^{\epsilon}\}_{k=1}^{K(\epsilon)}$ (Eqn.~\eqref{cover}), the  surrogate environment $\tilde{\mathcal{E}}_t^{*}$ constructed by Eqn.~\eqref{lemma_tmp1} satisfies the following:
    
\begin{enumerate}
    \item For any  $(s,a,h)\in\mathcal{S}\times\mathcal{A} \times [H]$ and any instance $(\tilde{\mathcal{E}}_t^{*},\mathcal{E} ) \sim \mathbb{P}_t(\tilde{\mathcal{E}}_t^{*},\mathcal{E})$ , it holds that
    \begin{equation}\label{4_3} 
    \ell_g(P_h^{\tilde{\mathcal{E}}_t^{*}}, P_h^{\mathcal{E}} )\leq\frac{\epsilon}{2BH^2}, \quad  \ell_g(R_h^{\tilde{\mathcal{E}}_t^{*}}, R_h^{\mathcal{E}}) \leq \frac{\epsilon}{2BH}
    \end{equation}
\item  The following inequality holds:
  \begin{equation}
  \mathbb{E}_t\big[V_{1,\pi_{\mathcal{E}}^{*}}^{\mathcal{E}}(s_1^t)-V_{1,\pi_{\text{TS}}^{t}}^{\mathcal{E}}(s_1^t)\big] - \mathbb{E}_t \big[V_{1,\pi_{\mathcal{E}}^{*}}^{\tilde{\mathcal{E}}_t^{*}}(s_1^t)-V_{1,\pi_{\text{TS}}^{t}}^{\tilde{\mathcal{E}}_t^{*}}(s_1^t) \big] \leq \epsilon,
  \end{equation}
  where $\ts^t\triangleq\arg\max_{\pi\in\Pi}V_{1,\pi}^\E(s_1^t)$ is the TS policy that depends on the random environment $\mathcal{E}$.
\end{enumerate}  
\end{lemma}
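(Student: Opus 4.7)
The plan is to handle the two parts in sequence: Part~1 supplies pointwise $\ell_g$-distance bounds between the surrogate and the true environment, and Part~2 converts these bounds into a value-difference estimate via the simulation lemma.

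For Part~1, I would condition on $\{\mathcal{E}\in\Theta_k^\epsilon\}$ and let $G_h^P,G_h^R$ denote the centers of the covering $\ell_g$-balls defining $\Theta_k^\epsilon$ (see Eqn.~\eqref{cover}). Since $\tilde{\mathcal{E}}_{k,t}^*=\mathbb{E}_t[\mathcal{E}\mid\mathcal{E}\in\Theta_k^\epsilon]$ is a linear posterior-weighted average, both $P_h^{\tilde{\mathcal{E}}_{k,t}^*}$ and $R_h^{\tilde{\mathcal{E}}_{k,t}^*}$ are the corresponding mixtures of the per-environment quantities, e.g., $P_h^{\tilde{\mathcal{E}}_{k,t}^*}(\cdot\mid s,a)=\mathbb{E}_t[P_h^\mathcal{E}(\cdot\mid s,a)\mid\mathcal{E}\in\Theta_k^\epsilon]$. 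The triangle inequality for $\ell_g$ (Lemma~\ref{metric}) applied through $G_h^P$ then reduces the claim to showing that this mixture remains within $2\delta_P$ of $G_h^P$ in $\ell_g$; writing $P_h^\mathcal{E}(x\mid s,a)=G_h^P(x\mid s,a)e^{g_\mathcal{E}(x,s,a)}$ with $\int|g_\mathcal{E}|\,\mathrm{d}\mu_\mathcal{S}\le\delta_P$, it suffices to prove $\int|\log\mathbb{E}_t[e^{g_\mathcal{E}}]|\,\mathrm{d}\mu_\mathcal{S}\le 2\delta_P$. I would establish this by combining the one-sided Jensen bound $\log\mathbb{E}_t[e^{g_\mathcal{E}}]\ge\mathbb{E}_t[g_\mathcal{E}]$ with the normalization $\int G_h^P\,\mathbb{E}_t[e^{g_\mathcal{E}}]\,\mathrm{d}\mu_\mathcal{S}=1$ (matching $\int G_h^P\,\mathrm{d}\mu_\mathcal{S}=1$) and the two-sided density bound $\beta\le P\le B$ of Assumption~\ref{assumption1}, which together pin down $\log\mathbb{E}_t[e^{g_\mathcal{E}}]$ from above as well. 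The identical argument applied to the rewards yields $\ell_g(R_h^{\tilde{\mathcal{E}}_t^*},R_h^\mathcal{E})\le 3\delta_R=\epsilon/(2BH)$.

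For Part~2, I would split the target expression as
\begin{equation*}
\mathbb{E}_t\!\left[V_{1,\pi_\mathcal{E}^*}^{\mathcal{E}}(s_1^t)-V_{1,\pi_\mathcal{E}^*}^{\tilde{\mathcal{E}}_t^*}(s_1^t)\right] + \mathbb{E}_t\!\left[V_{1,\pi_{\mathrm{TS}}^t}^{\tilde{\mathcal{E}}_t^*}(s_1^t)-V_{1,\pi_{\mathrm{TS}}^t}^{\mathcal{E}}(s_1^t)\right]
\end{equation*}
and apply the value-simulation decomposition of Lemma~\ref{lemma} (in the form already used in Eqn.~\eqref{A5}) to each bracket, now with the second environment taken to be $\tilde{\mathcal{E}}_t^*$ instead of a second member of the partition. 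Each bracket is then controlled by $\sum_{h=1}^H\bigl(HB\cdot\ell_g(P_h^{\tilde{\mathcal{E}}_t^*},P_h^\mathcal{E})+B\cdot\ell_g(R_h^{\tilde{\mathcal{E}}_t^*},R_h^\mathcal{E})\bigr)$, which by Part~1 is at most $H\bigl(HB\cdot\epsilon/(2BH^2)+B\cdot\epsilon/(2BH)\bigr)=\epsilon$; summing the two brackets gives the stated bound up to a harmless rescaling of $\delta_P,\delta_R$ by a factor of two in the definition of the partition.

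The main obstacle is the mixture estimate inside Part~1. Unlike the $\ell_1$-distance or the KL-divergence, $\ell_g$ is defined through $\int|\log|$, which is neither convex nor concave in its arguments, so a naive Jensen inequality only controls one side of $\log\mathbb{E}_t[e^{g_\mathcal{E}}]$. Promoting this to a two-sided integral bound of the form $2\delta_P$ requires explicitly combining Assumption~\ref{assumption1} with the normalization constraint that any posterior mixture of probability densities must satisfy, and this is precisely the step that distinguishes the $\ell_g$ analysis from the $\ell_1$/KL analyses of prior surrogate-IDS works. Once this mixture estimate is secured, Part~2 is a mechanical replay of the calculation already appearing in Eqn.~\eqref{A5}.
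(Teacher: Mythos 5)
Your overall architecture parallels the paper's --- Part~1 reduces to showing that the posterior mixture stays $\ell_g$-close to the covering-ball center, and Part~2 is handled via the value decomposition of Lemma~\ref{lemma} --- but the central mixture estimate in your Part~1 is not actually delivered by the ingredients you list. Writing $P_h^{\mathcal{E}} = G_h^P e^{g_{\mathcal{E}}}$ with $\int |g_{\mathcal{E}}|\,\mathrm{d}\mu_{\mathcal{S}} \le \delta_P$ and setting $u \triangleq \mathbb{E}_t[e^{g_{\mathcal{E}}}]$, Jensen controls only the negative part, $\int (\log u)_-\,\mathrm{d}\mu_{\mathcal{S}} \le \mathbb{E}_t\int |g_{\mathcal{E}}|\,\mathrm{d}\mu_{\mathcal{S}} \le \delta_P$. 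For the positive part, the route through the normalization $\int G_h^P u\,\mathrm{d}\mu_{\mathcal{S}} = 1$ and Assumption~\ref{assumption1} gives
\begin{equation*}
\int (\log u)_+ \le \int (u-1)_+ \le \tfrac{1}{\beta}\int G_h^P (u-1)_+ = \tfrac{1}{\beta}\int G_h^P (1-u)_+ \le \tfrac{B}{\beta}\int (\log u)_- \le \tfrac{B}{\beta}\,\delta_P,
\end{equation*}
so the two sides combine to $(1+B/\beta)\delta_P$, not $2\delta_P$ (and no pointwise inequality of the form $|\log \mathbb{E}[a]|\le \mathbb{E}|\log a|$ is available to do better, as the example $a\in\{e^{10},1\}$ with equal weights shows). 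The lemma survives if $\delta_P,\delta_R$ are shrunk by the corresponding $\beta,B$-dependent factor --- this only perturbs $\log K(\epsilon)$ by constants --- but as written your claimed constant is unproven. The paper instead invokes Lemma~\ref{center}, whose elementary inequality $|\log(\lambda a+(1-\lambda)b)|\le|\log a|+|\log b|$ integrates directly to $2\delta_P$; in fairness, your instinct that the general (non-two-point) posterior mixture is the crux is sound, since Lemma~\ref{center} as stated covers only two-point convex combinations.

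For Part~2 you take a genuinely different and lossier route. The paper does not estimate the Thompson-sampling bracket at all: it proves $\mathbb{E}_t[V_{1,\pi_{\mathrm{TS}}^t}^{\mathcal{E}}(s_1^t)] = \mathbb{E}_t[V_{1,\pi_{\mathrm{TS}}^t}^{\tilde{\mathcal{E}}_t^*}(s_1^t)]$ \emph{exactly}, using the law of total expectation over the partition, the layerwise independence of the prior (so the value under the conditional-mean environment equals the conditional mean of the value), and an independent posterior copy of $\mathcal{E}$; only the $\pi_{\mathcal{E}}^*$ bracket is then bounded by $\epsilon$ via the computation of Eqn.~\eqref{A5} together with Eqn.~\eqref{4_3}. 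Your version bounds both brackets by $\epsilon$ and lands at $2\epsilon$, which is acceptable only after rescaling $\delta_P,\delta_R$ --- yet the statement you are proving fixes them through Eqn.~\eqref{cover} at $\epsilon/6BH^2$ and $\epsilon/6BH$, so you should say explicitly that you are proving a constant-adjusted variant. The compensating virtue of your route is that it does not rely on the layerwise independence structure needed for the paper's exact cancellation.
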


\textcolor{blue}{A necessary step in the  regret analysis is to bound $\ell_g(P_h^{\tilde{\mathcal{E}}_t^{*}}, P_h^{\mathcal{E}} )$ by the radius of the $\epsilon$-balls, i.e., Eq.~\eqref{4_3}. Since the $\epsilon$-ball under $\ell_1$ is convex, the posterior mean  $\tilde{\mathcal{E}}_{k,t}^{*}$  also lies in $\Theta_k^{\epsilon} $. We can  directly derive that $ \ell_1(P_h^{\tilde{\mathcal{E}}_t^{*}}, P_h^{\mathcal{E}} ) \leq 2 \epsilon. $ However, $\Theta_k^{\epsilon}$  is not  convex under the new metric $\ell_g$, hence $\tilde{\mathcal{E}}_t^{*}$ and $\mathcal{E}$ may not lie in the same partition.  Therefore, we are unable to immediately obtain Eq.~\eqref{4_3}. Although we cannot  use the convexity of  $\epsilon$-balls under $\ell_g$, a specific geometric property of $\ell_g$ (Lemma \ref{center})  significantly simplifies our proof.}

\begin{proof}
(1) 
    Based on our environment partitioning, $\Theta_k^{\epsilon}$ is a ball. 
    Let $\mathcal{C}$ be the center of $\Theta_k^{\epsilon}$, by Lemma~\ref{center}, we have $\ell_g(P_h^{\tilde{\mathcal{E}}_t^{*}},P_h^{\mathcal{C}})\leq 2\delta_P$. Then, by triangle inequality of $\ell_g$ (Lemma \ref{metric}), we have $$\ell_g(P_h^{\tilde{\mathcal{E}}_t^{*}},P_h^{\mathcal{E}}) \leq \ell_g(P_h^{\tilde{\mathcal{E}}_t^{*}},P_h^{\mathcal{C}}) + \ell_g(P_h^{\mathcal{E}},P_h^{\mathcal{C}})\leq 3\delta_P=\frac{\epsilon}{2BH^2}.$$
The analysis for the reward term $\ell_g(R_h^{\tilde{\mathcal{E}}_t^{*}},R_h^{\mathcal{E}})$ is exactly the same as above, which yields the proof of the first conclusion in Lemma \ref{partition}.

\vspace{5pt}
(2)   For the second property, we divide $\mathbb{E}_t\left[V_{1,\pi_{\mathcal{E}}^{*}}^{\mathcal{E}}(s_1^t)-V_{1,\ts^{t}}^{\mathcal{E}}(s_1^t)\right] - \mathbb{E}_t \left[V_{1,\pi_{\mathcal{E}}^{*}}^{\tilde{\mathcal{E}}_t^{*}}(s_1^t)-V_{1,\ts^{t}}^{\tilde{\mathcal{E}}_t^{*}}(s_1^t) \right]$ into two parts.
    \begin{itemize}
        \item We first show that $\de_t\left[V_{1,\ts^{t}}^{\mathcal{E}}(s_1^t)\right]=\de_t\left[V_{1,\ts^{t}}^{\tilde{\mathcal{E}}_t^{*}}(s_1^t)\right].$ 
        Let $\mathcal{E}_t \sim \mathbb{P}(\cdot | \mathcal{D}_t)$ be an independent sample of $\mathcal{E}$. By the law of total expectation and the definition of $\tilde{\mathcal{E}}_{t}^{*}$, we have
        \begin{align*}
        \mathbb{E}_t\left[ V_{1,\ts^{t}}^{\tilde{\mathcal{E}}_{t}^{*}}(s_1^t)  \right] 
       &= \sum_{k=1}^{K}\dP(\E \in \Theta_k^{\epsilon})\cdot \de_t\left[ V_{1,\ts^{t}}^{\tilde{\E}_{t}^{*}}(s_1^t) \middle| \mathcal{E} \in \Theta_k^{\epsilon}  \right] \\
       &= \sum_{k=1}^{K}\dP(\E \in \Theta_k^{\epsilon})\cdot \de_t\left[ V_{1,\ts^{t}}^{\tilde{\E}_{k,t}^{*}}(s_1^t)  \right].
       \end{align*}
      By the definition of $\tilde{\E}_{k,t}^{*} $, 
      \[ \de_t\left[ V_{1,\ts^{t}}^{\tilde{\E}_{k,t}^{*}}(s_1^t)  \right] = \int_{\E'\in\Theta_k^\epsilon}\de_t\left[ V_{1,\ts^{t}}^{\E'}(s_1^t)  \right]\mathrm{d}\dP(\E_t=\E'|\E_t\in\Theta_k^\epsilon). \]
    Then, we have
    \begin{align*}
        \mathbb{E}_t\left[ V_{1,\ts^{t}}^{\tilde{\mathcal{E}}_{t}^{*}}(s_1^t)  \right] 
       &= \sum_{k=1}^{K}\dP(\E \in \Theta_k^{\epsilon})\cdot \de_t\left[ V_{1,\ts^{t}}^{\tilde{\E}_{k,t}^{*}}(s_1^t)  \right]\\
       &=\sum_{k=1}^{K}\dP(\E \in \Theta_k^{\epsilon})\cdot \int_{\E'\in\Theta_k^\epsilon}\de_t\left[ V_{1,\ts^{t}}^{\E'}(s_1^t)  \right]\mathrm{d}\dP(\E_t=\E'|\E_t\in\Theta_k^\epsilon) \\
       & \overset{(a)}{=} \sum_{k=1}^{K}\dP(\E \in \Theta_k^{\epsilon})\cdot \int_{\E'\in\Theta_k^\epsilon}\de_t\left[ V_{1,\ts^{t}}^{\E'}(s_1^t)\middle| \E_t\in\Theta_k^\epsilon \right]\mathrm{d}\dP(\E_t=\E'|\E_t\in\Theta_k^\epsilon) \\
       &= \sum_{k=1}^{K}\dP(\E \in \Theta_k^{\epsilon})\cdot \de_t\left[ V_{1,\ts^{t}}^{\E_t}(s_1^t) \middle| \E_t \in \Theta_k^{\epsilon}  \right] \\
       &\overset{(b)}{ =} \de_t\left[V_{1,\ts^t}^\E(s_1^t)\right].
    \end{align*}
   where (a) uses the fact that $\ts^t=\pi^{*}_{\E_t'} $, $\E_t'$ is independent of $\E_t$, (b) follows from $\E_t$ is an independent sample of $\E$.

    \item Next, we show that $\de_t\left[V_{1,\pi_\E^*}^{\mathcal{E}}(s_1^t)\right]-\de_t\left[V_{1,\pi_\E^*}^{\tilde{\mathcal{E}}_t^{*}}(s_1^t)\right]\leq \epsilon.$ Adopting the same decomposition  trick  as in Eqn.~\eqref{A5}, we have
    \begin{align}
        &V_{1,\pi_{\mathcal{E}}^{*}}^{\E}(s_1^t)-V_{1,\pi_{\mathcal{E}}^{*}}^{\tilde{\mathcal{E}}_t^{*}}(s_1^t) \nonumber\\
        &\leq \sum_{h=1}^H\mathbb{E}_{\pi_{\E}^{*}}^{\tilde{\mathcal{E}}_t^{*}}\bigg[HB\cdot\int_{\mathcal{S}}\left|\log\frac{P_h^\E(s'|s_h,a_h)}{P_h^{\tilde{\mathcal{E}}_t^{*}}(s'|s_h,a_h)}\right|\mathrm{d}\mu_{\mathcal{S}}+B\cdot\int_{[0,1]}\left|\log\frac{R_h^\E(x|s_h,a_h)}{R_h^{\tilde{\mathcal{E}}_t^{*}}(x|s_h,a_h)}\right|\mathrm{d}x\bigg]\nonumber\\
        &\leq\sum_{h=1}^H\mathbb{E}_{\pi_{\E}^{*}}^{\tilde{\mathcal{E}}_t^{*}}\left[HB\cdot\frac{\epsilon}{2BH^2}+B\cdot\frac{\epsilon}{2BH}\right]=\epsilon,
    \end{align}
    where the second inequality is due to Eqn.~\eqref{4_3}. Adding up the two parts yields the proof of the second property in Lemma~\ref{partition}. By this, we have finished the proof of Lemma~\ref{partition}.
    \end{itemize}
\end{proof}

It is worth noting that Eqn.~\eqref{4_3} represents a unique property of the surrogate environment, specifically attributed to our metric $\ell_g$, which distinguishes it from the KL divergence. It can be proven that the $\ell_1$-distance also possesses this property. However, as seen in Section~\ref{sec:app}, Proposition~\ref{prop} cannot be guaranteed under the $\ell_1$-distance, making it difficult to design efficient approximation algorithms.

\begin{algorithm}[tb]
    \caption{IDS for RLHF}
    \label{alg}
 \begin{algorithmic}[1]
    \STATE {\bfseries Input: } Priors $\rho^P,\rho^R$,baseline policy $\pi_0$, $ \lambda>0$, surrogate environment partition tolerance $\epsilon>0$.  
    \FOR{$t=1$ {\bfseries to} $T$}
    \STATE Compute posteriors:
    \begin{equation}
    \label{alg_tmp1}
        \rho^P_t(P) \propto \rho^P(P) \prod_{i=1}^{t-1}\prod_{h=1}^{H}P_h(s_{h+1}^{i,1}|s_h^{i,1},a_h^{i,1})
    \end{equation}
    \begin{equation}
    \label{alg_tmp2}
    \begin{aligned}
        \rho^R_t(R) \propto \rho^R(R) \prod_{i=1}^{t-1}
        \big(o_i\sigma(r(\tau_1^i)-r(\tau_0^i))
         +(1-o_i)\sigma(r(\tau_0^i)-r(\tau_1^i))\big)
    \end{aligned}
    \end{equation}

    \STATE Compute the surrogate environment $\tilde{\mathcal{E}}_t^{*}$, and update policy by
    \begin{equation*}
    \ids^t=\arg\max_{\pi\in\Pi}\mathbb{E}_t[V_{1,\pi}^{\mathcal{E}}(s_1)]+ \frac{\lambda}{2} \mathbb{I}_t^\pi\left(\tilde{\mathcal{E}}_t^{*};(\mathcal{H}_t,\mathcal{R}_{t,H})\right)
    \end{equation*}
    \STATE Sample $\tau_0^t \sim \pi_0,\tau_1^t \sim \ids^t$.
    \STATE Obtain preference feedback $o_t$ on $\{\tau_0^t,\tau_1^t\}$.
    
    \ENDFOR
    
 \end{algorithmic}
 \end{algorithm}

\vspace{10pt}
\subsubsection{IDS with surrogate environments}
The pseudo-code of our IDS algorithm (with the learning target being the surrogate environment $\tilde{\mathcal{E}}^{*}_t$) is shown in Algorithm~\ref{alg}.
\textcolor{blue}{ The algorithm requires  priors $\rho^P, \rho^R$ as input. Prior refers to the initial assumptions about model parameters before learning begins. Suitable priors can be derived from existing domain knowledge (e.g., robot dynamics parameters, user behavior patterns). For instance, a Gaussian prior might be adopted in robotic ~\cite{haninger2022model,haninger2023model}, while a Dirichlet prior could be used in multi-agent collaboration ~\cite{wu2021too}. In our paper, an appropriate prior can be selected for the IDS algorithm according to practical applications. }
Roughly speaking, the agent, at each episode $t$, first computes the posterior distributions of the transition kernel $P$ and reward function $R$ (as shown in Eqns.~\eqref{alg_tmp1}-\eqref{alg_tmp2}). Then, the agent computes the surrogate environment $\tilde{\E}^*_t$ based on Eqn.~\eqref{lemma_tmp1}, and chooses the policy 
$$ 
\ids^t=\arg\max_{\pi\in\Pi}\mathbb{E}_t[V_{1,\pi}^{\mathcal{E}}(s_1)]+ \frac{\lambda}{2}\cdot\mathbb{I}_t^\pi\left(\tilde{\mathcal{E}}_t^{*};(\mathcal{H}_t,\mathcal{R}_{t,H})\right).
$$
We sample two trajectories from the baseline policy $\pi_0$ and the IDS policy $\ids^t$, respectively, and then obtain a preference $o_t$ regarding the two trajectories. Moreover, human feedback and state-action sequence data  are added to the history data $\mathcal{D}_t$ for updating the posterior distribution for the next episode.


\subsection{Regret analysis of Algorithm~\ref{alg}}
\label{regret_analysis}
Before presenting our main results, we need to first introduce a notion of \emph{value diameter}. For any $\E$, we define the corresponding value diameter  $\alpha_\E$ as
\begin{equation*}
  \begin{aligned}
    \alpha_{\mathcal{E}}
    \triangleq \max_{1\leq h\leq H}\left\{\sup_s V_{h,\pi_{\mathcal{E}}^{*}}^{\mathcal{E}}(s) -\inf_s V_{h,\pi_{\mathcal{E}}^{*}}^{\mathcal{E}}(s) \right\} 
      +\max_{h,s,a}\left\{r_h^{\sup}(s,a)-r_h^{\inf}(s,a)\right\}.
  \end{aligned}
\end{equation*}
Since the reward is bounded by $[0,1]$, we have $\alpha_{\mathcal{E}} \leq H+1$.
The \emph{average value diameter} over $\Theta$
is denoted by $\alpha\triangleq\mathbb{E}_{\mathcal{E}\sim \rho}\big[\alpha_{\mathcal{E}}^2\big]^{1/2}$. 
Similar to the prior work \cite{moradipari2023improved}, we need to make the following assumption about posterior consistency.

\begin{assumption}[Posterior Consistency]
\label{assumption3}
    Under our preference model, the posterior distribution of environment is strongly consistent.
\end{assumption}
This means as the sample size approaches infinity, the posterior distribution of environment obtained through Eqns.~\eqref{alg_tmp1}-\eqref{alg_tmp2} tends to concentrate around the true distribution. In other words, the posterior distribution will correctly identify the true environment that generates these trajectory data.

\begin{theorem}
\label{theorem}
    Given a Bayesian RLHF problem, for any $\epsilon>0$ and sufficiently large $T$, by choosing $\lambda=\sqrt{\alpha^2TH/\log(K(\epsilon))}$, we have
    \begin{equation}
       BR_T(\pi_{\text{IDS}})\leq \alpha \sqrt{TH\log(K(\epsilon))}+T\epsilon+T_0,
    \end{equation}
    where $T_0$ is a fixed positive integer that is independent of $T$. 
    Setting $\epsilon=\frac{1}{T}$, our regret upper bound is of order $$O\left(H^{\frac{3}{2}}\sqrt{T\log (K(\frac{1}{T}))} \right).$$
    \label{thm:1}
\end{theorem}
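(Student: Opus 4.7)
The approach is to follow the information-ratio framework of Russo and Van Roy, adapted to our RLHF setting with the discrete surrogate $\tilde{\mathcal{E}}^*_t$ as the learning target. Writing
$$ BR_T(\ids)=\sum_{t=1}^{T}\de\big[\de_t[V^{\mathcal{E}}_{1,\pi^*_{\mathcal{E}}}(s_1^t)-V^{\mathcal{E}}_{1,\ids^t}(s_1^t)]\big], $$
I would first invoke Lemma~\ref{partition}(2) to pass from the true environment to the surrogate. The proof of that lemma only uses that the comparator policy is $\md_t$-measurable, so it applies to $\ids^t$ just as well as to $\ts^t$. This replacement incurs an additive loss of $\epsilon$ per episode, contributing the $T\epsilon$ term in the theorem.

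\textbf{Information ratio.} Next I would define the per-episode information ratio
$$ \Gamma_t(\pi):=\frac{\big(\de_t[V^{\tilde{\mathcal{E}}^*_t}_{1,\pi^*_{\mathcal{E}}}(s_1^t)-V^{\tilde{\mathcal{E}}^*_t}_{1,\pi}(s_1^t)]\big)^2}{\mathbb{I}^{\pi}_t\big(\tilde{\mathcal{E}}^*_t;(\mathcal{H}_t,\mathcal{R}_{t,H})\big)} $$
and prove a uniform bound $\Gamma_t(\ts^t)\lesssim \alpha_{\mathcal{E}}^2 H$. The argument combines three ingredients: (i) the posterior-sampling identity $\de_t[V^{\tilde{\mathcal{E}}^*_t}_{1,\pi^*_{\mathcal{E}}}]=\de_t[V^{\tilde{\mathcal{E}}^*_t}_{1,\ts^t}]$, valid because $\tilde{\mathcal{E}}^*_t$ is $\md_t$-measurable and $\pi^*_{\mathcal{E}},\ts^t$ have the same conditional law given $\md_t$; (ii) a per-layer value decomposition on the surrogate of the type used in Eqn.~\eqref{A5}, turning the value gap into a sum of transition and reward discrepancies; (iii) a Pinsker-type inequality to pass from these discrepancies to the KL divergences that appear in $\mathbb{I}^{\ts^t}_t(\tilde{\mathcal{E}}^*_t;\cdot)$ via the chain rule. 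The preference contribution is handled through the sigmoid of the BT model, and this is precisely where the $\ell_g$-distance pays off: it directly controls likelihood-ratio integrals rather than total variation, which is the natural object inside the KL.

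\textbf{Optimality of IDS and telescoping.} Using that $\ids^t$ maximizes $\de_t[V^{\mathcal{E}}_{1,\pi}(s_1^t)]+\tfrac{\lambda}{2}\mathbb{I}^{\pi}_t(\tilde{\mathcal{E}}^*_t;\cdot)$, I compare with $\pi=\ts^t$ and combine with the ratio bound via AM-GM to obtain
$$ \de_t\big[V^{\tilde{\mathcal{E}}^*_t}_{1,\pi^*_{\mathcal{E}}}-V^{\tilde{\mathcal{E}}^*_t}_{1,\ids^t}\big]\leq \tfrac{\alpha_{\mathcal{E}}^2 H}{2\lambda}+\tfrac{\lambda}{2}\mathbb{I}^{\ids^t}_t\big(\tilde{\mathcal{E}}^*_t;(\mathcal{H}_t,\mathcal{R}_{t,H})\big). $$
Summing over $t$, the first term yields $\alpha^2 T H/(2\lambda)$ (after taking expectation over $\mathcal{E}\sim\rho$ using Jensen), while the second telescopes: by the mutual-information chain rule and the fact that the surrogate takes at most $K(\epsilon)$ values,
$$ \sum_{t=1}^{T}\mathbb{I}^{\ids^t}_t\big(\tilde{\mathcal{E}}^*_t;(\mathcal{H}_t,\mathcal{R}_{t,H})\big)\leq H(\tilde{\mathcal{E}}^*_1)\leq \log K(\epsilon). $$
Optimizing $\lambda=\sqrt{\alpha^2 T H/\log K(\epsilon)}$ balances the two sums into $\alpha\sqrt{TH\log K(\epsilon)}$. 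The additive $T_0$ absorbs a burn-in length after which Assumption~\ref{assumption3} (posterior consistency) ensures the local linearizations in the information-ratio step are valid.

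\textbf{Main obstacle.} The heaviest piece is the information-ratio estimate $\Gamma_t(\ts^t)\lesssim \alpha_{\mathcal{E}}^2 H$. Because $\ell_g$ is not a $\phi$-divergence, the standard Pinsker/sub-Gaussian toolkit used in earlier IDS analyses (e.g.\ \citep{hao2022regret,moradipari2023improved}) cannot be invoked verbatim; the translation between the surrogate value gap and the KL divergences driving $\mathbb{I}^{\ts^t}_t$ has to be re-derived with $\ell_g$. Moreover, the preference signal $o_t$ supplies only one bit per episode, so the informational contribution of the BT likelihood must be squeezed out carefully; this is exactly what the technical appendices~\ref{appendix_A} and~\ref{lg} are set up to do.
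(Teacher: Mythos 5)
Your overall architecture matches the paper's: reduce to the surrogate via Lemma~\ref{partition}, introduce the information ratio, play the optimality of $\ids^t$ against $\ts^t$ together with AM--GM, telescope the mutual information to $\log K(\epsilon)$, and optimize $\lambda$. Two of your steps are essentially right as stated: the surrogate reduction does extend to $\ids^t$ because $\ids^t$ is $\md_t$-measurable, and the chain-rule bound goes through after one extra data-processing step --- since $\tilde{\mathcal{E}}^*_t$ changes with $t$ you cannot telescope it directly, but replacing it by the time-invariant partition index $\zeta$ (of which $\tilde{\mathcal{E}}^*_t$ is a $\md_t$-measurable function) gives $\sum_{t}\mathbb{I}^{\ids^t}_t(\tilde{\mathcal{E}}^*_t;\mathcal{H}_t,\mathcal{R}_{t,H})\le \mathbb{I}(\zeta;\md_{T+1})\le\mathbb{H}(\zeta)\le\log K(\epsilon)$, which is Eqn.~\eqref{bound}.

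The genuine gap is the claimed uniform bound $\Gamma_t(\ts^t)\lesssim\alpha_{\mathcal{E}}^2H$. No such per-episode bound is available in this generality, and the paper does not prove one. After the posterior-sampling identity and the per-layer decomposition, the numerator of $\Gamma_t$ is $\big(\sum_h\de_t\int d^{\bar{\mathcal{E}}_t}_{h,\pi^*_{\mathcal{E}}}(s,a)\,\Delta_h^{\tilde{\mathcal{E}}^*_t}(s,a)\,\mathrm{d}\mu\big)^2$, in which the occupancy density $d^{\bar{\mathcal{E}}_t}_{h,\pi^*_{\mathcal{E}}}$ is correlated with $\tilde{\mathcal{E}}^*_t$ (both are functions of $\mathcal{E}$), whereas the KL terms in the denominator are weighted by the occupancy of $\ts^t$, which is conditionally independent of $\tilde{\mathcal{E}}^*_t$. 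Pinsker alone cannot decouple these. The paper's Step~2 applies Cauchy--Schwarz with respect to the measure $\sum_h\de_t\int_{\mathcal{S}\times\mathcal{A}}$ to factor the numerator as $\mathcal{I}^t\cdot\mathcal{T}^t$; the factor $\mathcal{I}^t\le\tfrac12\mathbb{I}^{\ts^t}_t(\tilde{\mathcal{E}}^*_t;\cdot)$ is indeed handled by Pinsker and Lemma~\ref{mutual}, but the other factor $\mathcal{T}^t=\sum_h\int\de_t[\alpha^2_{\mathcal{E}}(d^{\bar{\mathcal{E}}_t}_{h,\pi^*_{\mathcal{E}}})^2]/\de_t[d^{\bar{\mathcal{E}}_t}_{h,\pi^*_{\mathcal{E}}}]\,\mathrm{d}\mu$ is \emph{not} bounded by $\alpha^2H$ for every $t$: the ratio can blow up wherever optimal policies of different posterior-plausible environments visit nearly disjoint regions. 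The paper controls $\mathcal{T}^t$ only asymptotically, via Doob's posterior-consistency theorem (Lemma~\ref{strong}) and dominated convergence: $\lim_{t\to\infty}\de[\mathcal{T}^t]\le\alpha^2H$, hence $\de[\mathcal{T}^t]\le2\alpha^2H$ for $t>T_0$. This is the true source of the additive $T_0$ and of the ``sufficiently large $T$'' qualifier, and it is precisely how the paper removes the earlier assumption that optimal policies visit almost all state--action pairs. Your proposal places posterior consistency in the wrong spot (``local linearizations'') and asserts the hardest estimate without an argument. A secondary inaccuracy: the preference bit $o_t$ is not ``squeezed out carefully'' --- Lemma~\ref{mutual} simply discards $\mathbb{I}_t(\tilde{\mathcal{E}}^*_t;o_t\mid\cdot)\ge0$ and keeps only the KL terms arising from the (unobserved) rewards $\mathcal{R}_{t,H}$ and the transitions.
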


The detailed proof of Theorem~\ref{thm:1} is deferred to Appendix~\ref{proof_theorem1}.
We point out that the existing TS-based RLHF algorithm \cite{wu2023making} has an upper bound of order $$\tilde{O}(H^{2}\sqrt{T(\ell_{P}+\ell_R)}(\text{dim}_1(P,1/T))+\text{dim}_1(R,1/T)),$$ where $\text{dim}_1(P,1/T))$ and $\text{dim}_1(R,1/T))$ are the $\ell_1$-norm eluder dimension of the  transition and reward function class, $\ell_P$ and $\ell _R$ are the bracketing  covering number  of the  transition and reward function class.   Without considering  the way to characterize the complexity of the the reward and the transition model (i.e., via covering number or eluder dimension), our bound is superior to theirs by a factor of $\sqrt{H}$.


\begin{remark}

    The regret upper bounds in some related works  \cite{saha2023dueling,wu2023making} are related to the derivative bound of the link function. However, our upper bound is independent of the link function we use (which is the sigmoid function). This is because the posterior consistency assumption implicitly imposes requirements on the link function — the link function should be monotonically increasing to ensure that better trajectories correspond to higher preference probabilities. For example, if the link function is equal to a constant $\frac{1}{2}$, then the posterior distribution of rewards would not change (according to the posterior update rule in Eqn.~\eqref{alg_tmp2}), and thus could not converge to the true distribution. Therefore, the assumption in previous work of a strictly positive lower bound on the link function's derivative is encompassed by our posterior consistency assumption.

\end{remark}

\subsection{Applications}
\label{application}
Finally, we show that our algorithm can be applied in multiple scenarios, such as tabular RLHF, linear RLHF, and contextual dueling bandits.
\begin{definition}[Tabular RLHF]
    We say a Bayesian RLHF problem is tabular if $|\mathcal{S}|=S$ and $|\mathcal{A}|=A$ are both finite.
\end{definition}

\begin{definition}[Linear RLHF]
    Let $\phi^{P}:\mathcal{S}\times\mathcal{A} \rightarrow \mathbb{R}^d$ and $ \phi^{R}:\mathcal{S}\times\mathcal{A} \rightarrow \mathbb{R}^d $ be known feature maps with bounded norms $||  \phi^{P}(s,a)||_2 \leq 1$ and $||  \phi^{R}(s,a)||_2 \leq 1$. We say a Bayesian RLHF problem is linear if for any $\mathcal{E}= \{ (P_h^{\mathcal{E}},R_h^{\mathcal{E}}) \}_{h=1}^{H} \in \Theta$, there exists  vector-valued maps $ \psi_h^{P,\mathcal{E}}$ and $\psi_h^{R,\mathcal{E}} $ with bounded $\ell_2$-norm such that for every $(s,a) \in \mathcal{S}\times \mathcal{A}$,
    $$ P_h^{\mathcal{E}}(\cdot|s,a)=\langle \phi^{P}(s,a),\psi_h^{P,\mathcal{E}}(\cdot) \rangle,$$
    $$ R_h^{\mathcal{E}}(\cdot|s,a)= \langle \phi^{R}(s,a),\psi_h^{R,\mathcal{E}}(\cdot) \rangle. $$
    We assume that each component of the vector-valued maps $\psi_h^{P,\mathcal{E}}$ and $\psi_h^{R,\mathcal{E}} $ belongs to some compact set $\mathcal{F} \subset L^2$, i.e.,  $\forall i \in [d]$, $(\psi_h^{P,\mathcal{E}})_i \in \mathcal{F}$ and $(\psi_h^{R,\mathcal{E}})_i \in \mathcal{F}$. 
\end{definition}

Specializing Theorem \ref{theorem} to tabular and linear Bayesian RLHF problems, we have the following Bayesian regret bounds. The proofs of Theorems \ref{corollary1} and \ref{corollary2} are deferred to Appendices \ref{b3} and \ref{b4} respectively. 
\begin{theorem}[Tabular RLHF]
    \label{corollary1}
    Given a tabular Bayesian RLHF problem, for any $\epsilon>0$ and sufficiently large $T$, we have
    \begin{equation*}
       BR_T(\ids)\leq \alpha H\sqrt{3SAT\log\left(\frac{6H^2\sqrt{S}}{\epsilon}\right)}+T\epsilon+T_0,
    \end{equation*}
    where $T_0$ is a fixed integer that is independent of $T$. Setting $\epsilon = \frac{1}{T}$, our regret bound is of order $\tilde{O}(\sqrt{SAH^4T})$.
    
\end{theorem}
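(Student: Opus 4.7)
The plan is to specialize the general Bayesian regret bound of Theorem~\ref{theorem} to the tabular setting by giving an explicit upper bound on the $\ell_g$-covering number $K(\epsilon)$ and then choosing $\lambda$ optimally. Recall from the construction in Section~\ref{surr} that
\begin{equation*}
\log K(\epsilon) \;=\; \sum_{h=1}^H \log K(\Theta_h^P,\delta_P) \;+\; \sum_{h=1}^H \log K(\Theta_h^R,\delta_R),
\end{equation*}
with $\delta_P = \epsilon/(6BH^2)$ and $\delta_R = \epsilon/(6BH)$. So the problem reduces to bounding the per-layer covering numbers of the tabular transition kernels and reward densities in the $\ell_g$ metric.

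For $\Theta_h^P$, I would exploit the identity $\ell_g(P,P') = \sup_{(s,a)} \sum_{s'} |\log P(s'|s,a) - \log P'(s'|s,a)|$: because the sup decouples across $(s,a)$ pairs, an $\ell_g$-cover of the full stochastic kernel at scale $\delta_P$ is obtained as a product of covers of each row $P_h(\cdot|s,a)\in\Delta(\mathcal{S})$ in the log-$\ell_1$ sense. Assumption~\ref{assumption1} bounds nonzero entries in $[\beta,B]$, so each log-probability vector lives in a bounded subset of $[\log\beta,\log B]^S$ subject to the simplex normalization. Discretizing the log-probabilities on a grid whose step size is calibrated to $\delta_P$ yields a per-row cover whose logarithm scales like $\log(S/\delta_P)$ up to constants depending on $B$ and $\beta$; taking the product over the $SA$ rows gives $\log K(\Theta_h^P,\delta_P) = O\bigl(SA\,\log(S/\delta_P)\bigr)$. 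An analogous discretization of reward densities, parameterized by their mean at each $(s,a)$ and bounded via Assumption~\ref{assumption1}, yields $\log K(\Theta_h^R,\delta_R) = O\bigl(SA\,\log(1/\delta_R)\bigr)$.

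Summing across $h\in[H]$ and substituting the values of $\delta_P,\delta_R$ produces a bound of the form $\log K(\epsilon) \leq 3HSA\log\bigl(6H^2\sqrt{S}/\epsilon\bigr)$ after absorbing $B,\beta$-dependent constants into the logarithm. Plugging this into Theorem~\ref{theorem} with the optimal choice $\lambda = \sqrt{\alpha^2 TH/\log K(\epsilon)}$ gives
\begin{equation*}
BR_T(\ids) \;\leq\; \alpha\sqrt{TH\log K(\epsilon)} + T\epsilon + T_0 \;\leq\; \alpha H\sqrt{3SAT\,\log\!\bigl(6H^2\sqrt{S}/\epsilon\bigr)} + T\epsilon + T_0,
\end{equation*}
as claimed. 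Setting $\epsilon = 1/T$ and using $\alpha\leq H+1$ then produces the rate $\tilde{O}(\sqrt{SAH^4T})$.

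The main obstacle is the $\ell_g$-covering argument for the stochastic kernel: because $\ell_g$ involves log-ratios rather than raw probabilities, a naive application of standard $\ell_1$-cover bounds on the simplex would either blow up (if probabilities can vanish) or produce an $S$-factor outside the log. The lower bound $\beta$ in Assumption~\ref{assumption1} and the supremum structure of $\ell_g$ over $(s,a)$ pairs are exactly what let me convert the simplex cover into a log-$\ell_1$ cover on a bounded region and decouple the cover across rows. Once this covering bound is established, the rest of the proof is a direct substitution into Theorem~\ref{theorem} followed by algebraic simplification.
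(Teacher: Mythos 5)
There is a genuine gap in your covering-number computation, and it is exactly the step on which the $\sqrt{SA}$ rate hinges. You claim that a per-row $\ell_g$-cover of $P_h(\cdot\,|\,s,a)\in\Delta(\mathcal{S})$ at scale $\delta_P$ has logarithm $O(\log(S/\delta_P))$, so that the product over the $SA$ rows gives $\log K(\Theta_h^P,\delta_P)=O(SA\log(S/\delta_P))$. But a single row is an $S$-dimensional object (an $(S-1)$-dimensional simplex of log-probabilities confined to $[\log\beta,\log B]$), and covering it in the log-$\ell_1$ sense at scale $\delta_P$ requires a grid with roughly $(CS/\delta_P)^{S}$ points (plus a factor $2^{S}$ for the possible support patterns, since $\ell_g$ is infinite when supports disagree). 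Hence the per-row log-cover is $\Theta(S\log(S/\delta_P))$, and the correct bound from your route is $\log K(\Theta_h^P,\delta_P)=O(S^2A\log(S/\delta_P))$. Propagated through Theorem~\ref{theorem}, this yields $BR_T=\tilde{O}(H^2 S\sqrt{AT})$, not the claimed $\tilde{O}(H^2\sqrt{SAT})$; indeed, a direct $\ell_g$- (or $\ell_1$-) cover of the kernels is essentially the argument that produces the $\tilde{O}(\sqrt{S^2A^2H^4T})$ bound of prior work that this theorem is meant to improve upon. A similar issue affects your reward term: the $\ell_g$-covering number of the space of reward \emph{densities} on $[0,1]$ is not controlled by discretizing their means, so $\log K(\Theta_h^R,\delta_R)=O(SA\log(1/\delta_R))$ does not follow as stated.

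The paper's proof avoids this entirely by not covering the transition kernels and reward densities in $\ell_g$ at all in the tabular case. Instead it builds the $\epsilon$-value partition directly from one-dimensional functionals: it discretizes the scalars $P_h^{\mathcal{E}}(\cdot\,|\,s,a)^{\top}V_{h+1,\pi}^{\mathcal{E}}(\cdot)\in[0,H]$ into $3H^2/\epsilon$ intervals for each $(s,a,h)$, the norms $\Vert V_{h+1,\pi}^{\mathcal{E}}\Vert_2\in[0,H\sqrt{S}]$ into $6H^2\sqrt{S}/\epsilon$ intervals for each $h$ (this is where the $\sqrt{S}$ inside the logarithm actually comes from), and the mean rewards $r_h^{\mathcal{E}}(s,a)\in[0,1]$ into $3H/\epsilon$ intervals for each $(s,a,h)$. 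Since there are only $O(SAH)$ such scalars and each contributes a single logarithmic factor, $\log K(\epsilon)\leq 3SAH\log(6H^2\sqrt{S}/\epsilon)$, and substitution into Theorem~\ref{theorem} gives the claimed bound. To repair your argument you would need to replace the kernel-covering step with a partition indexed by low-dimensional projections of the environment, as the paper does; the $\ell_g$-cover of the full kernel cannot deliver the stated rate.
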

Recall that the IDS algorithm proposed for the tabular RL setting~\cite{hao2022regret} has a regret upper bound of order $\tilde{O}(\sqrt{S^2A^2H^4T})$. Compared to their result, our method relies on less informative data (preference feedback instead of directly observable rewards) but achieves a better regret bound by a factor of $S$ and $A$. This improvement is primarily due to our refined analytical techniques, inspired by recent advancements in TS~\cite{moradipari2023improved}.

\begin{theorem}[Linear RLHF]
    \label{corollary2}
     Let $M \triangleq \sup_{i,s} \max \{(\psi_h^{P}(s))_i, (\psi_h^{R}(s))_i\}$ 
    and  $K_{\mathcal{F}}(\epsilon)$ denote the  $\frac{\epsilon}{dMH^2}$-covering number of $\mathcal{F}$. 
      Given a linear RLHF problem, for any $\epsilon>0$ and sufficiently large $T$, we have
    \begin{equation}
       BR_T(\ids)\leq \alpha H \sqrt{d T  \log(K_{\mathcal{F}}(\epsilon))}+T\epsilon+T_0,
    \end{equation}
    where $T_0$ is a fixed integer that is independent of $T$.  Setting $\epsilon=\frac{1}{T}$, this upper bound is of order $$O\left(H^2\sqrt{dT \log (K_{\mathcal{F}}(\frac{1}{T}))}\right).$$
\end{theorem}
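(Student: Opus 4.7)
The plan is to invoke Theorem~\ref{theorem} and reduce the problem to bounding the $\ell_g$-covering number $K(\epsilon)$ of the linear environment class by $K_{\mathcal{F}}(\epsilon)$, after which the claimed rate follows by substitution.

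First, observe that in the linear setting each environment is fully parameterized by the maps $\{\psi_h^{P,\mathcal{E}}, \psi_h^{R,\mathcal{E}}\}_{h=1}^{H}$, which together contribute $2dH$ scalar component functions, each lying in the compact class $\mathcal{F}$. If each component is covered at resolution $\epsilon'$ in the natural metric on $\mathcal{F}$, then the induced product cover of the environment class has cardinality at most $K_{\mathcal{F}}(\epsilon')^{2dH}$, giving $\log K(\epsilon) \leq 2dH \log K_{\mathcal{F}}(\epsilon')$ once $\epsilon'$ is chosen so that this is an $\ell_g$-cover at the resolution required by Section~\ref{surr}, namely $\delta_P = \epsilon/(6BH^2)$ and $\delta_R = \epsilon/(6BH)$.

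The technical step is translating a component-wise cover into such an $\ell_g$-cover. For two linear transitions $P, P'$ with components $\psi_i, \psi_i'$, we write $P(x|s,a) - P'(x|s,a) = \sum_{i=1}^d \phi_i^P(s,a)\bigl(\psi_i(x) - \psi_i'(x)\bigr)$, and use $|\phi_i^P|\leq 1$ together with the positivity bound $\beta$ from Assumption~\ref{assumption1} and the elementary inequality $|\log(a/b)| \leq |a-b|/\min(a,b)$. Integrating over $x$ using the vector-valued extension of $\ell_g$ introduced in the remark following Eqn.~\eqref{new_dis}, this produces a bound of the form $\ell_g(P,P') \lesssim (d/\beta)\max_i \|\psi_i - \psi_i'\|_1$, with an analogous estimate for the reward components. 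Choosing $\epsilon' = \epsilon/(dMH^2)$, where $M$ is the uniform upper bound on the components of $\psi$, then suffices to match $\delta_P$ (and $\epsilon/(dMH)$ suffices for $\delta_R$); this is precisely the scale at which $K_{\mathcal{F}}(\epsilon)$ is defined in the statement.

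Plugging $\log K(\epsilon) \leq 2dH \log K_{\mathcal{F}}(\epsilon)$ into Theorem~\ref{theorem} yields $\alpha\sqrt{TH \log K(\epsilon)} \leq \alpha H \sqrt{2dT \log K_{\mathcal{F}}(\epsilon)}$, and combining with $\alpha \leq H+1$ and $\epsilon = 1/T$ gives the stated $O(H^2 \sqrt{dT \log K_{\mathcal{F}}(1/T)})$ rate. The main obstacle is the constant bookkeeping: keeping careful track of how the factors $d$, $M$, and $H^2$ arising from the decomposition of $\ell_g$ for linear models combine with the $\delta_P, \delta_R$ prescribed by the $\epsilon$-value partition. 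It is essential to use the vector-valued version of $\ell_g$, since this is what makes the resulting bound additive in the $d$ coordinates (hence the $\log K_{\mathcal{F}}$ scaling with $d$ rather than $d$ appearing inside the $\mathcal{F}$-cover resolution exponentially).
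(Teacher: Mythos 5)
Your overall strategy---cover $\mathcal{F}$ component-by-component, exploit the product structure over $i\in[d]$ and $h\in[H]$ to get $\log K(\epsilon)\lesssim dH\log K_{\mathcal{F}}(\epsilon)$, and plug into Theorem~\ref{theorem}---is exactly the paper's. The difference is in how the component-wise cover is turned into an $\epsilon$-value partition. The paper never certifies an $\ell_g$-cover of the full kernel classes $\Theta_h^P,\Theta_h^R$ at the resolutions $\delta_P,\delta_R$ of Section~\ref{surr}; instead it bounds the value gap of two environments in the same cell directly through $\ell_1$ of the full kernels, via $\ell_1(P_h^{\mathcal{E}},P_h^{\mathcal{E}'})\le \int_{s'}\sum_{i}\big|(\psi_h^{P,\mathcal{E}}-\psi_h^{P,\mathcal{E}'})_i\big|\le M\sum_i\ell_g\big((\psi_h^{P,\mathcal{E}})_i,(\psi_h^{P,\mathcal{E}'})_i\big)\le \epsilon/H^2$, which uses only the \emph{upper} bound $M$ on the components (through $|a-b|\le M|\log(a/b)|$) together with $\|\phi^{P}(s,a)\|_2\le 1$.

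Your route instead tries to certify $\ell_g(P,P')\le\delta_P$ for the full kernels and then feed the generic construction. Two concrete problems arise there. First, the constants do not close: converting component-wise closeness into $\ell_g$ of the full kernel forces a division by the lower bound $\beta$ (via $|\log(a/b)|\le|a-b|/\min(a,b)$), and with $\epsilon'=\epsilon/(dMH^2)$ this yields $\ell_g(P,P')\le \epsilon/(\beta H^2)$, which exceeds $\delta_P=\epsilon/(6BH^2)$ by a factor $6B/\beta\ge 6$; matching $\delta_P$ would require covering $\mathcal{F}$ at a strictly finer resolution depending on $\beta$ and $B$, i.e., a different $K_{\mathcal{F}}$ from the one defined in the statement (harmless for the order, but your bookkeeping as written does not produce the theorem with $K_{\mathcal{F}}(\epsilon)$ as defined). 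Second, and more substantively, $L^1$-closeness of the components does not preclude $\ell_g(P,P')=\infty$: Assumption~\ref{assumption1} constrains the full kernels, not the $\psi_i$, so $\sum_i\phi_i\psi_i$ and $\sum_i\phi_i\psi_i'$ can have different supports on a set of small but positive measure (e.g., by cancellation), in which case the full-kernel $\ell_g$ distance is infinite by definition even though every component pair is close; the claimed $\ell_g$-cover of $\Theta_h^P$ therefore need not exist. The paper's direct $\ell_1$/value-gap argument sidesteps both issues, which is presumably why it does not route the linear case through the generic $\delta_P,\delta_R$ machinery. On the plus side, your count $K_{\mathcal{F}}(\epsilon)^{2dH}$ (hence the extra $\sqrt{2}$) is more careful than the paper's $K_{\mathcal{F}}(\epsilon)^{dH}$, which omits the reward components.
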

Compared to \cite{wu2023making}, which derives a regret upper bound of $\tilde{O}(H^{11/2}d^{17/2}\sqrt{T})$ for their TS algorithm, our regret upper bound is better when the covering number of the linear MDP is not of exponential size. If we convert their result to the tabular setting by coarsely substituting $d$ with $SA$, our regret bound is also better in terms of $H, S, A.$    
However, we also point out that the above comparison is not an apples-to-apples comparison, as we consider Bayesian regret, while they consider frequentist regret, and their algorithm also accounts for the number of queries.

\begin{corollary}[Contextual Dueling Bandits]
    Contextual dueling bandits are a simplified version of our MDP setting (with $H=1$) and have been extensively studied in previous RLHF research~\cite{ye2024theoretical, zhu2023principled, li2024feel}. 
    By setting $H=1$ in Theorem \ref{corollary2}, the Bayesian regret for Algorithm \ref{alg} in the linear contextual dueling bandit problem satisfies
    \begin{equation*}
       BR_T(\pi_{\text{IDS}})\leq 2 \sqrt{dT\log(K_{\mathcal{F}}(\epsilon))}+T\epsilon+T_0,
    \end{equation*}
    for any $\epsilon>0$ and sufficiently large $T$. Setting $\epsilon=\frac{1}{T}$, the regret upper bound is of order $\tilde{O}(\sqrt{dT})$. 

\end{corollary}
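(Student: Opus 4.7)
The plan is to derive this corollary as a direct specialization of Theorem~\ref{corollary2} to the case $H=1$. First I would verify that the $H=1$ instance of our finite-horizon linear RLHF framework coincides with the linear contextual dueling bandit. Under $H=1$, a trajectory degenerates to a single state-action pair $(s_1^t,a_1^{t,i})$ for $i\in\{0,1\}$, the cumulative reward reduces to $r(\tau_i^t)=r_1(s_1^t,a_1^{t,i})$, and the BT preference $o_t$ is generated from $\sigma(r(\tau_1^t)-r(\tau_0^t))$. This is exactly the generative model of the linear contextual dueling bandit, with the reward retaining its linear parameterization $R_1^\E(\cdot\mid s,a)=\langle \phi^R(s,a),\psi_1^{R,\E}(\cdot)\rangle$. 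Thus Algorithm~\ref{alg}, when instantiated with $H=1$, is a well-defined procedure for this bandit model, and Theorem~\ref{corollary2} applies without modification.

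Next I would substitute $H=1$ into the bound of Theorem~\ref{corollary2}, namely $BR_T(\ids)\leq \alpha H\sqrt{dT\log(K_{\mathcal{F}}(\epsilon))}+T\epsilon+T_0$, to obtain $BR_T(\ids)\leq \alpha\sqrt{dT\log(K_{\mathcal{F}}(\epsilon))}+T\epsilon+T_0$. It then remains to control $\alpha$. Recall the definition $\alpha_\E=\max_h\{\sup_s V_{h,\pi_\E^*}^\E(s)-\inf_s V_{h,\pi_\E^*}^\E(s)\}+\max_{h,s,a}\{r_h^{\sup}(s,a)-r_h^{\inf}(s,a)\}$. With $H=1$, the value function reduces to $V_{1,\pi_\E^*}^\E(s)=r_1(s,\pi_\E^*(s))\in[0,1]$, so both contributing terms are bounded by~$1$, yielding $\alpha_\E\leq 2$ pointwise and hence $\alpha=\mathbb{E}[\alpha_\E^2]^{1/2}\leq 2$. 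Plugging this into the previous display gives the claimed inequality $BR_T(\ids)\leq 2\sqrt{dT\log(K_{\mathcal{F}}(\epsilon))}+T\epsilon+T_0$, and choosing $\epsilon=1/T$ produces the stated $\tilde{O}(\sqrt{dT})$ asymptotic rate.

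There is no serious obstacle since the result is a straightforward corollary. The only thing worth double-checking is that the definition of $K_{\mathcal{F}}(\epsilon)$ used inside Theorem~\ref{corollary2} with $H=1$ is consistent with what appears in the corollary statement, which it is because the $\epsilon$-value partition construction in Section~\ref{surr} reduces, when $H=1$, to covering only the reward feature maps $\psi_1^{R,\E}$ (the transition covering disappears because a one-step MDP has no nontrivial transitions after the fixed initial state). Consequently, all steps flow through verbatim and the corollary follows with no additional analytical ingredients beyond the plug-in calculation above.
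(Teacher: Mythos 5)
Your proposal is correct and follows essentially the same route the paper takes: the corollary is just Theorem~\ref{corollary2} with $H=1$, combined with the bound $\alpha_{\mathcal{E}}\leq H+1=2$ (hence $\alpha\leq 2$) that the paper records right after defining the value diameter. Your additional checks that the $H=1$ instance coincides with the contextual dueling bandit model and that $K_{\mathcal{F}}(\epsilon)$ remains consistent are sensible but not beyond what the paper implicitly assumes.
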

Without considering the covering number of linear environment, our regret upper bound is better than  $\tilde{O}(d\sqrt{T})$ derived by~\cite{li2024feel}. Another work~\cite{saha2021optimal} assumes a finite number of arms with a regret upper bound of $\tilde{O}(\sqrt{dT})$, while we assume that the parameter space of the linear MDP is compact.

\section{The Approximate-IDS Algorithm }
\label{sec:app}

While the IDS algorithm (Algorithm \ref{alg}) is principled and sample-efficient, it suffers from relatively high computational complexity. This is because the calculation of the surrogate environment $\tilde{\mathcal{E}}_t^{*}$ (which depends on the construction of $\epsilon$-value partition) is challenging.
As a remedy, we develop a computationally efficient algorithm, named \emph{Approximate-IDS}, whose optimization objective is independent of the surrogate environment (thus avoids the partition of $\Theta$ in computation) and has finer properties for analysis. This allows the algorithm to
be computed efficiently by traditional RL algorithms from standard RL theory. 

\subsection{Algorithm description}
The pseudo-code for Approximate-IDS is shown in Algorithm \ref{alg2}. For convenience of description, we define 
\begin{equation}
    \mathrm{KL}^h_{s,a}(\mathcal{E},\mathcal{E}') \!\triangleq \! \kl \big( 
              (P_h^{\mathcal{E}} \otimes R_h^{\mathcal{E}})(\cdot | s, a) 
           || (P_h^{\mathcal{E}'} \otimes R_h^{\mathcal{E}'})(\cdot | s, a) 
          \big),
\end{equation}
and 
\begin{equation}
\label{rhbar}
  \bar{r}_h(s_h,a_h) \triangleq r_h(s_h,a_h) + \frac{\lambda}{2} \cdot \mathbb{E}_t \big[\mathrm{KL}_{s_h,a_h}^h(\mathcal{E},\bar{\mathcal{E}}_t) \big],  
\end{equation}

where $\bar{\E}_t$ denotes the posterior mean of $\E$ given $\md_t$, i.e., $P_h^{\bar{\mathcal{E}}_t}(\cdot|s,a)=\mathbb{E}_t[P_h^{\mathcal{E}}(\cdot |s,a)]$ and $R_h^{\bar{\mathcal{E}}_t}(\cdot|s,a)=\mathbb{E}_t[R_h^{\mathcal{E}}(\cdot |s,a)]$.

The overall procedure  is similar to that of Algorithm~\ref{alg}, with the key difference being the selection of the IDS policy (Line~4). Intuitively,    $\mathcal{E}$ is sufficiently close to $\tilde{\mathcal{E}}_t^{*}$ under metric $\ell_g$, thus it is reasonable to use $\mathcal{E}$ directly for mutual information computation. Given trajectories and rewards, the additional environmental information revealed by human feedback, i.e., $\mathbb{I}_{t}^{\pi}\big(\tilde{\mathcal{E}}_t^{*}; o_{t} \mid (\mathcal{H}_{t,H},\mathcal{R}_{t,H})\big)$, can be disregarded. 
Thus, we use the entire environment $\mathcal{E}$ instead of the surrogate environment $\tilde{\mathcal{E}}_t^{*}$  to compute the mutual information and discard the information of the trajectory generated by the baseline policy $\pi_0$ and human feedback. Therefore, we replace the mutual information term $\mathbb{I}_t^\pi\big(\tilde{\mathcal{E}}_t^{*};(\mathcal{H}_t,\mathcal{R}_{t,H})\big)$  by $\sum_{h=1}^{H} \mathbb{E}_t \big[ \de_\pi^{\bar{\mathcal{E}}_t} [\mathrm{KL}_{s_h,a_h}^h(\mathcal{E},\bar{\mathcal{E}}_t)] \big]$ (Eqn.~\eqref{temp1} in Lemma \ref{mutual}). We can compute the approximate IDS policy as follows:
\begin{equation}
    \begin{aligned}
        \app^t & \!=\! \arg \max_{\pi \in \Pi} \mathbb{E}_t[V_{1,\pi}^{\mathcal{E}}(s_1)] \!+\! \frac{\lambda}{2} \sum_{h=1}^{H} \mathbb{E}_t \big[ \de_\pi^{\bar{\mathcal{E}}_t} [\mathrm{KL}_{s_h,a_h}^h(\mathcal{E},\bar{\mathcal{E}}_t)] \big]\\
        & \overset{(a)}{=}  \arg \max_{\pi \in \Pi}\de_\pi^{\bar{\mathcal{E}}_t}\left[\sum_{h=1}^{H}\bar{r}_h(s_h,a_h)\right],
    \end{aligned}
\end{equation}
where (a) uses the linearity of expectation and independence of priors over different layers.


\begin{algorithm}[tb]
    \caption{Approximate-IDS for RLHF}
    \label{alg2}
 \begin{algorithmic}[1]
    \STATE {\bfseries Input: } Priors $\rho^P,\rho^r$,baseline policy $\pi_0$, $ \lambda>0$.  
    \FOR{$t=1$ {\bfseries to} $T$}
    \STATE Compute the posterior as Algorithm \ref{alg} (Line 3).

    \STATE  $\app^t=\arg\max_{\pi\in\Pi}\de_\pi^{\bar{\mathcal{E}}_t}\left[\sum_{h=1}^{H}\bar{r}_h(s_h,a_h)\right]
    $

    \STATE Sample $\tau_0^t \sim \pi_0,\tau_1^t \sim \app^t$.
    \STATE Obtain preference feedback $o_t$ on $\{\tau_0^t,\tau_1^t\}$.
    
    \ENDFOR
    
 \end{algorithmic}
 \end{algorithm}

\textcolor{blue}{Note that $\bar{r}$ and $ \bar{\mathcal{E}}_t$ are both independent of the surrogate environment, and can be well approximated by Monte Carlo sampling. Therefore, by introducing  $\bar{r}$, solving $\app^t$ at episode $t$ is equivalent to finding an optimal policy based on MDP $\{P_h^{\bar{\mathcal{E}}_t}, \bar{r}_h\}_{h=1}^H$, which can be solved efficiently by the PPO algorithm~\cite{schulman2017proximal}. }

\subsection{Regret bounds for Approximate-IDS}
We first introduce an auxiliary reward function $r'_h$ for the convenience of regret analysis. It  serves as a bridge connecting the approximated $\bar{r}_h$ to the real mutual information term in Algorithm \ref{alg}.
\begin{proposition}
\label{prop}
\textcolor{blue}{
For any $\epsilon$-value partition $\{\Theta_k^{\epsilon}\}_{k=1}^{K(\epsilon)}$ (Eqn.~\eqref{cover}) and the  surrogate environment $\tilde{\mathcal{E}}_t^{*}$ constructed by Eqn.~\eqref{lemma_tmp1},}
Define 
    \begin{equation}
          r'_h(s, a) \triangleq r_h(s, a) + \frac{\lambda}{2} \mathbb{E}_t \big[\mathrm{KL}_{s,a}^h(\tilde{\mathcal{E}}_t^{*}, \bar{\mathcal{E}}_t ) \big].
  \end{equation}
Then, for any policy $\pi$, we have
\begin{equation}
\label{lamep2}
    \bigg| \mathbb{E}_{\pi}^{\bar{\mathcal{E}}_t}\bigg[ \sum_{h=1}^{H}r'_h(s_h,a_h) \bigg]
    -\mathbb{E}_{\pi}^{\bar{\mathcal{E}}_t}\bigg[ \sum_{h=1}^{H}\bar{r}_h(s_h,a_h) \bigg]\bigg|  \leq \frac{\lambda}{2} \epsilon (1+\textcolor{blue}{\log\frac{B}{\beta}}).
\end{equation}
\end{proposition}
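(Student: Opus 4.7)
The plan is to reduce the claim to a pointwise bound on $|\mathrm{KL}^h_{s,a}(\tilde{\mathcal{E}}_t^{*},\bar{\mathcal{E}}_t)-\mathrm{KL}^h_{s,a}(\mathcal{E},\bar{\mathcal{E}}_t)|$ and then sum over $h\in[H]$. Subtracting the definitions of $r'_h$ and $\bar r_h$, the integrand simplifies to $r'_h(s,a)-\bar r_h(s,a) = (\lambda/2)\,\mathbb{E}_t[\mathrm{KL}^h_{s,a}(\tilde{\mathcal{E}}_t^{*},\bar{\mathcal{E}}_t) - \mathrm{KL}^h_{s,a}(\mathcal{E},\bar{\mathcal{E}}_t)]$, so pulling absolute values through the outer $\mathbb{E}_{\pi}^{\bar{\mathcal{E}}_t}$ and the inner $\mathbb{E}_t$ reduces matters to showing $|\mathrm{KL}^h_{s,a}(\tilde{\mathcal{E}}_t^{*},\bar{\mathcal{E}}_t)-\mathrm{KL}^h_{s,a}(\mathcal{E},\bar{\mathcal{E}}_t)|\le(\epsilon/H)(1-2\log\beta)$ pointwise in $(s,a,h)$ and in every posterior sample $(\mathcal{E},\tilde{\mathcal{E}}_t^{*})$.

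For this pointwise bound I would use the independence of transition and reward to split the joint KL as $\kl(P_h^{\cdot}(\cdot|s,a)\|P_h^{\bar{\mathcal{E}}_t}(\cdot|s,a))+\kl(R_h^{\cdot}(\cdot|s,a)\|R_h^{\bar{\mathcal{E}}_t}(\cdot|s,a))$ and handle the two components in parallel. The key algebraic identity, obtained by writing $\int\tilde p\log\tilde p = \kl(\tilde p\|p)+\int\tilde p\log p$ and regrouping, is
\[
\kl(\tilde p\|\bar p)-\kl(p\|\bar p) = \kl(\tilde p\|p)+\int(\tilde p-p)\log(p/\bar p)\,\mathrm{d}\mu.
\]
This expresses the KL-difference as a leading term $\kl(\tilde p\|p)$ plus a remainder that is linear in $\tilde p - p$, both of which can be controlled by $\ell_g(\tilde P,P)$. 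Using the elementary inequality $|a-b|\le B|\log(a/b)|$ invoked in Eqn.~\eqref{A5} gives $\int|\tilde p-p|\,\mathrm{d}\mu\le B\,\ell_g(\tilde P,P)$, and the same bound together with $\tilde p\le B$ gives $\kl(\tilde p\|p)\le B\,\ell_g(\tilde P,P)$; plugging in the Lemma~\ref{partition} estimates $\ell_g(P_h^{\tilde{\mathcal{E}}_t^{*}},P_h^{\mathcal{E}})\le\epsilon/(2BH^2)$ and $\ell_g(R_h^{\tilde{\mathcal{E}}_t^{*}},R_h^{\mathcal{E}})\le\epsilon/(2BH)$ makes the factor $B$ cancel. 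For the $\log(p/\bar p)$ factor I would use the crude pointwise bound $|\log(p/\bar p)|\le|\log p|+|\log\bar p|\le -2\log\beta$, which follows from Assumption~\ref{assumption1} once one notes that the posterior mean $\bar{\mathcal{E}}_t$ inherits the range $[\beta,B]$ by convexity of $\Theta$. Summing transition and reward contributions yields the per-step bound $(\epsilon/H)(1-2\log\beta)$, and summing over $h$ and multiplying by $\lambda/2$ gives \eqref{lamep2}.

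The main obstacle is choosing exactly the right decomposition: a direct Pinsker-type estimate would control the $\ell_1$-distance by $\sqrt{\kl}$ (the wrong direction) and lose a square root of $\epsilon$, whereas the additive identity above is precisely what enables an $\ell_g$-type (log-ratio) bound to control an absolute difference of KLs to \emph{first order} in $\epsilon$. A secondary subtlety is that $\bar{\mathcal{E}}_t$ is only guaranteed to lie in $\Theta$ and need not be close to $\mathcal{E}$, which is why one must accept the crude uniform factor $-2\log\beta$ coming from Assumption~\ref{assumption1} rather than a tighter perturbative bound; this is where the dependence on $\beta$ in the final constant $1-2\log\beta$ originates.
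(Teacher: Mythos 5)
Your proposal is correct and follows essentially the same route as the paper's proof: reduce to a per-$(s,a,h)$ bound on the difference of the two KL terms, split into transition and reward components, and control each component via an add-and-subtract decomposition whose two pieces are bounded by $B\,\ell_g$ and $(-2\log\beta)\cdot B\,\ell_g$ using Assumption~\ref{assumption1} and the estimates $\ell_g(P_h^{\tilde{\mathcal{E}}_t^{*}},P_h^{\mathcal{E}})\le\epsilon/2BH^2$, $\ell_g(R_h^{\tilde{\mathcal{E}}_t^{*}},R_h^{\mathcal{E}})\le\epsilon/2BH$ from Lemma~\ref{partition}. The only cosmetic difference is which cross term is subtracted --- your exact identity has leading term $\kl(\tilde p\,\Vert\,p)$ and remainder $\int(\tilde p-p)\log(p/\bar p)$, while the paper's triangle-inequality split produces $\int(\tilde p-p)\log(\tilde p/\bar p)$ and $\int p\log(\tilde p/p)$ --- and both yield the identical per-component constant $B(1-2\log\beta)$ and hence the same final bound.
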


The proof is deferred to Appendix~\ref{propproof}. \textcolor{blue}{The proof indicates that the proposition may fail to hold when the surrogate environment is constructed using the $\ell_1$-distance or KL divergence.}
 To better understand this proposition, consider an extreme scenario: we divide the environment into the smallest units, with each $\Theta_k^{\epsilon}$ containing only one environment. We have $\mathcal{E}=\tilde{\mathcal{E}}_t^{*}$. The left hand of Eqn.~\eqref{lamep2} equals $0$, so Proposition~\ref{prop}  holds true. 
Since $ |a-b| \leq B|\log \frac{a}{b}| $ for any $a,b \in (0,B)$, we have $\ell_1(P,Q) \leq B \ell_g(P,Q)$. If we ignore the constant 
$B$, by fixing the $\epsilon$-value, our distance achieves a finer environmental partition. On this finer partition, $\mathcal{E}$ and $\tilde{\mathcal{E}}_t^{*}$ behave more similarly, allowing us to ensure that Proposition~\ref{prop} holds. 
Then, using Proposition \ref{prop}, we give the Bayesian regret bound for the Approximate-IDS algorithm. 
\begin{theorem}\label{thm}
 Given a Bayesian RLHF problem, for any $\epsilon>0$ and sufficiently large $T$, by choosing 
 $\lambda=\sqrt{\alpha^2TH/2\log(K(\epsilon))}$, we have the following regret upper bound for Algorithm \ref{alg2}:
    \begin{equation}
      \begin{aligned}
        BR_T(\app)
        \leq \alpha\sqrt{2TH\log(K(\epsilon))}
         +\left(1+\frac{(1+ \textcolor{blue}{ \log(B/\beta)})}{2}\sqrt{\frac{\alpha^2TH}{2\log(K(\epsilon))}}\right)T\epsilon+T_0.
      \end{aligned}
    \end{equation}
    By choosing a small $\epsilon$, the regret upper bound is of order $O\big(\sqrt{H^{3}T\log(K(\epsilon))}\big)$, matching that of Algorithm~\ref{alg} presented in Sec. \ref{regret_analysis}. 
\end{theorem}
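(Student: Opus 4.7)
The plan is to mirror the proof of Theorem~\ref{theorem}, while inserting an extra layer that quantifies how the Approximate-IDS objective deviates from the true surrogate-IDS objective via Proposition~\ref{prop}. First, I would reduce $BR_T(\app)$ to its surrogate-environment counterpart: since $\app^t$ is measurable with respect to $\mathcal{D}_t$, Lemma~\ref{partition}(1) and the simulation-lemma-style calculation in Eqn.~\eqref{A5} yield $|\mathbb{E}_t[V^\mathcal{E}_{1,\pi}(s_1^t)]-\mathbb{E}_t[V^{\tilde{\mathcal{E}}_t^*}_{1,\pi}(s_1^t)]|\le\epsilon$ for both $\pi=\pi_\mathcal{E}^*$ and $\pi=\app^t$, giving
\begin{equation*}
    BR_T(\app) \le \sum_{t=1}^{T}\mathbb{E}\!\left[\mathbb{E}_t\big[V^{\tilde{\mathcal{E}}_t^*}_{1,\pi_\mathcal{E}^*}(s_1^t)-V^{\tilde{\mathcal{E}}_t^*}_{1,\app^t}(s_1^t)\big]\right] + T\epsilon,
\end{equation*}
which accounts for the bare $T\epsilon$ term in the stated bound.

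Second, I would convert the optimality of $\app^t$ for $\mathbb{E}_\pi^{\bar{\mathcal{E}}_t}[\sum_h\bar{r}_h]$ into an inequality on the true surrogate-IDS objective $V^{\bar{\mathcal{E}}_t}_{1,\pi}(s_1^t)+\tfrac{\lambda}{2}\mathbb{I}_t^{\pi}(\tilde{\mathcal{E}}_t^*;(\mathcal{H}_{t,H},\mathcal{R}_{t,H}))$. The auxiliary reward $r'_h$ of Proposition~\ref{prop} is tailored so that, by Lemma~\ref{mutual}, $\mathbb{E}_\pi^{\bar{\mathcal{E}}_t}[\sum_h r'_h]$ equals exactly this surrogate-IDS objective. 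Applying Proposition~\ref{prop} to both $\pi$ and $\app^t$ and invoking optimality of $\app^t$ then yields, for every $\pi$,
\begin{equation*}
    V^{\bar{\mathcal{E}}_t}_{1,\pi}(s_1^t)+\tfrac{\lambda}{2}\mathbb{I}_t^{\pi}(\tilde{\mathcal{E}}_t^*;\cdot)\le V^{\bar{\mathcal{E}}_t}_{1,\app^t}(s_1^t)+\tfrac{\lambda}{2}\mathbb{I}_t^{\app^t}(\tilde{\mathcal{E}}_t^*;\cdot)+\lambda\epsilon(1-2\log\beta).
\end{equation*}
Specializing $\pi=\pi_\mathcal{E}^*$ and swapping $V^{\bar{\mathcal{E}}_t}$ for $\mathbb{E}_t[V^{\tilde{\mathcal{E}}_t^*}]$ via the $\ell_g$-bound controls the per-episode surrogate regret by $\tfrac{\lambda}{2}\mathbb{I}_t^{\app^t}(\tilde{\mathcal{E}}_t^*;\cdot)$ plus an $O(\lambda\epsilon)$ slack and a value-gap term to be handled by the information ratio.

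Third, I would import the information-ratio machinery from the proof of Theorem~\ref{theorem}: the ratio $\Psi_t^{\app^t}$ is bounded by $\alpha^2H$, and an AM-GM step trades the value gap against the mutual information via the tunable $\lambda$. Summing over $t\in[T]$ and using $\sum_t\mathbb{I}_t^{\app^t}(\tilde{\mathcal{E}}_t^*;\cdot)\le \log K(\epsilon)$ (chain rule of mutual information plus the fact that $\tilde{\mathcal{E}}_t^*$ takes at most $K(\epsilon)$ values), together with the tuning $\lambda=\sqrt{\alpha^2TH/(2\log K(\epsilon))}$, produces the leading $\alpha\sqrt{2TH\log K(\epsilon)}$ term and the extra additive $\tfrac{(1-2\log\beta)}{2}\sqrt{\tfrac{\alpha^2TH}{2\log K(\epsilon)}}\cdot T\epsilon$. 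The constant $T_0$ absorbs the finite, $T$-independent error incurred before posterior consistency (Assumption~\ref{assumption3}) stabilizes the posterior.

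The main obstacle is the second step: turning the optimality of $\app^t$, which only holds against its surrogate-free objective $\mathbb{E}_\pi^{\bar{\mathcal{E}}_t}[\sum_h\bar{r}_h]$, into a usable inequality against the full surrogate-IDS objective involving $\mathbb{I}_t^{\pi}(\tilde{\mathcal{E}}_t^*;\cdot)$. Proposition~\ref{prop} is the essential tool here, and the $(1-2\log\beta)$ factor in the stated regret traces directly to its swap cost; the bound on $\ell_1$ by $B\cdot\ell_g$ from the paragraph following Proposition~\ref{prop} is exactly what closes this gap and would have failed under an $\ell_1$-partition. A secondary subtlety is that $V^{\bar{\mathcal{E}}_t}_{1,\pi}$ differs from $\mathbb{E}_t[V^\mathcal{E}_{1,\pi}]$ under random transitions, but the mismatch is absorbed through the $\tilde{\mathcal{E}}_t^*$ surrogate without degrading the order.
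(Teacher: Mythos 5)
Your inventory of ingredients (Proposition~\ref{prop}, Lemma~\ref{mutual}, the $\log K(\epsilon)$ information budget, the tuning of $\lambda$) is right, and your final bookkeeping of the three terms is consistent with the statement. But the architecture of your argument has a genuine gap at your second step, and it is precisely the step the paper's proof is structured to avoid. You claim that $\mathbb{E}_\pi^{\bar{\mathcal{E}}_t}[\sum_h r'_h]$ ``equals exactly'' the surrogate-IDS objective $\mathbb{E}_t[V_{1,\pi}^{\mathcal{E}}(s_1)]+\frac{\lambda}{2}\mathbb{I}_t^\pi(\tilde{\mathcal{E}}_t^*;(\mathcal{H}_t,\mathcal{R}_{t,H}))$, and from this you transfer the optimality of $\app^t$ (which holds only for the $\bar{r}$-objective) to approximate optimality for the surrogate-IDS objective over all $\pi$. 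Lemma~\ref{mutual} only gives the one-sided inequality $\sum_h\mathbb{E}_t[\mathbb{E}_\pi^{\bar{\mathcal{E}}_t}[\mathrm{KL}^h_{s_h,a_h}(\tilde{\mathcal{E}}_t^*,\bar{\mathcal{E}}_t)]]\le \mathbb{I}_t^\pi(\tilde{\mathcal{E}}_t^*;(\mathcal{H}_t,\mathcal{R}_{t,H}))$: the chain-rule terms carrying the information in the baseline trajectory $\tau_0^t$ and in the preference signal $o_t$ are discarded, and they are nonnegative but not $O(\epsilon)$. Hence $\mathbb{E}_\pi^{\bar{\mathcal{E}}_t}[\sum_h r'_h]$ is only a \emph{lower bound} on the surrogate-IDS objective, and approximate maximization of a lower bound does not imply approximate maximization of the objective itself; your displayed inequality $V_{1,\pi}^{\bar{\mathcal{E}}_t}+\frac{\lambda}{2}\mathbb{I}_t^{\pi}\le V_{1,\app^t}^{\bar{\mathcal{E}}_t}+\frac{\lambda}{2}\mathbb{I}_t^{\app^t}+\lambda\epsilon(1-2\log\beta)$ cannot be established with the paper's tools. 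A second problem infects your third step: the bound $\le\alpha^2H$ in the proof of Theorem~\ref{theorem} is proved for the TS information ratio $\Gamma_t^{\ts^t}$ and relies on the probability-matching property that $\ts^t$ and $\pi_{\mathcal{E}}^*$ are identically distributed given $\mathcal{D}_t$; there is no analogous bound for an information ratio of $\app^t$ itself.

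The paper's proof only ever uses the available direction. From the optimality of $\app^t$ for $\bar{r}$ together with $\bar{r}_h\ge r_h$ it obtains $\mathbb{E}_{\app^t}^{\bar{\mathcal{E}}_t}[\sum_h\bar{r}_h]\ge\mathbb{E}_{\ids^t}^{\bar{\mathcal{E}}_t}[\sum_h\bar{r}_h]\ge\mathbb{E}_t[V_{1,\ids^t}^{\mathcal{E}}(s_1^t)]$, whence
\begin{equation*}
\mathbb{E}_t\big[V_{1,\ids^t}^{\mathcal{E}}(s_1^t)\big]-\mathbb{E}_t\big[V_{1,\app^t}^{\mathcal{E}}(s_1^t)\big]\le \mathbb{E}_{\app^t}^{\bar{\mathcal{E}}_t}\Big[\sum_{h=1}^H\bar{r}_h(s_h,a_h)-r_h(s_h,a_h)\Big]\le \frac{\lambda\epsilon(1-2\log\beta)}{2}+\frac{\lambda}{2}\,\mathbb{I}_t^{\app^t}\big(\tilde{\mathcal{E}}_t^*;(\mathcal{H}_t,\mathcal{R}_{t,H})\big),
\end{equation*}
where the last step uses Proposition~\ref{prop} to swap $\bar{r}$ for $r'$ and then Lemma~\ref{mutual} in its correct (lower-bound-on-MI) direction applied to $\app^t$. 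Summing over $t$, bounding $\sum_t\mathbb{E}[\mathbb{I}_t^{\app^t}]\le\log K(\epsilon)$ as in Eqn.~\eqref{bound}, and adding the already-established bound~\eqref{e} on $BR_T(\ids)$ yields the theorem; no surrogate-environment reduction for $\app^t$, no per-policy optimality transfer, and no information ratio for $\app^t$ are ever needed. To salvage your route you would have to upper-bound the discarded mutual-information terms by $O(\epsilon)$, which is not available and which the paper never attempts.
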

\begin{proof}
First notice that 
\begin{equation}
\label{q}
\mathbb{E}_{\app^t}^{\bar{\mathcal{E}}_t}\bigg[ \sum_{h=1}^{H}\bar{r}_h(s_h,a_h) \bigg]
\overset{(a)}{\geq}
\mathbb{E}_{\ids^t}^{\bar{\mathcal{E}}_t}\bigg[ \sum_{h=1}^{H}\bar{r}_h(s_h,a_h) \bigg]\nonumber
\overset{(b)}{\geq }
\mathbb{E}_{\ids^t}^{\bar{\mathcal{E}}_t}\bigg[ \sum_{h=1}^{H}r_h(s_h,a_h) \bigg]\nonumber
\overset{(c)}{=}
\de_t\left[V_{1,\ids^t}^{\mathcal{E}}(s_1^t)\right],
\end{equation}
where (a) uses the optimality of $\app^t$, (b) follows from Eqn.~\eqref{rhbar},(c) uses  the definition of $\bar{\E}_t$ and the linearity of expectation. 

Therefore,
\begin{align}\label{w}
    \de_t\left[V_{1,\ids^t}^{\mathcal{E}}(s_1^t)\right]-\de_t\left[V_{1,\app^t}^{\mathcal{E}}(s_1^t)\right]&=\de_t\left[V_{1,\ids^t}^{\mathcal{E}}(s_1^t)\right]-\mathbb{E}_{\app^t}^{\bar{\mathcal{E}}_t}\bigg[ \sum_{h=1}^{H}r_h(s_h,a_h) \bigg]
    \nonumber\\&\leq \mathbb{E}_{\app^t}^{\bar{\mathcal{E}}_t}\bigg[ \sum_{h=1}^{H}\bar{r}_h(s_h,a_h)-r_h(s_h,a_h) \bigg]\nonumber\\
    &\leq \frac{\lambda\epsilon(1+ \textcolor{blue}{\log(B/\beta)})}{2}+\mathbb{E}_{\app^t}^{\bar{\mathcal{E}}_t}\bigg[ \sum_{h=1}^{H}r'_h(s_h,a_h)-r_h(s_h,a_h) \bigg]\nonumber\\
    &\leq \frac{\lambda\epsilon(1+ \textcolor{blue}{\log(B/\beta)})}{2}+\frac{\lambda}{2}\cdot\mathbb{I}_{t}^{\app^t}\left(\Tilde{\mathcal{E}}_t^*;(\mathcal{H}_t,\mathcal{R}_{t,H})\right),
\end{align}
where the first inequality is due to Eqn.~\eqref{q}, the second inequality is due to Proposition \ref{prop}, and the last inequality is due to Lemma \ref{mutual}.
Taking expectation in Eqn.~\eqref{w} with respect to $\mathcal{D}_t$ and then summing over $t\in[T]$, we obtain
\begin{equation}\label{r}
    BR_T(\app)-BR_T(\ids)\leq \frac{\lambda\epsilon(1+ \textcolor{blue}{\log(B/\beta)})T}{2}+\frac{\lambda}{2}\log(K(\epsilon)),
\end{equation}
where we use the same trick in Eqn.~\eqref{bound} to derive $\log(K(\epsilon))$ as an upper bound for $\sum_{t=1}^T \mathbb{I}_{t}^{\app^t}\left(\Tilde{\mathcal{E}}_t^*;(\mathcal{H}_t,\mathcal{R}_{t,H})\right)$. Finally, plugging the upper bound for $BR_T(\ids)$ (Eqn.~\eqref{e}) into  Eqn.~\eqref{r} and taking $\lambda=\sqrt{\alpha^2TH/2\log(K(\epsilon))}$
yields the proof of Theorem \ref{thm}.
\end{proof}

\subsection{\textcolor{blue}{Regret Analysis With $\ell_1$-distance}}
\textcolor{blue}{ 
Using the standard $\ell_1$-distance for both environment partitioning and surrogate environment construction, rather than the proposed $\ell_g$-distance, would necessitate significantly stronger and less realistic assumptions, and result in worse theoretical guarantees.
To ensure the validity of Proposition~\ref{prop}, the following stronger assumptions should be additionally imposed.
\begin{assumption}
     For any $P \in \Theta$,  there exist $\beta, B>0$ such that
	 \begin{equation}
		 \label{tmp4}
		\beta \leq \inf_{o,x} \{ P(x|o)\} \leq \sup_{o,x} \{ P(x|o)\} \leq B. 
	 \end{equation}
\end{assumption}
This assumption excludes zero probability densities for transitions/rewards, limiting its practical applicability significantly. Under this assumption, Proposition~\ref{prop} can be reformulated as follows.
\begin{proposition}
\label{prop_1}
For any $\epsilon$-value partition $\{\Theta_k^{\epsilon}\}_{k=1}^{K(\epsilon)}$ (partitioned by $\ell_1$-distance) and the  surrogate environment $\tilde{\mathcal{E}}_t^{*}$ (constructed by $\ell_1$-distance ),
 we have
\begin{equation}
    \bigg| \mathbb{E}_{\pi}^{\bar{\mathcal{E}}_t}\bigg[ \sum_{h=1}^{H}r'_h(s_h,a_h) \bigg]
    -\mathbb{E}_{\pi}^{\bar{\mathcal{E}}_t}\bigg[ \sum_{h=1}^{H}\bar{r}_h(s_h,a_h) \bigg]\bigg|  \leq \frac{\lambda}{2} \epsilon (1+\frac{B}{\beta}).
\end{equation}
\end{proposition}
The proof of Proposition~\ref{prop_1} can be found in Appendix~\ref{propproof_1}. Using Proposition~\ref{prop_1},  the Bayesian regret bound for Algorithm~\ref{alg2} under $\ell_1$-distance is of order 
\[ \tilde{O}(\frac{1}{\beta}\sqrt{H^3T}) .\]
In comparison, the regret bound derived using  $\ell_g$  is of order 
\[ \tilde{O}((\log\frac{1}{\beta})\sqrt{H^3T}) .\]
Although we have also derived a regret bound of sublinear dependence on $H$ and $T$ using the $\ell_1$-distance, this bound includes a factor of $\frac{1}{\beta}$. When $\beta$ is small, the bound becomes excessively large. Moreover, the underlying assumption  of this bound contradicts real-world scenarios, as common MDPs permit zero transition and reward probabilities in practical settings—such as the boundaries in Go or obstacles in robotic navigation. In light of these limitations, our proposed $\ell_g$-based approach demonstrates significant superiority. }

\section{Conclusion}

In this paper, we introduced novel information-directed sampling (IDS) algorithms to address key challenges in the RLHF problem, a critical component of LLM training. Our method improves the sample efficiency by maximizing both the value function and the mutual information between the  (surrogate) environment and trajectories. We also developed a computationally efficient Approximate-IDS algorithm suitable for real-world applications while maintaining the regret bound order of the original method. A potentially practical implication of our sample-efficient algorithms is their ability to align LLMs to human values with less human feedback while maintaining similar performance, thereby reducing the cost and time of LLM training. Additionally, our findings highlight the value of information theory in the rapidly evolving era of LLMs.


\bibliographystyle{ims}
\bibliography{main}

\begin{thebibliography}{41}
\expandafter\ifx\csname natexlab\endcsname\relax\def\natexlab#1{#1}\fi
\expandafter\ifx\csname url\endcsname\relax
  \def\url#1{\texttt{#1}}\fi
\expandafter\ifx\csname urlprefix\endcsname\relax\def\urlprefix{}\fi

\bibitem[{Achiam et~al.(2023)Achiam, Adler, Agarwal, Ahmad, Akkaya, Aleman, Almeida, Altenschmidt, Altman, Anadkat et~al.}]{achiam2023gpt}
\text{Achiam, J.}, \text{Adler, S.}, \text{Agarwal, S.}, \text{Ahmad, L.}, \text{Akkaya, I.}, \text{Aleman, F.~L.}, \text{Almeida, D.}, \text{Altenschmidt, J.}, \text{Altman, S.}, \text{Anadkat, S.} \text{et~al.} (2023).
\newblock Gpt-4 technical report.
\newblock \textit{arXiv preprint arXiv:2303.08774}.

\bibitem[{Bai et~al.(2025)Bai, Zhang, Qiu, Zhang, Xu and Li}]{bai2025online}
\text{Bai, C.}, \text{Zhang, Y.}, \text{Qiu, S.}, \text{Zhang, Q.}, \text{Xu, K.} and \text{Li, X.} (2025).
\newblock Online preference alignment for language models via count-based exploration.
\newblock In \textit{International Conference on Learning Representations (ICLR)}.

\bibitem[{Bradley and Terry(1952)}]{bradley1952rank}
\text{Bradley, R.~A.} and \text{Terry, M.~E.} (1952).
\newblock Rank analysis of incomplete block designs: I. the method of paired comparisons.
\newblock \textit{Biometrika}, \textbf{39} 324--345.

\bibitem[{Bubeck and Sellke(2020)}]{bubeck2020first}
\text{Bubeck, S.} and \text{Sellke, M.} (2020).
\newblock First-order bayesian regret analysis of thompson sampling.
\newblock In \textit{Algorithmic Learning Theory}. PMLR.

\bibitem[{Busa-Fekete et~al.(2014)Busa-Fekete, Sz{\"o}r{\'e}nyi, Weng, Cheng and H{\"u}llermeier}]{busa2014preference}
\text{Busa-Fekete, R.}, \text{Sz{\"o}r{\'e}nyi, B.}, \text{Weng, P.}, \text{Cheng, W.} and \text{H{\"u}llermeier, E.} (2014).
\newblock Preference-based reinforcement learning: evolutionary direct policy search using a preference-based racing algorithm.
\newblock \textit{Machine learning}, \textbf{97} 327--351.

\bibitem[{Chen et~al.(2022)Chen, Zhong, Yang, Wang and Wang}]{chen2022human}
\text{Chen, X.}, \text{Zhong, H.}, \text{Yang, Z.}, \text{Wang, Z.} and \text{Wang, L.} (2022).
\newblock Human-in-the-loop: Provably efficient preference-based reinforcement learning with general function approximation.
\newblock In \textit{International Conference on Machine Learning}. PMLR.

\bibitem[{Dong and Van~Roy(2018)}]{dong2018information}
\text{Dong, S.} and \text{Van~Roy, B.} (2018).
\newblock An information-theoretic analysis for thompson sampling with many actions.
\newblock \textit{Advances in Neural Information Processing Systems}, \textbf{31}.

\bibitem[{Foster et~al.(2021)Foster, Kakade, Qian and Rakhlin}]{foster2021statistical}
\text{Foster, D.~J.}, \text{Kakade, S.~M.}, \text{Qian, J.} and \text{Rakhlin, A.} (2021).
\newblock The statistical complexity of interactive decision making.
\newblock \textit{arXiv preprint arXiv:2112.13487}.

\bibitem[{F{\"u}rnkranz et~al.(2012)F{\"u}rnkranz, H{\"u}llermeier, Cheng and Park}]{furnkranz2012preference}
\text{F{\"u}rnkranz, J.}, \text{H{\"u}llermeier, E.}, \text{Cheng, W.} and \text{Park, S.-H.} (2012).
\newblock Preference-based reinforcement learning: a formal framework and a policy iteration algorithm.
\newblock \textit{Machine learning}, \textbf{89} 123--156.

\bibitem[{Ghosal and van~der Vaart(2017)}]{ghosal2017fundamentals}
\text{Ghosal, S.} and \text{van~der Vaart, A.~W.} (2017).
\newblock \textit{Fundamentals of nonparametric Bayesian inference}, vol.~44.
\newblock Cambridge University Press.

\bibitem[{Haninger et~al.(2022)Haninger, Hegeler and Peternel}]{haninger2022model}
\text{Haninger, K.}, \text{Hegeler, C.} and \text{Peternel, L.} (2022).
\newblock Model predictive control with gaussian processes for flexible multi-modal physical human robot interaction.
\newblock In \textit{2022 international conference on robotics and automation (ICRA)}. IEEE.

\bibitem[{Haninger et~al.(2023)Haninger, Hegeler and Peternel}]{haninger2023model}
\text{Haninger, K.}, \text{Hegeler, C.} and \text{Peternel, L.} (2023).
\newblock Model predictive impedance control with gaussian processes for human and environment interaction.
\newblock \textit{Robotics and Autonomous Systems}, \textbf{165} 104431.

\bibitem[{Hao and Lattimore(2022)}]{hao2022regret}
\text{Hao, B.} and \text{Lattimore, T.} (2022).
\newblock Regret bounds for information-directed reinforcement learning.
\newblock \textit{Advances in neural information processing systems}, \textbf{35} 28575--28587.

\bibitem[{Hao et~al.(2021)Hao, Lattimore and Deng}]{hao2021information}
\text{Hao, B.}, \text{Lattimore, T.} and \text{Deng, W.} (2021).
\newblock Information directed sampling for sparse linear bandits.
\newblock \textit{Advances in Neural Information Processing Systems}, \textbf{34} 16738--16750.

\bibitem[{Hao et~al.(2022)Hao, Lattimore and Qin}]{hao2022contextual}
\text{Hao, B.}, \text{Lattimore, T.} and \text{Qin, C.} (2022).
\newblock Contextual information-directed sampling.
\newblock In \textit{International Conference on Machine Learning}. PMLR.

\bibitem[{Ji et~al.(2024)Ji, He and Gu}]{ji2024reinforcement}
\text{Ji, K.}, \text{He, J.} and \text{Gu, Q.} (2024).
\newblock Reinforcement learning from human feedback with active queries.
\newblock \textit{arXiv preprint arXiv:2402.09401}.

\bibitem[{Kirschner et~al.(2021)Kirschner, Lattimore, Vernade and Szepesv{\'a}ri}]{kirschner2021asymptotically}
\text{Kirschner, J.}, \text{Lattimore, T.}, \text{Vernade, C.} and \text{Szepesv{\'a}ri, C.} (2021).
\newblock Asymptotically optimal information-directed sampling.
\newblock In \textit{Conference on Learning Theory}. PMLR.

\bibitem[{Li et~al.(2024)Li, Zhao and Gu}]{li2024feel}
\text{Li, X.}, \text{Zhao, H.} and \text{Gu, Q.} (2024).
\newblock Feel-good thompson sampling for contextual dueling bandits.
\newblock \textit{arXiv preprint arXiv:2404.06013}.

\bibitem[{Liu et~al.(2018)Liu, Buccapatnam and Shroff}]{liu2018information}
\text{Liu, F.}, \text{Buccapatnam, S.} and \text{Shroff, N.} (2018).
\newblock Information directed sampling for stochastic bandits with graph feedback.
\newblock In \textit{Proceedings of the AAAI Conference on Artificial Intelligence}, vol.~32.

\bibitem[{Moradipari et~al.(2023)Moradipari, Pedramfar, Shokrian~Zini and Aggarwal}]{moradipari2023improved}
\text{Moradipari, A.}, \text{Pedramfar, M.}, \text{Shokrian~Zini, M.} and \text{Aggarwal, V.} (2023).
\newblock Improved bayesian regret bounds for thompson sampling in reinforcement learning.
\newblock \textit{Advances in Neural Information Processing Systems}, \textbf{36} 23557--23569.

\bibitem[{Osband et~al.(2013)Osband, Russo and Van~Roy}]{osband2013more}
\text{Osband, I.}, \text{Russo, D.} and \text{Van~Roy, B.} (2013).
\newblock (more) efficient reinforcement learning via posterior sampling.
\newblock \textit{Advances in Neural Information Processing Systems}, \textbf{26}.

\bibitem[{Ouyang et~al.(2022)Ouyang, Wu, Jiang, Almeida, Wainwright, Mishkin, Zhang, Agarwal, Slama, Ray et~al.}]{ouyang2022training}
\text{Ouyang, L.}, \text{Wu, J.}, \text{Jiang, X.}, \text{Almeida, D.}, \text{Wainwright, C.}, \text{Mishkin, P.}, \text{Zhang, C.}, \text{Agarwal, S.}, \text{Slama, K.}, \text{Ray, A.} \text{et~al.} (2022).
\newblock Training language models to follow instructions with human feedback.
\newblock \textit{Advances in neural information processing systems}, \textbf{35} 27730--27744.

\bibitem[{Pacchiano et~al.(2021)Pacchiano, Saha and Lee}]{pacchiano2021dueling}
\text{Pacchiano, A.}, \text{Saha, A.} and \text{Lee, J.} (2021).
\newblock Dueling rl: reinforcement learning with trajectory preferences.
\newblock \textit{arXiv preprint arXiv:2111.04850}.

\bibitem[{Rafailov et~al.(2024)Rafailov, Sharma, Mitchell, Manning, Ermon and Finn}]{rafailov2024direct}
\text{Rafailov, R.}, \text{Sharma, A.}, \text{Mitchell, E.}, \text{Manning, C.~D.}, \text{Ermon, S.} and \text{Finn, C.} (2024).
\newblock Direct preference optimization: Your language model is secretly a reward model.
\newblock \textit{Advances in Neural Information Processing Systems}, \textbf{36}.

\bibitem[{Russo and Van~Roy(2014)}]{russo2014learning}
\text{Russo, D.} and \text{Van~Roy, B.} (2014).
\newblock Learning to optimize via information-directed sampling.
\newblock \textit{Advances in neural information processing systems}, \textbf{27}.

\bibitem[{Russo and Van~Roy(2016)}]{russo2016information}
\text{Russo, D.} and \text{Van~Roy, B.} (2016).
\newblock An information-theoretic analysis of thompson sampling.
\newblock \textit{Journal of Machine Learning Research}, \textbf{17} 1--30.

\bibitem[{Saha(2021)}]{saha2021optimal}
\text{Saha, A.} (2021).
\newblock Optimal algorithms for stochastic contextual preference bandits.
\newblock \textit{Advances in Neural Information Processing Systems}, \textbf{34} 30050--30062.

\bibitem[{Saha et~al.(2023)Saha, Pacchiano and Lee}]{saha2023dueling}
\text{Saha, A.}, \text{Pacchiano, A.} and \text{Lee, J.} (2023).
\newblock Dueling rl: Reinforcement learning with trajectory preferences.
\newblock In \textit{International Conference on Artificial Intelligence and Statistics}. PMLR.

\bibitem[{Schulman et~al.(2017)Schulman, Wolski, Dhariwal, Radford and Klimov}]{schulman2017proximal}
\text{Schulman, J.}, \text{Wolski, F.}, \text{Dhariwal, P.}, \text{Radford, A.} and \text{Klimov, O.} (2017).
\newblock Proximal policy optimization algorithms.
\newblock \textit{arXiv preprint arXiv:1707.06347}.

\bibitem[{Sekhari et~al.(2024)Sekhari, Sridharan, Sun and Wu}]{sekhari2024contextual}
\text{Sekhari, A.}, \text{Sridharan, K.}, \text{Sun, W.} and \text{Wu, R.} (2024).
\newblock Contextual bandits and imitation learning with preference-based active queries.
\newblock \textit{Advances in Neural Information Processing Systems}, \textbf{36}.

\bibitem[{Taranovic et~al.(2022)Taranovic, Kupcsik, Freymuth and Neumann}]{taranovic2022adversarial}
\text{Taranovic, A.}, \text{Kupcsik, A.~G.}, \text{Freymuth, N.} and \text{Neumann, G.} (2022).
\newblock Adversarial imitation learning with preferences.
\newblock In \textit{The Eleventh International Conference on Learning Representations}.

\bibitem[{Tossou et~al.(2019)Tossou, Basu and Dimitrakakis}]{tossou2019nearoptimaloptimisticreinforcementlearning}
\text{Tossou, A.}, \text{Basu, D.} and \text{Dimitrakakis, C.} (2019).
\newblock Near-optimal optimistic reinforcement learning using empirical bernstein inequalities.
\newblock \textit{arXiv preprint arXiv:1905.12425}.

\bibitem[{Touvron et~al.(2023)Touvron, Martin, Stone, Albert, Almahairi, Babaei, Bashlykov, Batra, Bhargava, Bhosale et~al.}]{touvron2023llama}
\text{Touvron, H.}, \text{Martin, L.}, \text{Stone, K.}, \text{Albert, P.}, \text{Almahairi, A.}, \text{Babaei, Y.}, \text{Bashlykov, N.}, \text{Batra, S.}, \text{Bhargava, P.}, \text{Bhosale, S.} \text{et~al.} (2023).
\newblock Llama 2: Open foundation and fine-tuned chat models.
\newblock \textit{arXiv preprint arXiv:2307.09288}.

\bibitem[{Wu and Sun(2023)}]{wu2023making}
\text{Wu, R.} and \text{Sun, W.} (2023).
\newblock Making rl with preference-based feedback efficient via randomization.
\newblock \textit{arXiv preprint arXiv:2310.14554}.

\bibitem[{Wu et~al.(2021)Wu, Wang, Evans, Tenenbaum, Parkes and Kleiman-Weiner}]{wu2021too}
\text{Wu, S.~A.}, \text{Wang, R.~E.}, \text{Evans, J.~A.}, \text{Tenenbaum, J.~B.}, \text{Parkes, D.~C.} and \text{Kleiman-Weiner, M.} (2021).
\newblock Too many cooks: Bayesian inference for coordinating multi-agent collaboration.
\newblock \textit{Topics in Cognitive Science}, \textbf{13} 414--432.

\bibitem[{Xie et~al.(2024)Xie, Foster, Krishnamurthy, Rosset, Awadallah and Rakhlin}]{xie2024exploratorypreferenceoptimizationharnessing}
\text{Xie, T.}, \text{Foster, D.~J.}, \text{Krishnamurthy, A.}, \text{Rosset, C.}, \text{Awadallah, A.} and \text{Rakhlin, A.} (2024).
\newblock Exploratory preference optimization: Harnessing implicit q*-approximation for sample-efficient rlhf.
\newblock \textit{arXiv preprint arXiv:2405.21046}.

\bibitem[{Xu et~al.(2020)Xu, Wang, Yang, Singh and Dubrawski}]{xu2020preference}
\text{Xu, Y.}, \text{Wang, R.}, \text{Yang, L.}, \text{Singh, A.} and \text{Dubrawski, A.} (2020).
\newblock Preference-based reinforcement learning with finite-time guarantees.
\newblock \textit{Advances in Neural Information Processing Systems}, \textbf{33} 18784--18794.

\bibitem[{Ye et~al.(2024)Ye, Xiong, Zhang, Jiang and Zhang}]{ye2024theoretical}
\text{Ye, C.}, \text{Xiong, W.}, \text{Zhang, Y.}, \text{Jiang, N.} and \text{Zhang, T.} (2024).
\newblock A theoretical analysis of nash learning from human feedback under general kl-regularized preference.
\newblock \textit{arXiv preprint arXiv:2402.07314}.

\bibitem[{Zhang et~al.(2024)Zhang, Bai, Hu, Wang and Li}]{zhang2024provably}
\text{Zhang, Q.}, \text{Bai, C.}, \text{Hu, S.}, \text{Wang, Z.} and \text{Li, X.} (2024).
\newblock Provably efficient information-directed sampling algorithms for multi-agent reinforcement learning.
\newblock \textit{arXiv preprint arXiv:2404.19292}.

\bibitem[{Zhu et~al.(2023)Zhu, Jordan and Jiao}]{zhu2023principled}
\text{Zhu, B.}, \text{Jordan, M.} and \text{Jiao, J.} (2023).
\newblock Principled reinforcement learning with human feedback from pairwise or k-wise comparisons.
\newblock In \textit{International Conference on Machine Learning}. PMLR.

\bibitem[{Ziegler et~al.(2019)Ziegler, Stiennon, Wu, Brown, Radford, Amodei, Christiano and Irving}]{ziegler2019fine}
\text{Ziegler, D.~M.}, \text{Stiennon, N.}, \text{Wu, J.}, \text{Brown, T.~B.}, \text{Radford, A.}, \text{Amodei, D.}, \text{Christiano, P.} and \text{Irving, G.} (2019).
\newblock Fine-tuning language models from human preferences.
\newblock \textit{arXiv preprint arXiv:1909.08593}.

\end{thebibliography}

\newpage
\appendix

\section{Proofs of Theorem and Proposition}
\label{appendix_A}
\subsection{Proof of Theorem \ref{theorem}}\label{proof_theorem1} 
\begin{unnumberedtheorem}
    Given a Bayesian RLHF problem, for any $\epsilon>0$ and sufficiently large $T$, by choosing $\lambda=\sqrt{\alpha^2TH/\log(K(\epsilon))}$, we have
    \begin{equation}
        BR_T(\ids)\leq \alpha \sqrt{TH\log(K(\epsilon))}+T\epsilon+T_0,
    \end{equation}
    where $T_0$ is a fixed positive integer that is independent of $T$.  
\end{unnumberedtheorem}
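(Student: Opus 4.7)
The plan is to follow the information-ratio framework of Russo--Van Roy, adapted to RLHF with the surrogate environment $\tilde{\E}_t^*$ as the learning target. I would decompose the per-episode regret by inserting the Thompson Sampling policy $\ts^t$ as an intermediate reference:
\[
\de_t\bigl[V_{1,\pi_\E^*}^{\E}(s_1^t)-V_{1,\ids^t}^{\E}(s_1^t)\bigr]=\underbrace{\de_t\bigl[V_{1,\pi_\E^*}^{\E}-V_{1,\ts^t}^{\E}\bigr]}_{\text{(I)}}+\underbrace{\de_t\bigl[V_{1,\ts^t}^{\E}-V_{1,\ids^t}^{\E}\bigr]}_{\text{(II)}}.
\]
Term (II) is controlled directly by the optimality of $\ids^t$: plugging $\pi=\ts^t$ (a valid stochastic policy conditional on $\md_t$) into the IDS objective and rearranging yields $\text{(II)}\leq\tfrac{\lambda}{2}\bigl[\I_t^{\ids^t}(\tilde{\E}_t^*;\cdot)-\I_t^{\ts^t}(\tilde{\E}_t^*;\cdot)\bigr]$.

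For term (I), I would first apply Lemma \ref{partition}(2) to replace $\E$ by $\tilde{\E}_t^*$ at an additive cost of $\epsilon$, and then prove a TS information-ratio bound
\[
\bigl(\de_t\bigl[V_{1,\pi_\E^*}^{\tilde{\E}_t^*}(s_1^t)-V_{1,\ts^t}^{\tilde{\E}_t^*}(s_1^t)\bigr]\bigr)^2\leq \alpha^2 H\cdot \I_t^{\ts^t}\bigl(\tilde{\E}_t^*;(\mathcal{H}_t,\mathcal{R}_{t,H})\bigr).
\]
To prove this bound, I would use the value-decomposition Lemma \ref{lemma} to telescope the value difference into a sum over layers of per-step conditional mismatches between the posterior-mean predictive kernels and the surrogate kernels; then apply Cauchy--Schwarz across layers together with a Pinsker-type step to convert each per-layer mismatch into the square root of the corresponding conditional mutual information. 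The value-diameter $\alpha$ enters through the $L^\infty$-range of $V_{h,\ts^t}^{\tilde{\E}_t^*}$ and the factor $H$ from summing over the $H$ layers. Applying AM--GM as $\sqrt{\alpha^2 H\cdot \I_t^{\ts^t}}\leq \tfrac{\alpha^2 H}{2\lambda}+\tfrac{\lambda}{2}\I_t^{\ts^t}$ and adding (I)$+$(II), the $\I_t^{\ts^t}$ terms cancel exactly, leaving
\[
\de_t\bigl[V_{1,\pi_\E^*}^\E-V_{1,\ids^t}^\E\bigr]\leq \epsilon+\frac{\alpha^2 H}{2\lambda}+\frac{\lambda}{2}\I_t^{\ids^t}\bigl(\tilde{\E}_t^*;(\mathcal{H}_t,\mathcal{R}_{t,H})\bigr).
\]

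Summing over $t\in[T]$ and taking total expectation, the mutual-information sum is controlled by the chain rule together with the observation that $\tilde{\E}_t^*$ is measurable with respect to the fixed discrete partition index $k(\E)\in[K(\epsilon)]$; hence $\sum_{t=1}^T \de\bigl[\I_t^{\ids^t}(\tilde{\E}_t^*;\cdot)\bigr]\leq \mathrm{Ent}(k(\E))\leq \log K(\epsilon)$, where $\mathrm{Ent}(\cdot)$ denotes Shannon entropy. Optimizing $\lambda=\alpha\sqrt{HT/\log K(\epsilon)}$ balances the two resulting terms and yields the claimed bound $\alpha\sqrt{HT\log K(\epsilon)}+T\epsilon+T_0$.

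The main obstacle I expect is establishing the TS information-ratio bound. This is delicate because $\mathcal{R}_{t,H}$ in the observation is never actually collected by the algorithm (only the Bradley--Terry preference $o_t$ is), so the conversion from per-layer covariance to mutual information must be done carefully within the Bayesian model. Moreover, showing that the ratio truly scales as $\alpha^2 H$ rather than carrying extra factors from the sigmoid link requires leveraging Assumption \ref{assumption3} on posterior consistency to ensure that the posterior mean predictive distribution is, for all but finitely many episodes, close enough to the surrogate kernel for the covariance-to-KL conversion to be tight. This short burn-in period, whose length depends only on the posterior consistency rate and not on $T$, is what accounts for the additive constant $T_0$ in the theorem statement.
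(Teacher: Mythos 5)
Your overall architecture matches the paper's: bound $BR_T(\ids)$ by the TS regret via the optimality of the IDS objective, reduce to the surrogate environment at cost $T\epsilon$ using Lemma~\ref{partition}(2), control the cumulative mutual information by $\log K(\epsilon)$ through the entropy of the partition index $\zeta$, and balance with $\lambda$. The gap is in the step you yourself flag as the main obstacle: the information-ratio bound $\bigl(\de_t\bigl[V_{1,\pi_\E^*}^{\tilde{\E}_t^*}(s_1^t)-V_{1,\ts^t}^{\tilde{\E}_t^*}(s_1^t)\bigr]\bigr)^2\leq \alpha^2 H\cdot \I_t^{\ts^t}\bigl(\tilde{\E}_t^*;(\mathcal{H}_t,\mathcal{R}_{t,H})\bigr)$. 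A layer-wise telescoping plus Cauchy--Schwarz plus Pinsker, with $\alpha$ entering only through the range of the value function, does not deliver this constant: after Lemma~\ref{lemma} the value gap is an integral of the per-$(s,a)$ kernel mismatch weighted by the occupancy density $d_{h,\pi_\E^*}^{\bar{\E}_t}$ of the \emph{optimal} policy, whereas the mutual information is an expected KL weighted by the occupancy of the \emph{executed} policy $\ts^t$. The paper bridges these by (i) the distributional identity of $\pi_\E^*$ and $\ts^t$ given $\md_t$, together with $\de_t[\tilde{\E}_t^*]=\bar{\E}_t$ (Eqn.~\eqref{b7}), to ``unify the policy,'' and (ii) a Cauchy--Schwarz over $[H]\times\Theta\times\mathcal{S}\times\mathcal{A}$ weighted by occupancy densities, which factors the ratio into $\mathcal{I}^t$ (handled by Pinsker and Lemma~\ref{mutual}, giving $\mathcal{I}^t\leq\tfrac12\I_t^{\ts^t}$) and a second term $\mathcal{T}^t=\sum_h\int \de_t\bigl[\alpha_\E^2 (d_{h,\pi_\E^*}^{\bar{\E}_t})^2\bigr]/\de_t\bigl[d_{h,\pi_\E^*}^{\bar{\E}_t}\bigr]$. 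Crucially, $\mathcal{T}^t$ is \emph{not} bounded by $\alpha^2H$ per episode --- in general it is only bounded by $M_dH^3$ --- so the per-episode, $\md_t$-conditional ratio bound of $\alpha^2 H$ that you assert is not available; without the occupancy-measure splitting one instead lands on bounds carrying extra state-action factors, as in \citet{hao2022regret}.

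This is also where your account of $T_0$ and posterior consistency goes astray. Assumption~\ref{assumption3} is not used to make the ``covariance-to-KL conversion tight'' nor to absorb sigmoid-link factors; it is used, via Doob's consistency theorem (Lemma~\ref{strong}) and dominated convergence, to show $\lim_{t\to\infty}\de[\mathcal{T}^t]\leq\alpha^2 H$ --- i.e., the posterior spread of the optimal policy's occupancy measure collapses around the true environment --- whence $\de[\mathcal{T}^t]\leq 2\alpha^2 H$ for all $t>T_0$. That is the actual origin of the additive $T_0$. Note also the internal tension in your write-up: if the information-ratio bound held conditionally on $\md_t$ for every $t$ with constant $\alpha^2H$, as stated, no burn-in would be needed at all. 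To repair the proof you would replace your claimed ratio bound by $\Gamma_t^{\ts^t}\leq\tfrac12\mathcal{T}^t$ and then carry out the consistency argument of Step 4 of the paper's proof.
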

\begin{proof}
We divide the proof into 5 steps. First, we point out that by the law of total expectation, we can rewrite the Bayesian regret as 
\begin{equation}\label{3_2}
    BR_T(\pi_{\text{IDS}})=\sum_{t=1}^{T} \mathbb{E}_{\mathcal{D}_t}\left[ \mathbb{E}_{\mathcal{E} \sim \mathbb{P}(\cdot|\mathcal{D}_t)}\left[V_{1,\pi_{\mathcal{E}}^{*}}^{\mathcal{E}}(s_1^t)-V_{1,\pi^{t}}^{\mathcal{E}}(s_1^t)\right]\right],
\end{equation}
whose form is more convenient for analysis.


\textbf{Step 1.} Reduce $BR_T(\ids)$ to the surrogate environment, and convert $BR_T(\ids)$ into $BR_T(\ts)$. 
By Lemma \ref{partition} and the optimality of $\ids$, we have 
\begin{equation}
    \begin{aligned}
     \label{step1}
    BR_T(\ids)&=\sum_{t=1}^{T} \de_{\mathcal{D}_t}\left[ \mathbb{E}_{\mathcal{E} \sim \mathbb{P}(\cdot|\mathcal{D}_t)}\left[V_{1,\pi_{\mathcal{E}}^{*}}^{\mathcal{E}}(s_1^t)-V_{1,\ids^{t}}^{\mathcal{E}}(s_1^t)\right]\right]\\
&=\sum_{t=1}^{T}\de_{\mathcal{D}_t}\left[\mathbb{E}_{t}\left[V_{1,\pi_{\mathcal{E}}^{*}}^{\mathcal{E}}(s_1^t)-V_{1,\ids^{t}}^{\mathcal{E}}(s_1^t)\right]-\epsilon-\frac{\lambda}{2}\mathbb{I}_{t}^{\ids^t}\left(\tilde{\mathcal{E}}_{t}^{*};(\mathcal{H}_t,\mathcal{R}_{t,H})\right)\right]\\
&\qquad +\frac{\lambda}{2}\sum_{t=1}^{T}\de_{\mathcal{D}_t}\left[\mathbb{I}_{t}^{\ids^t}\left(\tilde{\mathcal{E}}_{t}^{*};(\mathcal{H}_t,\mathcal{R}_{t,H})\right)\right]+T\epsilon\\
    &\overset{(a)}{\leq} \sum_{t=1}^{T}\de_{\mathcal{D}_t}\left[\mathbb{E}_{t}\left[V_{1,\pi_{\mathcal{E}}^{*}}^{\mathcal{E}}(s_1^t)-V_{1,\ts^{t}}^{\mathcal{E}}(s_1^t)\right]-\epsilon-\frac{\lambda}{2}\mathbb{I}_{t}^{\ts^t}\left(\tilde{\mathcal{E}}_{t}^{*};(\mathcal{H}_t,\mathcal{R}_{t,H})\right)\right]\\
    &\qquad +\frac{\lambda}{2}\sum_{t=1}^{T}\de_{\mathcal{D}_t}\left[\mathbb{I}_{t}^{\ids^t}\left(\tilde{\mathcal{E}}_{t}^{*};(\mathcal{H}_t,\mathcal{R}_{t,H})\right)\right]+T\epsilon\\
    &\overset{(b)}{\leq}\sum_{t=1}^{T}\de_{\mathcal{D}_t}\left[\mathbb{E}_{t}\left[V_{1,\pi_{\mathcal{E}}^{*}}^{\tilde{\E}_t^*}(s_1^t)-V_{1,\ts^{t}}^{\tilde{\E}_t^*}(s_1^t)\right]-\frac{\lambda}{2}\mathbb{I}_{t}^{\ts^t}\left(\tilde{\mathcal{E}}_{t}^{*};(\mathcal{H}_t,\mathcal{R}_{t,H})\right)\right]\\
    &\qquad +\frac{\lambda}{2}\sum_{t=1}^{T}\de_{\mathcal{D}_t}\left[\mathbb{I}_{t}^{\ids^t}\left(\tilde{\mathcal{E}}_{t}^{*};(\mathcal{H}_t,\mathcal{R}_{t,H})\right)\right]+T\epsilon,  
    \end{aligned}
\end{equation}
where (a) uses the optimality of $\ids^t$, (b) uses  Lemma \ref{partition}.

For the first term in Eqn.~\eqref{step1}, using the basic fact that $A-\lambda B/2\leq A^2/2\lambda B$ for $B,\lambda\geq0$, we have 
\begin{equation}
    \de_{t}\left[V_{1,\pi_{\E}^{*}}^{\tilde{\E}_t^*}(s_1^t)-V_{1,\ts^{t}}^{\tilde{\E}_t^*}(s_1^t)\right]-\frac{\lambda}{2}\mathbb{I}_{t}^{\ts^t}\left(\tilde{\E}_{t}^{*};(\mathcal{H}_t,\mathcal{R}_{t,H})\right)\leq\frac{1}{2\lambda}\frac{\left(\de_t\left[ V_{1,\pi_{\E}^{*}}^{\tilde{\E}_{t}^{*}}(s_1^t)-V_{1,\ts^{t}}^{\tilde{\E}_{t}^{*}}(s_1^t) \right]\right)^2}{ \mathbb{I}_{t}^{\ts^t}\left(\tilde{\mathcal{E}}_{t}^{*};(\mathcal{H}_t,\mathcal{R}_{t,H})\right)}\triangleq \frac{1}{2\lambda}\Gamma_t^{\ts^t}.
\end{equation}
where we introduce the tool of \emph{information ratio} $\Gamma_t^{\ts^t}$ for ease of analysis.

Let $\zeta$ be a discrete random variable taking values in $\{1,...,K(\epsilon)\}$ such that $\zeta=k$ if and only if  $\mathcal{E} \in \Theta_k^{\epsilon}$. 
From the construction of the surrogate environment (Eqn.~\eqref{lemma_tmp1}),  the distribution of $\tilde{\mathcal{E}}_t^{*}$ depend on $\E$ only through $\zeta$, i.e., $\tilde{\mathcal{E}}_t^{*}$ and $\E$ are independent conditioning on $\zeta$. 

For the second term in Eqn.~\eqref{step1}, we have
\begin{align}
\label{bound}
    \sum_{t=1}^{T}\de_{\mathcal{D}_t}\left[\mathbb{I}_{t}^{\ids^t}\left(\tilde{\mathcal{E}}_{t}^{*};(\mathcal{H}_t,\mathcal{R}_{t,H})\right)\right]
    \leq\sum_{t=1}^{T}\de_{\mathcal{D}_t}\left[\mathbb{I}_{t}^{\ids^t}\left(\zeta;(\mathcal{H}_t,\mathcal{R}_{t,H})\right)\right]
    =\mathbb{I}(\zeta;\md_{T+1})
    \leq \mathbb{H}(\zeta)
    \leq \log(K(\epsilon)),
\end{align}
where the first inequality is due to data processing inequality, the second equality is due to the chain rule of mutual information, and the last two inequalities follow from the basic definition of entropy. Therefore, we derive an upper bound for $BR_T(\ids)$ as follows 
\begin{equation}
    BR_T(\ids)\leq \frac{1}{2\lambda}\de\left[\sum_{t=1}^{T}\Gamma_t^{\ts^t}\right]+\frac{\lambda}{2}\log(K(\epsilon))+T\epsilon.
\end{equation}

\textbf{Step 2} (Bound $\Gamma_t^{\ts^t}$). Before stepping into technical details, we need to introduce several concepts. First, 
the state-action occupancy function $d_{h,\pi}^{\mathcal{E}}:\mathcal{S}\times\mathcal{A}\to\mathbb{R}$ at step $h$ under policy $\pi$ and environment $\E$, is defined as the Radon-Nikodym derivative of the state-action occupancy measure $\mathbb{P}_{\pi}^{\mathcal{E}}((s_h,a_h)=\cdot)$ with regard to the base probability measure $\mu_{\mathcal{S}\times\mathcal{A}}$ on the product space $\mathcal{S}\times\mathcal{A}$, i.e.,
\begin{equation*}
    d_{h,\pi}^{\mathcal{E}}(s,a)\triangleq\frac{\mathrm{d}\mathbb{P}_{\pi}^{\mathcal{E}}(s_h=s,a_h=a)}{\mathrm{d}\mu_{\mathcal{S}\times\mathcal{A}}}.
\end{equation*}
For convenience of analysis, we assume that $d_{h,\pi}^{\mathcal{E}}(s,a)$ is measurable and upper bounded for all $\pi,\E,s,a,h$. Recall that, the mean environment $\bar{\mathcal{E}}_t$ is defined to satisfy $P_h^{\bar{\mathcal{E}}_t}(\cdot|s,a)=\mathbb{E}_t[P_h^{\mathcal{E}}(\cdot |s,a)]$ and $R_h^{\bar{\mathcal{E}}_t}(\cdot|s,a)=\mathbb{E}_t[R_h^{\mathcal{E}}(\cdot |s,a)]$ for all $s\in\mathcal{S}$ and $a\in\mathcal{A}$. By the definition of $d_{h,\pi}^\E$, the following equality also holds: $d_{h,\pi}^{\bar{\mathcal{E}}_t}(s,a)=\mathbb{E}_t[d_{h,\pi}^{\mathcal{E}}(s,a)]$. 
One important property of the mean environment is that the posterior mean of the surrogate environment $\de_t[\tilde{\E}_t^*]$ coincides with that of the whole environment $\bar{\E}_t$. To check this, using the property of conditional expectation:
\begin{align}
    \de_t[\tilde{\mathcal{E}}_t^{*}]&=\sum_{k=1}^K\dP(\E\in\Theta_k^\epsilon)\cdot\de_t[\tilde{\mathcal{E}}_t^{*}|\E\in\Theta_k^\epsilon]\nonumber\\
    &\overset{(a)}{=}\sum_{k=1}^K\dP(\E\in\Theta_k^\epsilon)\cdot\de_t[\tilde{\mathcal{E}}_{k,t}^{*}]\nonumber\\
    &\overset{(b)}{=}\sum_{k=1}^K\dP(\E\in\Theta_k^\epsilon)\cdot\tilde{\mathcal{E}}_{k,t}^{*}\nonumber\\
    &\overset{(c)}{=}\sum_{k=1}^K\dP(\E\in\Theta_k^\epsilon)\cdot\de_t[\E|\E\in\Theta_k^\epsilon]=\bar{\E}_t,\label{b7}
\end{align}
where (a)  comes from the definition of $\tilde{\mathcal{E}}_t^{*}$ and $\tilde{\mathcal{E}}_{k,t}^{*} $, (b) and (c) uses the following  fact  \[ \de_t[ \tilde{\mathcal{E}}_{k,t}^{*}]= \de_t[\de_t[\E|\E\in\Theta_k^\epsilon] ] = \de_t[\E|\E\in\Theta_k^\epsilon] = \tilde{\mathcal{E}}_t^{*} \]

Finally, we denote the value function difference as
\begin{equation}
    \Delta_h^{\tilde{\mathcal{E}}_t^{*}}(s,a)\triangleq\mathbb{E}_{(s',r') \sim (P_h^{\tilde{\mathcal{E}}_t^{*}} \otimes R_h^{\tilde{\mathcal{E}}_t^{*}})(\cdot|s,a)}\left[ r'+V_{h+1,\pi_{\mathcal{E}}^{*}}^{\tilde{\mathcal{E}}_t^{*}}(s')\right]-\mathbb{E}_{(s',r') \sim (P_h^{\bar{\mathcal{E}}_t}\otimes R_h^{\bar{\mathcal{E}}_t})(\cdot|s,a)}\left[ r'+V_{h+1,\pi_{\mathcal{E}}^{*}}^{\tilde{\mathcal{E}}_t^{*}}(s') \right]. 
\end{equation}

Now we are ready to give an upper bound for $\Gamma_t^{\ts^t}$. We hope to use Lemma \ref{lemma} to rewrite the numerator 
$$\left(\de_t\left[ V_{1,\pi_{\E}^{*}}^{\tilde{\E}_{t}^{*}}(s_1^t)-V_{1,\ts^{t}}^{\tilde{\E}_{t}^{*}}(s_1^t) \right]\right)^2.$$ 
However, Lemma \ref{lemma} can only be applied to handle the difference between two value functions with the same policy and different environments, while in $V_{1,\pi_{\E}^{*}}^{\tilde{\E}_{t}^{*}}(s_1^t)$ and $V_{1,\ts^{t}}^{\tilde{\E}_{t}^{*}}(s_1^t)$, the environments are the same and the policies are different. For the purpose of ``unifying'' the policy, we use Eqn.~\eqref{b7} and note that $\ts$ is independent of $\tilde{\mathcal{E}}_t^{*}$, yielding $$\de_t\left[V_{1,\ts^t}^{\tilde{\E}_{t}^{*}}(s_1^t)\right]=\de_t\left[V_{1,\ts^t}^{\bar{\E}_{t}^{*}}(s_1^t)\right].$$ Furthermore, conditioned on $\mathcal{D}_t$, $\pi_{\text{TS}}^t$ and $\pi_{\E}^{*}$ are identically distributed, and are both independent of $\bar{\E}_t$. This implies $$\de_t\left[V_{1,\ts^t}^{\bar{\E}_{t}^{*}}(s_1^t)\right]=\de_t\left[V_{1,\pi_\E^*}^{\bar{\E}_{t}^{*}}(s_1^t)\right].$$ Therefore, by Lemma \ref{lemma}, we have
\begin{align}
    \mathbb{E}_t\left[ V_{1,\pi_{\mathcal{E}}^{*}}^{\tilde{\mathcal{E}}_{t}^{*}}(s_1^t)-V_{1,\pi_{\text{TS}}^{t}}^{\tilde{\mathcal{E}}_{t}^{*}}(s_1^t) \right]&=\mathbb{E}_t\left[ V_{1,\pi_{\mathcal{E}}^{*}}^{\tilde{\mathcal{E}}_{t}^{*}}(s_1^t)-V_{1,\pi_{\mathcal{E}}^{*}}^{\bar{\mathcal{E}}_{t}}(s_1^t) \right]\nonumber\\
    &= \sum_{h=1}^{H}\mathbb{E}_t \mathbb{E}_{\pi_{\mathcal{E}}^{*}}^{\bar{\mathcal{E}}_t}\left[\Delta_h^{\tilde{\mathcal{E}}_t^{*}}(s,a)\right]\nonumber\\
    &=  \sum_{h=1}^{H}\mathbb{E}_t \left[ \int_{\mathcal{S}\times \mathcal{A}}d_{h,\pi_{\mathcal{E}}^{*}}^{\bar{\mathcal{E}}_t}(s,a)\Delta_h^{\tilde{\mathcal{E}}_t^{*}}(s,a) \mathrm{d}\mu_{\mathcal{S}\times \mathcal{A}} \right],
\end{align}
where the notation of $d_{h,\pi_{\mathcal{E}}^{*}}^{\bar{\mathcal{E}}_t}(s,a)$ and $\Delta_h^{\tilde{\mathcal{E}}_t^{*}}(s,a)$ are introduced to simplify the formula. Following~\cite{moradipari2023improved}, we
define
\begin{equation}
    \mathcal{I}^t\triangleq \sum_{h=1}^H\de_t\mathbb{E}_{\ts^t}^{\bar{\mathcal{E}}_t}\left[\frac{\Delta_h^{\tilde{\E}_t^{*}}(s,a)^2}{\alpha_\E^2}\right], \quad\mathcal{T}^t\triangleq\sum_{h=1}^H\int_{\de_t[d_{h,\pi_{\mathcal{E}}^{*}}^{\bar{\mathcal{E}}_t}(s,a)]\neq 0}\frac{\de_t\left[\alpha_\E^2\cdot d_{h,\pi_{\mathcal{E}}^{*}}^{\bar{\mathcal{E}}_t}(s,a)^2\right]}{\de_t\left[d_{h,\pi_{\mathcal{E}}^{*}}^{\bar{\mathcal{E}}_t}(s,a)\right]}\mathrm{d}\mu_{\mathcal{S}\times \mathcal{A}}.
\end{equation}
By the Cauchy-Schwarz inequality, we have 
\begin{align}
    &\sum_{h=1}^{H}\mathbb{E}_t \left[ \int_{\mathcal{S}\times \mathcal{A}}d_{h,\pi_{\mathcal{E}}^{*}}^{\bar{\mathcal{E}}_t}(s,a)\Delta_h^{\tilde{\mathcal{E}}_t^{*}}(s,a) \mathrm{d}\mu_{\mathcal{S}\times \mathcal{A}} \right]\nonumber\\
    &=\sum_{h=1}^{H}\mathbb{E}_t \left[ \int_{\de_t[d_{h,\pi_{\mathcal{E}}^{*}}^{\bar{\mathcal{E}}_t}(s,a)]\neq 0}d_{h,\pi_{\mathcal{E}}^{*}}^{\bar{\mathcal{E}}_t}(s,a)\Delta_h^{\tilde{\mathcal{E}}_t^{*}}(s,a) \mathrm{d}\mu_{\mathcal{S}\times \mathcal{A}} \right]\nonumber\\
    &\leq\left(\sum_{h=1}^H\de_t\int_{\de_t[d_{h,\pi_{\mathcal{E}}^{*}}^{\bar{\mathcal{E}}_t}(s,a)]\neq 0}\frac{\alpha_\E^2\cdot d_{h,\pi_{\mathcal{E}}^{*}}^{\bar{\mathcal{E}}_t}(s,a)^2}{\de_t\left[d_{h,\pi_{\mathcal{E}}^{*}}^{\bar{\mathcal{E}}_t}(s,a)\right]}\right)^{1/2}\left(\sum_{h=1}^H\de_t\int_{\mathcal{S}\times \mathcal{A}}\de_t\left[d_{h,\pi_{\mathcal{E}}^{*}}^{\bar{\mathcal{E}}_t}(s,a)\right]\cdot\frac{\Delta_h^{\tilde{\E}_t^{*}}(s,a)^2}{\alpha_\E^2}\right)^{1/2}\nonumber\\
    &=\sqrt{\mathcal{I}^t\cdot\mathcal{T}^t}, \label{B10}
\end{align}
where the first equality is due to the fact that $\Delta_h^{\tilde{\mathcal{E}}_t^{*}}(s,a)$ is bounded ($\leq 2H$), and the second inequality is simply the Cauchy-Schwarz inequality with $\sum_{h}\de_t\int_{\mathcal{S}\times \mathcal{A}}$ as an ``integrated'' integral over the space $[H]\times\Theta\times\mathcal{S}\times\mathcal{A}$. Let us briefly discuss why the third equality holds. For the term $\mathcal{T}^t$, the derivation is straightforward, since $\de_t[X/\de_t[Y]]=\de_t[X]/\de_t[Y]$. For the term $\mathcal{I}^t$, first recall that $\de_t\left[d_{h,\pi_{\E}^{*}}^{\bar{\mathcal{E}}_t}(s,a)\right]=\de_t\left[d_{h,\ts^t}^{\bar{\mathcal{E}}_t}(s,a)\right]$ due to the property of TS. Then, we can use the independence between $d_{h,\ts^t}^{\bar{\mathcal{E}}_t}(s,a)$ and $\Delta_h^{\tilde{\E}_t^{*}}(s,a)$ given $\md_t$ to ``extract'' the expectation ($\de[XY]=\de[X]\cdot\de[Y]$ for $X,Y$ mutually independent):
\begin{align}
    \sum_{h=1}^H\de_t\int_{\mathcal{S}\times \mathcal{A}}\de_t\left[d_{h,\pi_{\mathcal{E}}^{*}}^{\bar{\mathcal{E}}_t}(s,a)\right]\cdot\frac{\Delta_h^{\tilde{\E}_t^{*}}(s,a)^2}{\alpha_\E^2}=\sum_{h=1}^H\de_t\int_{\mathcal{S}\times \mathcal{A}}d_{h,\ts^t}^{\bar{\mathcal{E}}_t}(s,a)\cdot\frac{\Delta_h^{\tilde{\E}_t^{*}}(s,a)^2}{\alpha_\E^2}=\mathcal{I}^t.
\end{align}
To summarize, by Eqn.~\eqref{B10} we have the following bound for $\Gamma_t^{\ts^t}$:
\begin{equation}\label{b13}
    \Gamma_t^{\ts^t}\leq\frac{\mathcal{I}^t\cdot\mathcal{T}^t}{\mathbb{I}_{t}^{\ts^t}\left(\tilde{\mathcal{E}}_{t}^{*};(\mathcal{H}_t,\mathcal{R}_{t,H})\right)}.
\end{equation}

\textbf{Step 3} (Bound $\mathcal{I}^t$). The key observation in this step is that $\mathcal{I}^t$ is in the form of total variation, and thus can be upper bounded by mutual information (in the form of KL divergence) by Pinsker's inequality. Specifically,
\begin{equation}
\begin{aligned}
  \mathcal{I}^t 
  &= \sum_{h=1}^{H} \mathbb{E}_t \mathbb{E}_{\pi_{\text{TS}}^t}^{\bar{\mathcal{E}}_t} \bigg( 
      \mathbb{E}_{(s',r') \sim (P_h^{\tilde{\mathcal{E}}_t^{*}} \otimes R_h^{\tilde{\mathcal{E}}_t^{*}})(\cdot|s_h,a_h)}
      \bigg[ 
          \frac{r' + V_{h+1,\pi_{\mathcal{E}}^{*}}^{\tilde{\mathcal{E}}_t^{*}}(s')}{\alpha_{\mathcal{E}}}
      \bigg] 
      - \mathbb{E}_{(s',r') \sim (P_h^{\bar{\mathcal{E}}_t} \otimes R_h^{\bar{\mathcal{E}}_t})(\cdot|s_h,a_h)}
      \bigg[ 
          \frac{r' + V_{h+1,\pi_{\mathcal{E}}^{*}}^{\tilde{\mathcal{E}}_t^{*}}(s')}{\alpha_{\mathcal{E}}} 
      \bigg] 
  \bigg)^2 \\
  &\overset{(a)}{=}\sum_{h=1}^{H} \mathbb{E}_t \mathbb{E}_{\pi_{\text{TS}}^t}^{\bar{\mathcal{E}}_t} \bigg( 
      \mathbb{E}_{(s',r') \sim (P_h^{\tilde{\mathcal{E}}_t^{*}} \otimes R_h^{\tilde{\mathcal{E}}_t^{*}})(\cdot|s_h,a_h)}
      \bigg[ 
          \frac{r' - r_h^{\inf}(s_h,a_h) + V_{h+1,\pi_{\mathcal{E}}^{*}}^{\tilde{\mathcal{E}}_t^{*}}(s') - \inf_s V_{h+1,\pi_{\mathcal{E}}^{*}}^{\tilde{\mathcal{E}}_t^{*}}(s')}{\alpha_{\mathcal{E}}}
      \bigg] \\
      &\qquad - \mathbb{E}_{(s',r') \sim (P_h^{\bar{\mathcal{E}}_t} \otimes R_h^{\bar{\mathcal{E}}_t})(\cdot|s_h,a_h)}
      \bigg[ 
          \frac{r' - r_h^{\inf}(s_h,a_h) + V_{h+1,\pi_{\mathcal{E}}^{*}}^{\tilde{\mathcal{E}}_t^{*}}(s') - \inf_s V_{h+1,\pi_{\mathcal{E}}^{*}}^{\tilde{\mathcal{E}}_t^{*}}(s')}{\alpha_{\mathcal{E}}} 
      \bigg] 
  \bigg)^2 \\
  &\overset{(b)}{\leq} \frac{1}{2} \sum_{h=1}^{H} \mathbb{E}_t \mathbb{E}_{\pi_{\text{TS}}^t}^{\bar{\mathcal{E}}_t} \left[
      D_{\text{KL}} \left( 
          \left( P_h^{\tilde{\mathcal{E}}_t^{*}} \otimes R_h^{\tilde{\mathcal{E}}_t^{*}} \right)(\cdot|s_h,a_h) \middle\| 
          \left( P_h^{\bar{\mathcal{E}}_t} \otimes R_h^{\bar{\mathcal{E}}_t} \right)(\cdot|s_h,a_h)
      \right) 
  \right]  \\
  &\overset{(c)}{\leq} \frac{1}{2} \mathbb{I}_{t}^{\pi_{\text{TS}}^t} \left( \tilde{\mathcal{E}}_{t}^{*}; (\mathcal{H}_t, \mathcal{R}_{t,H}) \right),
\end{aligned}
\end{equation}
where (a) adds and subtracts the constant term $r_h^{\inf}(s_h,a_h) $ and $ \inf_s V_{h+1,\pi_{\mathcal{E}}^{*}}^{\tilde{\mathcal{E}}_t^{*}}(s') $, (b) uses the Pinsker's inequality, (c) uses Lemma \ref{mutual}.
Plugging into Eqn.~\eqref{b13}, we derive that $\Gamma_t^{\pi_{\text{TS}}^t} \leq \frac{1}{2}\mathcal{T}^t$, and thus 
\begin{equation}\label{b49}
    BR_T(\ids)\leq\frac{1}{4\lambda}\de\left[\sum_{t=1}^T \mathcal{T}^t\right]+\frac{\lambda}{2}\log(K(\epsilon))+T\epsilon.
\end{equation}

\textbf{Step 4} (Bound $\mathbb{E}[\mathcal{T}^t]$). The analysis tools used in this step is the Doob's consistency theorem, with more details discussed in Appendix \ref{doob}. Define the true environment as $\E_0$. For brevity of notations, we define $$\mathcal{B}_{h,t}\triangleq\left\{(s,a)\in\mathcal{S}\times\mathcal{A}\middle| \mathbb{E}\left[d_{h,\pi_{\mathcal{E}}^{*}}^{\bar{\mathcal{E}}_t}(s,a) \right] \neq 0 \right\},$$ so that we can write $\mathcal{T}^t$ as
\begin{equation}
    \mathcal{T}^t=\sum_{h=1}^H\int_{(s,a)\in \mathcal{B}_{h,t}} \frac{\de_t\left[\alpha_\E^2\cdot d_{h,\pi_{\mathcal{E}}^{*}}^{\bar{\mathcal{E}}_t}(s,a)^2\right]}{\de_t\left[d_{h,\pi_{\mathcal{E}}^{*}}^{\bar{\mathcal{E}}_t}(s,a)\right]}\mathrm{d}\mu_{\mathcal{S}\times \mathcal{A}}.
\end{equation}
We now introduce another variable $\E'$ to apply Lemma \ref{strong}. Let $\E'\sim\dP(\cdot|\md_t)$ be a random variable  independent of $\E$. By definition of $d_{h,\pi_{\mathcal{E}}^{*}}^{\E}$, we have $\de_t\left[d_{h,\pi_{\mathcal{E}}^{*}}^{\bar{\mathcal{E}}_t}(s,a)\right]=\de_{\E,\E'\sim\dP_t(\cdot)}\left[d_{h,\pi_{\mathcal{E}}^{*}}^{\E'}(s,a)\right]$, and
\begin{align}
    \de_t\left[\alpha_\E^2\cdot d_{h,\pi_{\mathcal{E}}^{*}}^{\bar{\mathcal{E}}_t}(s,a)^2\right]&=\de_{\E\sim\dP_t(\cdot)}\left[\alpha_\E^2\cdot\de_{\E'\sim\dP_t(\cdot)}\left[d_{h,\pi_{\E}^{*}}^{\E'}(s,a)^2\right]\right]\nonumber\\
    &\leq \de_{\E,\E'\sim\dP_t(\cdot)}\left[\alpha_\E^2\cdot d_{h,\pi_{\E}^{*}}^{\E'}(s,a)^2\right],
\end{align}
where the inequality is due to the fact that $\de[X^2]\geq (\de[X])^2$. Therefore, we have
\begin{equation}
    \mathcal{T}^t\leq\sum_{h=1}^H\int_{\mathcal{S}\times\mathcal{A}}\frac{\de_t\left[\alpha_\E^2\cdot d_{h,\pi_{\mathcal{E}}^{*}}^{\E'}(s,a)^2\right]}{\de_t\left[d_{h,\pi_{\E}^*}^{\E'}(s,a)\right]}\chi_{\mathcal{B}_{h,t}}(s,a)\mathrm{d}\mu_{\mathcal{S}\times \mathcal{A}} =\sum_{h=1}^H\int_{\mathcal{S}\times \mathcal{A}} g_t(s,a,h,\mathcal{D}_t)\mathrm{d}\mu_{\mathcal{S}\times \mathcal{A}} ,
\end{equation}
where $\chi_A(\cdot)$ is the indicator function, i.e., $\chi_A(x)=1$ if $x\in A$; $\chi_A(x)=0$ if $x\notin A$. 
Since $d_{h,\pi}^{\mathcal{E}}(s,a)$ is assumed to be bounded, let $$M_d \triangleq \sup_{s,a,h,\pi,\E} d_{h,\pi}^{\mathcal{E}}(s,a) < \infty.$$ This implies $g_t(s,a,h,\mathcal{D}_t) \leq M_d H^2 $ and $\mathcal{T}^t \leq M_d H^3$.
By Lemma \ref{strong}, we have the following convergence results: for any $(s,a)\in\mathcal{S}\times \mathcal{A}$,
\begin{equation}
    \lim_{t \to \infty} \mathbb{E}_t\left[\alpha_{\E}^2d_{h,\pi_{\mathcal{E}}^{*}}^{\mathcal{E}'}(s,a)^2 \right] =\alpha_{\mathcal{E}_0}^2d_{h,\pi_{\mathcal{E}_0}^{*}}^{\mathcal{E}_0}(s,a)^2,
\end{equation}
\begin{equation}
    \lim_{t \to \infty} \mathbb{E}_t\left[d_{h,\pi_{\mathcal{E}}^{*}}^{\mathcal{E}'}(s,a) \right] = d_{h,\pi_{\mathcal{E}_0}^{*}}^{\mathcal{E}_0}(s,a),
\end{equation}
 and for any $x$,
\begin{equation}
\label{chi}
   \lim_{t \to \infty} \chi_{\mathcal{B}_{h,t}}(x) = \chi_{\mathcal{B}_h}(x),  \qquad  \text{where } \mathcal{B}_h\triangleq \left\{(s,a)\in\mathcal{S}\times\mathcal{A}\middle| \mathbb{E}\left[d_{h,\pi_{\mathcal{E}_0}^{*}}^{\E_0}(s,a) \right] \neq 0 \right\}.
\end{equation}
 From Eqn.~\eqref{chi}, we can also derive that $\lim_{t \to \infty} \chi_{\mathcal{B}_{h,t}\bigcap \mathcal{B}_h}(x)=1$, and $\lim_{t \to \infty} \chi_{\mathcal{B}_{h,t}\setminus \mathcal{B}_h}(x)=0$. By Lebesgue dominated convergence theorem, we have
\begin{align}
    \lim_{t\to\infty}\de\left[\mathcal{T}^t\middle|\E_0\right]&=\lim_{t\to\infty}\de_{\md_t\sim\dP(\cdot|\E_0)}\sum_{h=1}^H\int_{\mathcal{S}\times\mathcal{A}}\frac{\de_t\left[\alpha_\E^2\cdot d_{h,\pi_{\mathcal{E}}^{*}}^{\E'}(s,a)^2\right]}{\de_t\left[d_{h,\pi_{\E}^*}^{\E'}(s,a)\right]}\cdot\chi_{\mathcal{B}_{h,t}}(s,a)\mathrm{d}\mu_{\mathcal{S}\times \mathcal{A}}\nonumber\\
    &\leq \lim_{t\to\infty}\de_{\md_t\sim\dP(\cdot|\E_0)}\sum_{h=1}^H\int_{\mathcal{S}\times\mathcal{A}}\frac{\de_t\left[\alpha_\E^2\cdot d_{h,\pi_{\mathcal{E}}^{*}}^{\E'}(s,a)^2\right]}{\de_t\left[d_{h,\pi_{\E}^*}^{\E'}(s,a)\right]}\cdot\chi_{\mathcal{B}_{h,t}\bigcap\mathcal{B}_h}(s,a)\nonumber\\&\quad\quad+M_dH^2\lim_{t\to\infty}\sum_{h=1}^H\int_{\mathcal{S}\times\mathcal{A}}\chi_{\mathcal{B}_{h,t}\setminus\mathcal{B}_h}(s,a)\nonumber\\
    &=\sum_{h=1}^H\int_{\mathcal{S}\times\mathcal{A}}\de_{\md_t\sim\dP(\cdot|\E_0)}\left[\lim_{t\to\infty}\frac{\de_t\left[\alpha_\E^2\cdot d_{h,\pi_{\mathcal{E}}^{*}}^{\E'}(s,a)^2\right]}{\de_t\left[d_{h,\pi_{\E}^*}^{\E'}(s,a)\right]}\cdot\chi_{\mathcal{B}_{h,t}\bigcap\mathcal{B}_h}(s,a)\right]\nonumber\\
    &=\alpha_{\E_0}^2\cdot\int_{\mathcal{S}\times\mathcal{A}}d_{h,\pi_{\mathcal{E}_0}^{*}}^{\E_0}(s,a)\nonumber\\
    &\leq\alpha_{\E_0}^2\cdot H.
\end{align}
Thus, we have
\begin{equation}\label{b23}
    \lim_{t \to \infty} \mathbb{E}[\mathcal{T}^t]= \lim_{t \to \infty}\mathbb{E}[ \mathbb{E}[\mathcal{T}^t|\mathcal{E}_0]] = \mathbb{E}[ \lim_{t \to \infty} \mathbb{E}[\mathcal{T}^t|\mathcal{E}_0]] \leq \mathbb{E}[\alpha_{\mathcal{E}_0}^2 H] =\alpha^2 H.
\end{equation}

\textbf{Step 5:} By Eqn.~\eqref{b23}, we derive that there exists $T_0>0$ such that $\mathbb{E}[\mathcal{T}^t] \leq 2\alpha^2H$ for $t > T_0$.
Plugging into Eqn.~\eqref{b49} and then taking $\lambda=\sqrt{\alpha^2TH/{\log(K(\epsilon))}}$, we obtain 
\begin{align}\label{e}
    BR_T(\pi_{\text{r-IDS}})&\leq \frac{T\alpha^2H}{2\lambda}+ \frac{\lambda}{2}\log(K(\epsilon))+T\epsilon+T_0\nonumber\\
    &\leq\alpha \sqrt{TH\log(K(\epsilon))}+T\epsilon+T_0,
\end{align}
which finishes the proof of Theorem \ref{theorem}.
\end{proof}

\subsection{Proof of Theorem \ref{corollary1}}
\label{b3}
\begin{equation}
\begin{aligned}
&V_{1, \pi}^{\mathcal{E}}\big(s_1^{\ell}\big)-V_{1, \pi}^{\mathcal{E}'}\big(s_1^{\ell}\big)\\
&\overset{(a)}{=}\sum_{h=1}^H \mathbb{E}_{\pi}^{\mathcal{E}'}\bigg[\mathbb{E}_{s^{\prime} \sim P_h^{\mathcal{E}}\big(\cdot \mid s_h, a_h\big)}\left[V_{h+1, \pi}^{\mathcal{E}}\left(s^{\prime}\right)\right]-\mathbb{E}_{s^{\prime} \sim P_h^{\mathcal{E}'}\left(\cdot \mid s_h, a_h\right)}\big[V_{h+1, \pi}^{\mathcal{E}}\big(s^{\prime}\big)\big]\bigg] \\
&\quad +\sum_{h=1}^H \mathbb{E}_{\pi}^{\mathcal{E}'}\left[r_h^{\mathcal{E}}\left(s_h, a_h\right)-r_h^{\mathcal{E}'}\left(s_h, a_h\right)\right]\\
&\overset{(b)}{=} \sum_{h=1}^H \mathbb{E}_{\pi}^{\mathcal{E}'} \bigg[P_h^{\mathcal{E}}\big(\cdot \mid s_h, a_h\big)^T V_{h+1, \pi}^{\mathcal{E}}\big(\cdot \big) - P_h^{\mathcal{E}'}\big(\cdot \mid s_h, a_h\big)^T V_{h+1, \pi}^{\mathcal{E}'}\big(\cdot \big)   \bigg] \\
&\quad + \sum_{h=1}^H \mathbb{E}_{\pi}^{\mathcal{E}'} \bigg[P_h^{\mathcal{E}'}\big(\cdot \mid s_h, a_h\big)^T \big( V_{h+1, \pi}^{\mathcal{E}'}\big(\cdot \big) -  V_{h+1, \pi}^{\mathcal{E}}\big(\cdot \big) \big)   \bigg] +\sum_{h=1}^H \mathbb{E}_{\pi}^{\mathcal{E}'}\left[r_h^{\mathcal{E}}\left(s_h, a_h\right)-r_h^{\mathcal{E}'}\left(s_h, a_h\right)\right]\\
&\overset{(c)}{\leq} \sum_{h=1}^H \mathbb{E}_{\pi}^{\mathcal{E}'} \bigg[P_h^{\mathcal{E}}\big(\cdot \mid s_h, a_h\big)^T V_{h+1, \pi}^{\mathcal{E}}\big(\cdot \big) - P_h^{\mathcal{E}'}\big(\cdot \mid s_h, a_h\big)^T V_{h+1, \pi}^{\mathcal{E}'}\big(\cdot \big)   \bigg] \\
&\quad + \sum_{h=1}^H \mathbb{E}_{\pi}^{\mathcal{E}'} \bigg[ \Vert V_{h+1, \pi}^{\mathcal{E}'}\big(\cdot \big) -  V_{h+1, \pi}^{\mathcal{E}}\big(\cdot \big) \Vert_2   \bigg]  +\sum_{h=1}^H \mathbb{E}_{\pi}^{\mathcal{E}'}\left[r_h^{\mathcal{E}}\left(s_h, a_h\right)-r_h^{\mathcal{E}'}\left(s_h, a_h\right)\right],
\end{aligned}
\end{equation}
where (a) uses Lemma~\ref{lemma}, (b) adds and subtracts the term $P_h^{\mathcal{E}'}\big(\cdot \mid s_h, a_h\big)^T V_{h+1, \pi}^{\mathcal{E}'}\big(\cdot \big) $, (c) uses the Cauchy-schwartz inequality.

Since $P_h^{\mathcal{E}}\big(\cdot \mid s_h, a_h\big)^T V_{h+1, \pi}^{\mathcal{E}}\big(\cdot \big) \in [0,H]$,  we can  divide the value range  $[0,H]$ evenly into $\frac{3H^2}{\epsilon}$   parts. For each $(s,a,h)$, we  construct a covering set $\{ \mathcal{I}_{sah}^1,...,\mathcal{I}_{sah}^m \}$ for $[0,H]$ where $m=\frac{3H^2}{\epsilon}$. Each set is of length $\frac{\epsilon}{3H}$. Since $\Vert V_{h+1, \pi}^{\mathcal{E}}\big(\cdot \big) \Vert_2 \in [0,H\sqrt{S}]$, we construct a covering set $\{ \mathcal{J}_h^1,..., \mathcal{J}_h^{m'}\}$ for $[0,H\sqrt{S}]$ where $m'= \frac{6H^2\sqrt{S}}{\epsilon}$.
For reward function, we divide the value range $[0,1]$ evenly into $\frac{2H}{\epsilon}$ parts for all $ (s,a) \in \mathcal{S} \times \mathcal{A},h \in [H] $. The covering set is $\{ \mathcal{C}_1,...,\mathcal{C}_n \}$ where $n=\frac{3H}{\epsilon}$.
Then, we construct the partition $\{\Theta_k\}_{k=1}^K$ that $\mathcal{E} \in \Theta_k$ if for any $s,a,h$,
\[P_h^{\mathcal{E}}\big(\cdot \mid s, a\big)^T V_{h+1, \pi}^{\mathcal{E}}\big(\cdot \big) \in \mathcal{I}_{sah}^{k_1}, \quad  \Vert V_{h+1, \pi}^{\mathcal{E}}\big(\cdot \big) \Vert_2 \in \mathcal{J}_h^{k_2}, r_h^{\mathcal{E}}\left(s, a\right) \in \mathcal{C}_{k_3},  \]
where $k_1 \in [m], k_2 \in [m'], k_3 \in [n]$.

Therefore, $\{\Theta_k\}_{k=1}^K$ is a partition of $\Theta$. For any $k \in [K]$  and $\mathcal{E},\mathcal{E}' \in \Theta_k$, the following holds for any $s,a,h$,
\[ P_h^{\mathcal{E}}\big(\cdot \mid s, a\big)^T V_{h+1, \pi}^{\mathcal{E}}\big(\cdot \big) - P_h^{\mathcal{E}'}\big(\cdot \mid s, a\big)^T V_{h+1, \pi}^{\mathcal{E}'}\big(\cdot \big) \leq \frac{\epsilon}{3H}, \]
\[ \Vert V_{h+1, \pi}^{\mathcal{E}}\big(\cdot \big) - V_{h+1, \pi}^{\mathcal{E}'}\big(\cdot \big) \Vert_2 \leq \frac{\epsilon}{3H}, \]
\[ r_h^{\mathcal{E}}\left(s, a\right)-r_h^{\mathcal{E}'}\left(s, a\right) \leq \frac{\epsilon}{3H}.\]
Then, we have 
 \[ V_{1,\pi_{\mathcal{E}}^{*}}^{\mathcal{E}}(s_1^t)-V_{1,\pi_{\mathcal{E}}^{*}}^{\mathcal{E}'}(s_1^t) \leq \epsilon.\]
 
Since 
\[K(\epsilon)\leq  (\frac{3H^2}{\epsilon})^{SAH} \cdot (\frac{6H^2\sqrt{S}}{\epsilon})^{H} \cdot (\frac{3H}{\epsilon})^{SAH}, \]
we have 
\[ \log(K(\epsilon)) \leq SAH \log\bigg(\frac{3H^2}{\epsilon}\bigg) + H\log\bigg(\frac{6H^2\sqrt{S}}{\epsilon}\bigg) + SAH \log\bigg(\frac{3H}{\epsilon}\bigg) \leq 3SAH \log\bigg(\frac{6H^2\sqrt{S}}{\epsilon}\bigg).  \]
From Theorem \ref{theorem}, we have
\[ BR_T(\pi_{\text{r-IDS }}) \leq \alpha \sqrt{ 3SATH^2\log(\frac{6H^2\sqrt{S}}{\epsilon}) } +T\epsilon+T_0. \]

\subsection{Proof of Theorem \ref{corollary2}}
\label{b4}
Recall that $\mathcal{F}$ is the  compact feature space of $(\psi_h^{P})_i$ and $(\psi_h^{R})_i$. From the compactness of $\mathcal{F}$, there exists a finite $\epsilon$-covering number of $\mathcal{F}$. Let $M \triangleq \sup_{i,s} \max \{(\psi_h^{P}(s))_i (\psi_h^{R}(s))_i\}$.
Denote the  $\frac{\epsilon}{dMH^2}$-covering number of $\mathcal{F}$ as $K_{\mathcal{F}}(\epsilon)$.   We have $\mathcal{F}\subset \mathcal{K}_1 \bigcup ... \bigcup \mathcal{K}_{K_{\mathcal{F}}(\epsilon)} $ and for any $f,f' \in \mathcal{K}_i$, 
\[ \ell_g(f,f')=\int_{s} | \log \frac{f(s)}{f'(s)} | \leq \frac{\epsilon}{dMH^2}.  \]
Then we construct the partition of $\Theta$ as following:  $\mathcal{E}$ and $\mathcal{E}'$ belong to the same partition if and only if $(\psi_h^{P,\mathcal{E}})_i$ and $(\psi_h^{P,\mathcal{E}'})_i$ belong to the same partition of $\mathcal{F}$, $\forall i \in [d]$. Then, we have

\begin{align}
    &V_{1,\pi_{\mathcal{E}}^{*}}^{\E}(s_1)-V_{1,\pi_{\mathcal{E}}^{*}}^{\E'}(s_1) \nonumber\\
    &\overset{(a)}{=}\sum_{h=1}^{H}\mathbb{E}_{\pi_{\E}^{*}}^{\E'}\bigg[\mathbb{E}_{s'\sim P_h^{\E}(\cdot|s_h,a_h)}\left[ V_{h+1,\pi_{\mathcal{E}}^{*}}^{\E}(s')\right] -\mathbb{E}_{s'\sim P_h^{\E'}(\cdot|s_h,a_h)}\left[ V_{h+1,\pi_{\mathcal{E}}^{*}}^{\E}(s')\right]\bigg]+\sum_{h=1}^H\mathbb{E}_{\pi_{\E}^{*}}^{\E'}\left[ R_h^\E(s_h,a_h)-R_h^{\E'}(s_h,a_h)\right]\nonumber\\
    &\leq\sum_{h=1}^H\mathbb{E}_{\pi_{\E}^{*}}^{\E'}\bigg[\int_{\mathcal{S}}\left|P_h^\E(s'|s_h,a_h)-P_h^{\E'}(s'|s_h,a_h)\right|\cdot V_{h+1,\pi_{\E}^{*}}^{\E}(s')\mathrm{d}\mu_{\mathcal{S}}+\int_{[0,1]}\left|x\left(R_h^\E(x|s_h,a_h)-R_h^{\E'}(x|s_h,a_h)\right)\right|\mathrm{d}x\bigg]\nonumber\\
    &\overset{(c)}{\leq} H \sum_{h=1}^{H}\ell_1(P_h^{\E},P_h^{\E'})+ \sum_{h=1}^{H} \ell_1(R_h^{\E},R_h^{\E'}), 
\end{align}
where (a) uses Lemma~\ref{lemma}, (b) follows from $V_{h+1,\pi_{\E}^{*}}^{\E}(s') \leq H  $.
Next, we use the coverage of $\mathcal{F}$ under $\ell_g$ to bound $\ell_1(P_h^{\E},P_h^{\E'})$ and $\ell_1(R_h^{\E},R_h^{\E'}) $.

First  notice that
\[ \parallel \phi_h^{P}(s,a) \parallel_2 \leq 1 \Rightarrow |\phi_h^{P}(s,a)_i| \leq 1, \forall i \in [d] .\] 
For any $\mathcal{E},\mathcal{E}'$ that belong to the same partition, we have 
\begin{equation}
\label{linear1}
    \begin{aligned}
        \ell_1(P_h^{\mathcal{E}},P_h^{\mathcal{E}'})
        &=\sup_{s,a} \int_{s'} |P_h^{\mathcal{E}}(s'|s,a)-P_h^{\mathcal{E}'}(s'|s,a)| \\
        &= \sup_{s,a} \int_{s'} | \langle \phi_h^{P}(s,a) ,\psi_h^{P,\mathcal{E}}(s')-\psi_h^{P,\mathcal{E}'}(s')   \rangle |\\
        &\leq  \int_{s'} \sum_{i=1}^{d}|(\psi_h^{P,\mathcal{E}}(s')-\psi_h^{P,\mathcal{E}'}(s')   )_i| \\
        &\leq M\cdot \sum_{i=1}^d \ell_g((\psi_h^{P,\E})_i, (\psi_h^{P,\E'})_i )\\
        &\leq \frac{\epsilon}{H^2}.
    \end{aligned}
\end{equation}
The penultimate inequality uses the fact that $|a-b| \leq M|\log\frac{a}{b}|$ for any $a,b \in (0,M)$. By analogy, it can be concluded that $\ell_1(R_h^{\mathcal{E}},R_h^{\mathcal{E}'}) \leq \frac{\epsilon}{H^2}$.
Thus, 
\[ V_{1,\pi_{\mathcal{E}}^{*}}^{\E}(s_1)-V_{1,\pi_{\mathcal{E}}^{*}}^{\E'}(s_1) \leq 2\epsilon.  \]
Therefore, we get an $\epsilon$-value partition of $\Theta$.  Since $\psi_h^{P,\mathcal{E}}=( (\psi_h^{P,\mathcal{E}})_1,(\psi_h^{P,\mathcal{E}})_2,...,(\psi_h^{P,\mathcal{E}})_d )$ and each $(\psi_h^{P,\mathcal{E}})_i$ belongs to one of these $K_{\mathcal{F}}(\epsilon)$ sets ($\mathcal{K}_1,...,\mathcal{K}_{K_{\mathcal{F}}(\epsilon)})$,
the   number of this $\epsilon$-value partition can be bounded by $(K_{\mathcal{F}}(\epsilon))^{dH}$.
Thus, we have
\[  BR_T(\ids)\leq \alpha H \sqrt{d T  \log(K_{\mathcal{F}}(\epsilon))}+T\epsilon+T_0.  \]

\subsection{Proof of Proposition \ref{prop}}
\label{propproof}
\begin{unnumberedprop}
\label{prop2}
    Define 
    \begin{equation}
    r'_h(s,a)=r_h(s,a)+\frac{\lambda}{2}\mathbb{E}_t  \left[ \kl \left( (P_h^{\tilde{\mathcal{E}}_t^{*}} \otimes R_h^{\tilde{\mathcal{E}}_t^{*}})(\cdot|s_h, a_h) \middle\| (P_h^{\bar{\mathcal{E}}_t} \otimes R_h^{\bar{\mathcal{E}}_t})(\cdot|s_h, a_h) \right) \right].
\end{equation}
Then, for any policy $\pi$, we have
\begin{equation}
\label{lamep}
    \bigg| \mathbb{E}_{\pi}^{\bar{\mathcal{E}}_t}\bigg[ \sum_{h=1}^{H}r'_h(s_h,a_h) \bigg]-\mathbb{E}_{\pi}^{\bar{\mathcal{E}}_t}\bigg[ \sum_{h=1}^{H}\bar{r}_h(s_h,a_h) \bigg]\bigg|\leq \frac{\lambda}{2} \cdot \epsilon \cdot \left(1+\textcolor{blue}{\log\frac{B}{\beta}}\right).
\end{equation}
\end{unnumberedprop}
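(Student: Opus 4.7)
The plan is to reduce the claimed inequality to a pointwise estimate on the KL divergences appearing in $\bar{r}_h$ and $r'_h$, and then to invoke Lemma~\ref{partition} together with Assumption~\ref{assumption1}. First, I would write
\begin{equation*}
r'_h(s,a) - \bar{r}_h(s,a) \;=\; \frac{\lambda}{2}\,\mathbb{E}_t\!\left[\mathrm{KL}^h_{s,a}(\tilde{\mathcal{E}}_t^{*},\bar{\mathcal{E}}_t) - \mathrm{KL}^h_{s,a}(\mathcal{E},\bar{\mathcal{E}}_t)\right],
\end{equation*}
so that by the triangle inequality the LHS of~\eqref{lamep2} is upper-bounded by $\tfrac{\lambda}{2}\sum_h \mathbb{E}_\pi^{\bar{\mathcal{E}}_t}\mathbb{E}_t\bigl|\mathrm{KL}^h_{s_h,a_h}(\tilde{\mathcal{E}}_t^{*},\bar{\mathcal{E}}_t) - \mathrm{KL}^h_{s_h,a_h}(\mathcal{E},\bar{\mathcal{E}}_t)\bigr|$. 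Because $P_h^{\mathcal{E}}\otimes R_h^{\mathcal{E}}$ is a product measure, each such KL splits into a transition part and a reward part, and it suffices to handle each one separately.

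For the transition part (the reward part being entirely analogous), abbreviate $\tilde{P},P,\bar{P}$ for $P_h^{\tilde{\mathcal{E}}_t^{*}}(\cdot|s,a)$, $P_h^{\mathcal{E}}(\cdot|s,a)$, and $P_h^{\bar{\mathcal{E}}_t}(\cdot|s,a)$. The algebraic identity
\begin{equation*}
\kl(\tilde{P}\Vert\bar{P}) - \kl(P\Vert\bar{P}) \;=\; \kl(\tilde{P}\Vert P) + \int(\tilde{P}-P)\,\log(P/\bar{P})\,\mathrm{d}\mu_{\mathcal{S}}
\end{equation*}
splits the difference into two pieces. I would bound the first by $B\cdot\ell_g(\tilde{P},P)$, using the pointwise bound $\tilde{P}\le B$ from Assumption~\ref{assumption1} together with the definition~\eqref{new_dis} of $\ell_g$. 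The second I would bound by combining the inequality $|\tilde{P}-P|\le B\bigl|\log(\tilde{P}/P)\bigr|$ valid for $\tilde{P},P\in(0,B)$ (the same one used throughout the paper) with the uniform estimate $|\log(P/\bar{P})|\le \log(B/\beta)$ that follows from Assumption~\ref{assumption1}; this yields a bound of the form $B\log(B/\beta)\cdot\ell_g(\tilde{P},P)$.

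Finally, I would substitute the $\ell_g$-estimates supplied by Lemma~\ref{partition}, namely $\ell_g(P_h^{\tilde{\mathcal{E}}_t^{*}},P_h^{\mathcal{E}})\le \epsilon/(2BH^2)$ and $\ell_g(R_h^{\tilde{\mathcal{E}}_t^{*}},R_h^{\mathcal{E}})\le \epsilon/(2BH)$, at which point the factors of $B$ cancel cleanly. Summing the transition contribution over the $H$ layers gives a term of order $\epsilon(1+\log(B/\beta))/H$, while the reward contribution sums to one of order $\epsilon(1+\log(B/\beta))$; together (after multiplying by $\lambda/2$) they are controlled by $\tfrac{\lambda}{2}\epsilon(1-2\log\beta)$, which is the claimed bound. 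The main obstacle is therefore purely bookkeeping: tracking the constants $B$ and $\beta$ carefully enough through each step so that the final factor matches $(1-2\log\beta)$. Conceptually, the essential ingredient is that $\ell_g$-closeness (rather than merely $\ell_1$-closeness) simultaneously controls both $\kl(\tilde{P}\Vert P)$ and $\ell_1(\tilde{P},P)$ against the same quantity, which is precisely why the paper's new distance is needed to make this proposition go through.
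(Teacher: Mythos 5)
Your proposal is correct and follows essentially the same route as the paper's proof in Appendix~\ref{propproof}: split the KL difference into a transition part and a reward part, add and subtract a cross term so that one piece is controlled by $B\,\ell_g(\tilde P,P)$ via $|a-b|\le B|\log(a/b)|$ and the other by a uniform bound on the log-ratio against $\bar P$, and then invoke the $\ell_g$ estimates of Lemma~\ref{partition}. The only cosmetic differences are that your add-and-subtract is the mirror image of the paper's (you isolate $\kl(\tilde P\Vert P)$ and attach $\log(P/\bar P)$ to the difference term, whereas the paper attaches $\log(\tilde P/\bar P)$ and isolates $\int P\,|\log(\tilde P/P)|$), and that your uniform log-ratio constant is $\log(B/\beta)$ rather than $2\log(1/\beta)$ --- recovering the stated factor $(1-2\log\beta)$ then requires $B\le 1/\beta$, which is exactly the same implicit step the paper takes.
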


\begin{proof}[Proof]
    We begin our proof by calculating the difference of the two KL divergence terms. Recall that
    \[ \bar{r}_h(s_h,a_h)= r_h(s,a)+\frac{\lambda}{2}\mathbb{E}_t  \left[ \kl \left( (P_h^{\mathcal{E}} \otimes R_h^{\mathcal{E}})(\cdot|s_h, a_h) \middle\| (P_h^{\bar{\mathcal{E}}_t} \otimes R_h^{\bar{\mathcal{E}}_t})(\cdot|s_h, a_h) \right) \right].\]
    First, by triangle inequality, we have
\begin{align}\label{y}
        &\bigg|D_{\text{KL}} \left( (P_h^{\tilde{\mathcal{E}}_t^{*}} \otimes R_h^{\tilde{\mathcal{E}}_t^{*}})(\cdot|s_h, a_h) \middle\| (P_h^{\bar{\mathcal{E}}_t} \otimes R_h^{\bar{\mathcal{E}}_t})(\cdot|s_h, a_h) \right)-D_{\text{KL}} \left( (P_h^{\mathcal{E}} \otimes R_h^{\mathcal{E}})(\cdot|s_h, a_h) \middle\| (P_h^{\bar{\mathcal{E}}_t} \otimes R_h^{\bar{\mathcal{E}}_t})(\cdot|s_h, a_h) \right) \bigg|\nonumber\\
        &\leq \bigg|\int_{\mathcal{S}} P_h^{\tilde{\mathcal{E}}_t^{*}}(x|s_{h},a_{h})\log \frac{P_h^{\tilde{\mathcal{E}}_t^{*}}(x|s_{h},a_{h})}{P_h^{\bar{\mathcal{E}}_t} ( x|s_{h},a_{h}) }\mathrm{d}x -  \int_{\mathcal{S}} P_h^{\mathcal{E}}(x|s_{h},a_{h})\log \frac{P_h^{\mathcal{E}}(x|s_{h},a_{h})}{P_h^{\bar{\mathcal{E}}_t} ( x|s_{h},a_{h}) }\mathrm{d}x \bigg| \nonumber\\
        &+ \bigg|\int_{[0,1]} R_h^{\tilde{\mathcal{E}}_t^{*}}(x|s_{h},a_{h})\log \frac{R_h^{\tilde{\mathcal{E}}_t^{*}}(x|s_{h},a_{h})}{R_h^{\bar{\mathcal{E}}_t} ( x|s_{h},a_{h}) }\mathrm{d}x -  \int_{[0,1]}R_h^{\mathcal{E}}(x|s_{h},a_{h})\log \frac{R_h^{\mathcal{E}}(x|s_{h},a_{h})}{R_h^{\bar{\mathcal{E}}_t} ( x|s_{h},a_{h}) } \mathrm{d}x\bigg|.
\end{align}
Let $o=(s_{h},a_{h})$. For the first term in Eqn.~\eqref{y}, we have the following bound
\begin{align} 
\label{twoterm2}
        &\bigg|\int_{\mathcal{S}} P_h^{\tilde{\mathcal{E}}_t^{*}}(x|o)\log \frac{P_h^{\tilde{\mathcal{E}}_t^{*}}(x|o)}{P_h^{\bar{\mathcal{E}}_t} ( x|o) } -  P_h^{\mathcal{E}}(x|o)\log \frac{P_h^{\mathcal{E}}(x|o)}{P_h^{\bar{\mathcal{E}}_t} ( x|o) }\mathrm{d}x   \bigg| \nonumber\\
        &\leq \bigg|\int_{\mathcal{S}} P_h^{\tilde{\mathcal{E}}_t^{*}}(x|o)\log \frac{P_h^{\tilde{\mathcal{E}}_t^{*}}(x|o)}{P_h^{\bar{\mathcal{E}}_t} (x|o) } -  P_h^{\mathcal{E}}(x|o)\log \frac{P_h^{\tilde{\mathcal{E}}_t^{*}}(x|o)}{P_h^{\bar{\mathcal{E}}_t} (x|o) }\mathrm{d}x\bigg| \nonumber \\
        &\qquad +\bigg|\int_{\mathcal{S}} P_h^{\mathcal{E}}(x|o)\log \frac{P_h^{\tilde{\mathcal{E}}_t^{*}}(x|o)}{P_h^{\bar{\mathcal{E}}_t} ( x|o) }-  P_h^{\mathcal{E}}(x|o)\log \frac{P_h^{\mathcal{E}}(x|o)}{P_h^{\bar{\mathcal{E}}_t} (x|o) }\mathrm{d}x \bigg|\nonumber\\
        &\leq \int_{\mathcal{S}} \left|P_h^{\tilde{\mathcal{E}}_t^{*}}(x|o)- P_h^{\mathcal{E}}(x|o)\right| \cdot \bigg|\log \frac{P_h^{\tilde{\mathcal{E}}_t^{*}}(x|o)}{P_h^{\bar{\mathcal{E}}_t} ( x|o) }  \bigg|\mathrm{d}x + \int_{\mathcal{S}} B\cdot \bigg|\log \frac{P_h^{\tilde{\mathcal{E}}_t^{*}}(x|o)}{P_h^{\mathcal{E}} ( x|o) } \bigg|\mathrm{d}x\nonumber\\
        &\leq B\cdot\left(1+\textcolor{blue}{\log\frac{B}{\beta}}\right) \int_{\mathcal{S}} \bigg|\log \frac{P_h^{\tilde{\mathcal{E}}_t^{*}}(x|o)}{P_h^{\mathcal{E}} ( x|o) } \bigg|\mathrm{d}x\nonumber\\
        &\leq \frac{(1+\textcolor{blue}{\log(B/\beta)})\epsilon }{2H^2},
\end{align}
where the first inequality is due to triangle inequality; the second inequality again uses triangle inequality, and the fact that $P_h^{\mathcal{E}}(x|o) \leq \textcolor{blue}{ B}$; the third inequality is due to the fact that $|a-b| \leq B\cdot|\log\frac{a}{b}|$ for any $a,b\in(0,B)$ and $\bigg|\log \frac{P_h^{\tilde{\mathcal{E}}_t^{*}}(\cdot|o)}{P_h^{\bar{\mathcal{E}}_t} ( \cdot|o) }  \bigg| \leq \textcolor{blue}{\log\frac{B}{\beta}}$. Note that when $P_h^{\bar{\mathcal{E}}_t}(\cdot|o)$ equals  zero, since $\bar{\mathcal{E}}_t$ is the mean MDP, it turns out that $P_h^{\tilde{\mathcal{E}}_t^{*}}(\cdot|o)$ also equals  zero, yielding $\bigg|\log \frac{P_h^{\tilde{\mathcal{E}}_t^{*}}(\cdot|o)}{P_h^{\bar{\mathcal{E}}_t} ( \cdot|o) }  \bigg|=0.$ Finally, the last inequality is due to Eqn.~\eqref{4_3}.

\textcolor{blue}{If we employed the KL divergence or $\ell_1$ metric, the derivation of third inequality would become invalid.  }

For the second term in Eqn.~\eqref{y}, adopting a similar approach, we have
\begin{align} 
\label{twoterm3}
        &\bigg|\int_{[0,1]} R_h^{\tilde{\mathcal{E}}_t^{*}}(x|o)\log \frac{R_h^{\tilde{\mathcal{E}}_t^{*}}(x|o)}{R_h^{\bar{\mathcal{E}}_t} ( x|o) } -  R_h^{\mathcal{E}}(x|o)\log \frac{R_h^{\mathcal{E}}(x|o)}{R_h^{\bar{\mathcal{E}}_t} ( x|o) }\mathrm{d}x   \bigg| \nonumber\\
        &\leq \bigg|\int_{[0,1]} R_h^{\tilde{\mathcal{E}}_t^{*}}(x|o)\log \frac{R_h^{\tilde{\mathcal{E}}_t^{*}}(x|o)}{R_h^{\bar{\mathcal{E}}_t} (x|o) } -  R_h^{\mathcal{E}}(x|o)\log \frac{R_h^{\tilde{\mathcal{E}}_t^{*}}(x|o)}{R_h^{\bar{\mathcal{E}}_t} (x|o) }\mathrm{d}x\bigg| \nonumber\\
        &\qquad + \bigg|\int_{[0,1]} R_h^{\mathcal{E}}(x|o)\log \frac{R_h^{\tilde{\mathcal{E}}_t^{*}}(x|o)}{R_h^{\bar{\mathcal{E}}_t} ( x|o) }-  R_h^{\mathcal{E}}(x|o)\log \frac{R_h^{\mathcal{E}}(x|o)}{R_h^{\bar{\mathcal{E}}_t} (x|o) }\mathrm{d}x \bigg|\nonumber\\
        &\leq \int_{[0,1]} \left|R_h^{\tilde{\mathcal{E}}_t^{*}}(x|o)- R_h^{\mathcal{E}}(x|o)\right| \cdot \bigg|\log \frac{R_h^{\tilde{\mathcal{E}}_t^{*}}(x|o)}{R_h^{\bar{\mathcal{E}}_t} ( x|o) }  \bigg|\mathrm{d}x + \int_{[0,1]} B\cdot \bigg|\log \frac{R_h^{\tilde{\mathcal{E}}_t^{*}}(x|o)}{R_h^{\mathcal{E}} ( x|o) } \bigg|\mathrm{d}x\nonumber\\
        &\leq B\cdot\left(1+ \textcolor{blue}{\log\frac{B}{\beta}}\right) \int_{[0,1]} \bigg|\log \frac{R_h^{\tilde{\mathcal{E}}_t^{*}}(x|o)}{R_h^{\mathcal{E}} ( x|o) } \bigg|\mathrm{d}x\nonumber\\
        &\leq \frac{(1+ \textcolor{blue}{\log(B/\beta)})\epsilon }{2 H}.
\end{align}
Hence, adding up Eqn.~\eqref{twoterm2} and Eqn.~\eqref{twoterm3}, we obtain
$$|r'_h(s_h,a_h)-\bar{r}_h(s_h,a_h)| \leq  \frac{\lambda}{2}\cdot\frac{(1+\log(B/\beta))\epsilon }{H}. $$ 
Finally, summing over $h\in[H]$, we have 
\begin{equation}
    \bigg| \mathbb{E}_{\pi}^{\bar{\mathcal{E}}_t}\bigg[ \sum_{h=1}^{H}r'_h(s_h,a_h) \bigg]-\mathbb{E}_{\pi}^{\bar{\mathcal{E}}_t}\bigg[ \sum_{h=1}^{H}\bar{r}_h(s_h,a_h) \bigg]\bigg|\leq \frac{\lambda}{2}\cdot \epsilon \cdot (1+\textcolor{blue}{\log\frac{B}{\beta}})
\end{equation}
The proof is finished.
\end{proof}

\subsection{Proof of Proposition \ref{prop_1}}
\label{propproof_1}
The proof follows essentially the same structure as that of Proposition 1, with the only difference lying in the third inequality of Eq.~\eqref{twoterm2}.
By applying the Mean Value Theorem, we have 
\[ \int_{\mathcal{S}} \bigg|\log \frac{P_h^{\tilde{\mathcal{E}}_t^{*}}(x|o)}{P_h^{\mathcal{E}} ( x|o) } \bigg|\mathrm{d}x \leq 
 \frac{1}{\beta}\int_{\mathcal{S}} \bigg| P_h^{\tilde{\mathcal{E}}_t^{*}}(x|o)- P_h^{\mathcal{E}} ( x|o) \bigg|\mathrm{d}x = \frac{1}{\beta} \ell_1(P_h^{\tilde{\mathcal{E}}_t^{*}},P_h^{\mathcal{E}}) \leq \frac{1}{\beta} \cdot \text{constant} \cdot \epsilon. \]
 We can derive the third inequality in Eq.~\eqref{twoterm2} as 
 \[ B\cdot\left(1+\frac{B}{\beta}\right) \int_{\mathcal{S}} \bigg|\log \frac{P_h^{\tilde{\mathcal{E}}_t^{*}}(x|o)}{P_h^{\mathcal{E}} ( x|o) } \bigg|\mathrm{d}x \leq \frac{1}{\beta} \cdot \text{constant} \cdot \epsilon. \] 
In the conclusion of Proposition 1, the term involving $\beta$ has been modified from $\log \frac{1}{\beta}$ to $\frac{1}{\beta}$. Therefore, with a suitably chosen partition radius, we have
\begin{equation}
    \bigg| \mathbb{E}_{\pi}^{\bar{\mathcal{E}}_t}\bigg[ \sum_{h=1}^{H}r'_h(s_h,a_h) \bigg]-\mathbb{E}_{\pi}^{\bar{\mathcal{E}}_t}\bigg[ \sum_{h=1}^{H}\bar{r}_h(s_h,a_h) \bigg]\bigg|\leq \frac{\lambda}{2}\cdot \epsilon \cdot (1+\frac{B}{\beta})
\end{equation}

\section{Basic Properties of the Measure $\ell_g$}\label{lg}

The following result deals with the problem of convexity.  Let $B(\mathcal{C},\epsilon)$ be an $\epsilon$-ball with its center at $\mathcal{C}$.
Note that  $B(\mathcal{C},\epsilon)$ is not essentially convex under $\ell_g$: for $P,Q\in B(\mathcal{C},\epsilon)$ and $\lambda\in(0,1)$, it does not hold that $\lambda P+(1-\lambda)Q\in B(\mathcal{C},\epsilon)$. However, using Lemma \ref{metric}, we have the following result:
\begin{lemma}\label{center}
    For any $P,Q\in B(\mathcal{C},\epsilon)$ and $\lambda\in[0,1]$, \textcolor{blue}{$\lambda P+(1-\lambda)Q$} lies in the $(2\epsilon)$-ball at $\mathcal{C}$, i.e., $$\ell_g(\lambda P+(1-\lambda)Q, \mathcal{C})\leq 2\epsilon.$$
\end{lemma}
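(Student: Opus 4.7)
The plan is to reduce the claim to a pointwise inequality on the log-ratios and then integrate. The core observation is elementary: for any positive reals $u,v$ and $\lambda\in[0,1]$, since $\min(u,v)\leq \lambda u+(1-\lambda)v\leq \max(u,v)$ and $\log$ is monotone, one has
$$|\log(\lambda u+(1-\lambda)v)|\leq \max(|\log u|,|\log v|)\leq |\log u|+|\log v|.$$
Specializing $u=P(x|o)/\mathcal{C}(x|o)$ and $v=Q(x|o)/\mathcal{C}(x|o)$ and using that the ratio is linear in the numerator, i.e.\ $(\lambda P+(1-\lambda)Q)(x|o)/\mathcal{C}(x|o)=\lambda u+(1-\lambda)v$, we obtain the pointwise bound
$$\left|\log\frac{(\lambda P+(1-\lambda)Q)(x|o)}{\mathcal{C}(x|o)}\right|\leq \left|\log\frac{P(x|o)}{\mathcal{C}(x|o)}\right|+\left|\log\frac{Q(x|o)}{\mathcal{C}(x|o)}\right|$$
for every $(x,o)\in\mathcal{X}\times\mathcal{O}$. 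Integrating in $x$ against $\mu_{\mathcal{X}}$ and then taking $\sup_{o\in\mathcal{O}}$ on both sides yields
$$\ell_g(\lambda P+(1-\lambda)Q,\mathcal{C})\leq \ell_g(P,\mathcal{C})+\ell_g(Q,\mathcal{C})\leq 2\epsilon,$$
which is exactly the claim.

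I do not expect any substantive obstacle beyond bookkeeping on degenerate points. By the Remark following the definition of $\ell_g$, the hypotheses $\ell_g(P,\mathcal{C}),\ell_g(Q,\mathcal{C})\leq \epsilon<\infty$ force $P$ and $Q$ to vanish wherever $\mathcal{C}$ does (up to null sets under $\mu_{\mathcal{X}}$), so the convex combination vanishes on the same null set and the pointwise bound above holds almost everywhere, which is all that the integration step requires. A tempting alternative is to invoke the triangle inequality from Lemma \ref{metric} to reduce the problem to bounding $\ell_g(\lambda P+(1-\lambda)Q,P)$; however, controlling the $\ell_g$-distance between a mixture and one of its components is no easier than the original claim, so the direct pointwise route described above is the natural approach.
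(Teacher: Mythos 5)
Your proof is correct and follows essentially the same route as the paper's: both rest on the pointwise inequality $|\log(\lambda u+(1-\lambda)v)|\leq|\log u|+|\log v|$ applied to the ratios against $\mathcal{C}$, followed by integrating in $x$ and taking the supremum over $o$. Your justification of that inequality via $\min(u,v)\leq\lambda u+(1-\lambda)v\leq\max(u,v)$ and your handling of the points where $\mathcal{C}$ vanishes are slightly more careful than the paper's one-line assertion, but the argument is the same.
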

\begin{proof}
    By definition of $\ell_g$, we have
    \begin{align*}
        \ell_g(\lambda P +(1-\lambda)Q,\mathcal{C}) &= \sup_{o} \int_x \bigg| \log\frac{  \lambda P(x|o)+(1-\lambda) Q(x|o)}{\mathcal{C}(x|o)}\bigg|\\
        &\leq\sup_{o} \int_x \bigg| \log \frac{P(x|o)}{\mathcal{C}(x|o)} + \log \frac{Q(x|o)}{\mathcal{C}(x|o)}  \bigg|\\
        &\leq 2\epsilon,
    \end{align*}
    where the first inequality uses the fact that $| \log(\lambda a+(1-\lambda)b)| \leq |\log a|+ |\log b|$ for any $a,b > 0$ and $\lambda \in [0,1]$; the second inequality is due to $P,Q\in B(\mathcal{C},\epsilon)$. This finishes the proof of Lemma \ref{center}.
\end{proof}

\subsection{\textcolor{blue}{A Counterexample to Demonstrate Non-convex }}
\textcolor{blue}{
 We present the following counterexample to demonstrate that the unit ball under  $\ell_g$ is not convex. Let $X $ be the set of positive and Lebesgue  integrable functions  defined on $[0,1]$. We show that the ball centered at $f_0 \in X$ 
    \[ B(f_{0},r)=\bigl\{f\in X:\,\ell_g(f,f_{0})\le r\bigr\} \]
    is not convex.
    Let 
    \[
    f(x)=\begin{cases}
    5      & 0\le x\le 0.1,\\
    \frac{5}{9}      & 0.1< x\le 1,
    \end{cases}
    \qquad
    g(x)=\begin{cases}
    \frac{5}{9}      & 0\le x\le 0.9,\\
    5      & 0.9 < x\le 1,
    \end{cases}
    \qquad
    f_0(x)= 1.
    \]
    It can be verified:
    \[
    \int_{0}^{1}f(x)\,dx=0.1\cdot 5+0.9\cdot\frac59=1,
    \]
    and the same holds for $g$. Hence $f,g$ can be seen as the two probability measures.
    We also have
    \[
    \ell_g(f,f_{0}) =0.1\log 5+0.9\log\!\Bigl(\tfrac95\Bigr)
               \approx 0.68995.
    \]
    The same holds for $\ell_g(g,f_0)$.
    And
    \[ \ell_g(\frac{f+g}{2},f_0)=\ell_g(\frac{25}{9},1)=|\log \frac{25}{9}| \approx 1.02165. \]
    Let $r=1$, then
    $ \ell_g(f,f_0)=\ell_g(g,f_0) \approx 0.68995<r, \ell_g(\frac{f+g}{2},f_0) \approx 1.02165 >r $.\\
    So, $f,g \in B(f_0,r)$ but $ \frac{f+g}{2} \notin B(f_0,r)$.}

\section{Technical Lemmas}
\subsection{Value Function and Mutual Information}
We cite the following lemma from ~\cite{moradipari2023improved}. Similar results can be found in ~\cite{osband2013more,foster2021statistical}.
\begin{lemma}
    \label{lemma}
    For any two environments $\mathcal{E},\mathcal{E}'$ with potentially different transition and reward functions, and any policy $\pi$, we have
    \begin{equation}
    \begin{aligned}
        V_{1,\pi}^{\mathcal{E}}(s_1)-V_{1,\pi}^{\mathcal{E}'}(s_1) =&\sum_{h=1}^{H}\mathbb{E}_{\pi}^{\mathcal{E}'}\bigg[  \mathbb{E}_{(s',r') \sim (P_h^{\mathcal{E}} \otimes R_h^{\mathcal{E}} )(\cdot|s_h,a_h)}\left[ r'+V_{h+1,\pi}^{\mathcal{E}}(s')\right] \\
        &-\mathbb{E}_{(s',r') \sim (P_h^{\mathcal{E}'}\otimes R_h^{\mathcal{E}'})(\cdot|s_h,a_h)}\left[ r'+V_{h+1,\pi}^{\mathcal{E}}(s') \right] \bigg].
        \end{aligned}
    \end{equation}
\end{lemma}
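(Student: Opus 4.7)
The plan is to prove this by the standard simulation-lemma (performance-difference) telescoping argument. I would begin by writing the Bellman equations for $V_{h,\pi}^{\mathcal{E}}$ and $V_{h,\pi}^{\mathcal{E}'}$ at the same state $s$:
\begin{align*}
V_{h,\pi}^{\mathcal{E}}(s) &= \mathbb{E}_{a\sim\pi_h(\cdot|s)}\bigl[\,\mathbb{E}_{(s',r')\sim (P_h^{\mathcal{E}}\otimes R_h^{\mathcal{E}})(\cdot|s,a)}[r' + V_{h+1,\pi}^{\mathcal{E}}(s')]\,\bigr], \\
V_{h,\pi}^{\mathcal{E}'}(s) &= \mathbb{E}_{a\sim\pi_h(\cdot|s)}\bigl[\,\mathbb{E}_{(s',r')\sim (P_h^{\mathcal{E}'}\otimes R_h^{\mathcal{E}'})(\cdot|s,a)}[r' + V_{h+1,\pi}^{\mathcal{E}'}(s')]\,\bigr].
\end{align*}
Subtracting these and applying the classical add-and-subtract trick by inserting the hybrid quantity $\mathbb{E}_{(s',r')\sim (P_h^{\mathcal{E}'}\otimes R_h^{\mathcal{E}'})(\cdot|s,a)}[r' + V_{h+1,\pi}^{\mathcal{E}}(s')]$ decomposes the single-step difference into two pieces: a ``Bellman error of $V^{\mathcal{E}}$ under $\mathcal{E}'$'' term and a recursive term $V_{h+1,\pi}^{\mathcal{E}}(s')-V_{h+1,\pi}^{\mathcal{E}'}(s')$ evaluated at the next state sampled from $\mathcal{E}'$.

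Next, I would iterate this identity. Taking $\mathbb{E}_{\pi}^{\mathcal{E}'}$ at $h=1$ starting from $s_1$ and unfolding the recursion, at each layer the residual $V_{h+1,\pi}^{\mathcal{E}}(s_{h+1})-V_{h+1,\pi}^{\mathcal{E}'}(s_{h+1})$ is expanded using the same decomposition, with the trajectory distribution remaining that of $\pi$ under $\mathcal{E}'$ because the recursive difference is always evaluated at the next state drawn from $P_h^{\mathcal{E}'}$. The base case $V_{H+1,\pi}^{\mathcal{E}}(\cdot)=V_{H+1,\pi}^{\mathcal{E}'}(\cdot)=0$ terminates the recursion cleanly, so the telescoping produces exactly
\begin{equation*}
V_{1,\pi}^{\mathcal{E}}(s_1)-V_{1,\pi}^{\mathcal{E}'}(s_1) = \sum_{h=1}^{H}\mathbb{E}_{\pi}^{\mathcal{E}'}\Bigl[\mathbb{E}_{(s',r')\sim (P_h^{\mathcal{E}}\otimes R_h^{\mathcal{E}})(\cdot|s_h,a_h)}[r'+V_{h+1,\pi}^{\mathcal{E}}(s')] - \mathbb{E}_{(s',r')\sim (P_h^{\mathcal{E}'}\otimes R_h^{\mathcal{E}'})(\cdot|s_h,a_h)}[r'+V_{h+1,\pi}^{\mathcal{E}}(s')]\Bigr].
\end{equation*}

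This is a routine calculation rather than something with a true obstacle; the only point requiring modest care is the bookkeeping of which environment generates the trajectory (always $\mathcal{E}'$, since the outer expectation is $\mathbb{E}_{\pi}^{\mathcal{E}'}$) versus which environments appear inside the per-step Bellman error (both $\mathcal{E}$ and $\mathcal{E}'$, but the ``future value'' is always $V_{h+1,\pi}^{\mathcal{E}}$ after the hybrid insertion). A minor second point is writing $r_h^{\mathcal{E}}(s_h,a_h)=\mathbb{E}_{r'\sim R_h^{\mathcal{E}}(\cdot|s_h,a_h)}[r']$ so that the reward difference can be absorbed into the joint $(P_h\otimes R_h)$ expectation appearing in the statement. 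With these points handled, the proof reduces to one telescoping computation of length $H$.
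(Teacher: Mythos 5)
Your proof is correct and is exactly the standard simulation-lemma telescoping argument the authors intend; the paper in fact states Lemma \ref{lemma} in the appendix without providing any proof, so your argument supplies the omitted details. The hybrid insertion of $\mathbb{E}_{(s',r')\sim (P_h^{\mathcal{E}'}\otimes R_h^{\mathcal{E}'})(\cdot|s,a)}[r'+V_{h+1,\pi}^{\mathcal{E}}(s')]$, the observation that the recursive residual is always evaluated at the next state drawn from $P_h^{\mathcal{E}'}$ (so the outer expectation stays $\mathbb{E}_{\pi}^{\mathcal{E}'}$ throughout), and the termination via $V_{H+1,\pi}^{\mathcal{E}}=V_{H+1,\pi}^{\mathcal{E}'}=0$ are precisely the right bookkeeping.
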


\begin{lemma}
\label{mutual}
The mutual information $\mathbb{I}_{t}^{\pi_{\text{TS}}^t}\left(\tilde{\mathcal{E}}_t^{*}; (\mathcal{H}_{t},\mathcal{R}_{t,H})\right)$ can be lower bounded as
    \begin{equation}
    \label{tmp3}
    \begin{aligned}
        \mathbb{I}_{t}^{\pi_{\text{TS}}^t}\left(\tilde{\mathcal{E}}_t^{*}; (\mathcal{H}_{t},\mathcal{R}_{t,H})\right) \geq \sum_{h=1}^H \mathbb{E}_t \left[ \mathbb{E}_{\pi_{\text{TS}}^t}^{\bar{\mathcal{E}}_t} \left[ D_{\text{KL}} \left( (P_h^{\tilde{\mathcal{E}}_t^{*}} \otimes R_h^{\tilde{\mathcal{E}}_t^{*}})(\cdot|s_h, a_h) \middle\| (P_h^{\bar{\mathcal{E}}_t} \otimes R_h^{\bar{\mathcal{E}}_t}) (\cdot|s_h, a_h) \right) \right] \right].
    \end{aligned}
\end{equation}
\end{lemma}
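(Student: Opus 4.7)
The plan is to combine (a) the data processing inequality, (b) the chain rule of mutual information, (c) the mean-environment identity in Eqn.~\eqref{b7}, and (d) the product/independence structure of the prior. My target is to compute, at each layer $h$, a conditional mutual information and then identify the two distributions appearing in the resulting KL divergence with $(P_h^{\tilde{\E}_t^{*}}\otimes R_h^{\tilde{\E}_t^{*}})(\cdot|s_h,a_h)$ and $(P_h^{\bar{\E}_t}\otimes R_h^{\bar{\E}_t})(\cdot|s_h,a_h)$, respectively.

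First I would discard information that cannot help. The preference $o_t$ is a (noisy) function of $(\tau_0^t,\tau_1^t,\mathcal{R}_{t,H})$, and the baseline trajectory $\tau_0^t$ is generated from the fixed $\pi_0$ independently of $\tilde{\E}_t^{*}$. By data processing,
\[
\I_t^{\ts^t}\big(\tilde{\E}_t^{*};(\mathcal{H}_t,\mathcal{R}_{t,H})\big)\;\geq\;\I_t^{\ts^t}\big(\tilde{\E}_t^{*};(\tau_1^t,\mathcal{R}_{t,H}^{(1)})\big),
\]
where $\mathcal{R}_{t,H}^{(1)}$ is the reward sequence along $\tau_1^t$. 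Applying the chain rule of mutual information along the $H$ steps of $\tau_1^t$ decomposes the right-hand side into a sum of conditional mutual informations, one per increment $(s_{h+1}^{t,1},r_h^{t,1})$ conditioned on the preceding state-action-reward prefix. By the definition of conditional mutual information, each summand equals an expected KL $\kl(p\Vert q)$, where the outer expectation is over $\tilde{\E}_t^{*}$ and $(s_h^{t,1},a_h^{t,1})$.

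Next I would identify $p$ and $q$. For $p$, the law of $(s_{h+1}^{t,1},r_h^{t,1})$ given $\tilde{\E}_t^{*}$ and $(s_h,a_h)$: the partition $\Theta_k^{\epsilon}$ in Eqn.~\eqref{cover} is a Cartesian product across layers and between $(P,R)$, and the prior factors as $\rho=\bigotimes_h(\rho_h^P\otimes\rho_h^R)$; therefore the conditional law of $(P_h^{\mathcal{E}},R_h^{\mathcal{E}})$ given $\tilde{\E}_t^{*}$ is itself a product measure whose component means are $P_h^{\tilde{\E}_t^{*}}$ and $R_h^{\tilde{\E}_t^{*}}$ by construction of the surrogate environment, hence $p=(P_h^{\tilde{\E}_t^{*}}\otimes R_h^{\tilde{\E}_t^{*}})(\cdot|s_h,a_h)$. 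For $q$, the marginal law of $(s_{h+1}^{t,1},r_h^{t,1})$ over $\tilde{\E}_t^{*}$ given $\md_t$ and the intra-episode prefix: layer-wise independence of the prior ensures that conditioning on layers $<h$ does not perturb the posterior of $(P_h,R_h)$, so after averaging over $\tilde{\E}_t^{*}$ we land at $(P_h^{\bar{\E}_t}\otimes R_h^{\bar{\E}_t})(\cdot|s_h,a_h)$ via Eqn.~\eqref{b7}. The remaining outer expectation over $(s_h,a_h)$ is the mean state-action occupancy under $\ts^t$, which matches the paper's shorthand $\de_{\ts^t}^{\bar{\E}_t}[\cdot]$ under the convention $d_{h,\pi}^{\bar{\E}_t}=\de_t[d_{h,\pi}^{\mathcal{E}}]$ used in the proof of Theorem~\ref{theorem}. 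Summing over $h$ delivers the claimed lower bound.

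The main obstacle is the identification of $p$: showing that conditioning on $\tilde{\E}_t^{*}$, which is a coarsening of $\mathcal{E}$, still yields the clean product $(P_h^{\tilde{\E}_t^{*}}\otimes R_h^{\tilde{\E}_t^{*}})$. This is where the Cartesian structure of Eqn.~\eqref{cover} and the factored prior $\rho^P\otimes\rho^R$ are both essential; without either, cross-dependence between $P_h^{\mathcal{E}}$ and $R_h^{\mathcal{E}}$ would survive the averaging and the KL divergence would fail to factor, forcing a lossier bound through, e.g., convexity of KL in its first argument. A secondary check is that the data-processing step for $o_t$ and $\tau_0^t$ truly discards nothing about $\tilde{\E}_t^{*}$ beyond what $\tau_1^t$ and $\mathcal{R}_{t,H}^{(1)}$ already contain, but since $\pi_0$ and the Bernoulli sampling of $o_t$ are both conditionally independent of $\tilde{\E}_t^{*}$ given $(\mathcal{E},\md_t)$, this step is routine.
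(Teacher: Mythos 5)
Your proof follows essentially the same route as the paper's: a chain-rule decomposition of the mutual information over the layers of the learner's trajectory, identification of each per-layer conditional mutual information with an expected KL divergence between the surrogate kernel $(P_h^{\tilde{\E}_t^{*}}\otimes R_h^{\tilde{\E}_t^{*}})(\cdot|s_h,a_h)$ and the posterior-mean kernel $(P_h^{\bar{\E}_t}\otimes R_h^{\bar{\E}_t})(\cdot|s_h,a_h)$, and removal of the uninformative pieces (the baseline trajectory and the preference bit) --- you discard them up front by monotonicity/data processing where the paper discards them at the end by non-negativity of mutual information, which is the same move. The only substantive difference is that you carry out the identification step explicitly while the paper delegates it wholesale to \citet{moradipari2023improved}; in doing so, note that your justification that ``layer-wise independence of the prior ensures that conditioning on layers $<h$ does not perturb the posterior of $(P_h,R_h)$'' is immediate for $P$ (the transition likelihood factors across layers) but delicate for $R$, since the Bradley--Terry likelihood $\sigma(r(\tau_1^i)-r(\tau_0^i))$ couples all reward layers under the posterior $\dP_t$ --- a subtlety the paper inherits silently from the citation rather than resolves.
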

\begin{proof}[Proof of Lemma \ref{mutual}]
    Using the chain rule of mutual information,
     \begin{equation}
    \begin{aligned}
        &\mathbb{I}_{t}^{\pi_{\text{TS}}^t}\left(\tilde{\mathcal{E}}_t^{*}; (\mathcal{H}_{t},\mathcal{R}_{t,H})\right) \\ 
        &=
        \sum_{h=1}^H \mathbb{I}_{t}^{\pi_{\text{TS}}^t} \left( \tilde{\mathcal{E}}_t^{*}; (s_h^{t,1}, a_h^{t,1},r_h^{t,1},s_h^{t,0}, a_h^{t,0},r_h^{t,0}) \mid (\mathcal{H}_{t, h-1}, \mathcal{R}_{t,h-1} ) \right) 
         + \mathbb{I}_{t}^{\pi_{\text{TS}}^t} \left( \tilde{\mathcal{E}}_t^{*}; o_{t} \mid (\mathcal{H}_{t,H},\mathcal{R}_{t,H}) \right) \\
        &= \sum_{h=1}^H \mathbb{I}_{t}^{\pi_{\text{TS}}^t}\big(\tilde{\mathcal{E}}_t^{*}; s_h^{t,1}\mid \mathcal{H}_{t,h-1},\mathcal{R}_{t,h-1}\big) + \sum_{h=1}^H \mathbb{I}_{t}^{\pi_{\text{TS}}^t}\big(\tilde{\mathcal{E}}_t^{*}; a_h^{t,1} \mid  s_h^{t,1}, \mathcal{H}_{t,h-1},\mathcal{R}_{t,h-1}\big) \\
       &\quad  + \sum_{h=1}^H \mathbb{I}_{t}^{\pi_{\text{TS}}^t}\big(\tilde{\mathcal{E}}_t^{*}; r_h^{t,1} \mid  s_h^{t,1},  s_h^{t,1}, \mathcal{H}_{t,h-1},\mathcal{R}_{t,h-1}\big) + \sum_{h=1}^H \mathbb{I}_{t}^{\pi_{\text{TS}}^t}\big(\tilde{\mathcal{E}}_t^{*}; s_h^{t,0} \mid  s_h^{t,1},  a_h^{t,1}, r_h^{t,1},\mathcal{H}_{t,h-1},\mathcal{R}_{t,h-1}\big) \\
       &\quad + \sum_{h=1}^H \mathbb{I}_{t}^{\pi_{\text{TS}}^t}\big(\tilde{\mathcal{E}}_t^{*}; a_h^{t,0} \mid  s_h^{t,1},  a_h^{t,1}, r_h^{t,1},s_h^{t,0},\mathcal{H}_{t,h-1},\mathcal{R}_{t,h-1}\big)  \\
       &\quad +\sum_{h=1}^H\mathbb{I}_{t}^{\pi_{\text{TS}}^t}\big(\tilde{\mathcal{E}}_t^{*}; r_h^{t,0} \mid  s_h^{t,1},  a_h^{t,1}, r_h^{t,1}, s_h^{t,0}, a_h^{t,0},\mathcal{H}_{t,h-1},\mathcal{R}_{t,h-1}\big)
        + \mathbb{I}_{t}^{\pi_{\text{TS}}^t}\left(\tilde{\mathcal{E}}_t^{*}; o_{t} \mid (\mathcal{H}_{t,H},\mathcal{R}_{t,H})\right).
    \end{aligned}
\end{equation}
From ~\cite{moradipari2023improved}, 
the first three terms on the right side of the above equation are equal to 
\begin{equation}
\label{temp1}
     \sum_{h=1}^H \mathbb{E}_t \left[ \mathbb{E}_{\pi_{\text{TS}}^t}^{\bar{\mathcal{E}}_t} \left[ D_{\text{KL}} \left( (P_h^{\tilde{\mathcal{E}}_t^{*}} \otimes r_h^{\tilde{\mathcal{E}}_t^{*}})(\cdot|s_h, a_h) \middle\| (P_h^{\bar{\mathcal{E}}_t} \otimes r_h^{\bar{\mathcal{E}}_t}) (\cdot|s_h, a_h) \right) \right] \right].
\end{equation}
Based on the non-negativity of mutual information, we obtain the conclusion of the lemma.

\end{proof}

\subsection{Posterior Consistency}\label{doob}
\begin{lemma}
\label{strong}
    Assume that there exists a strongly consistent estimator of the true environment given the history. Let $\Pi$ be some measure. For any  $\Pi$-integrable function $f:\Theta \rightarrow \mathbb{R} $ and almost every $\mathcal{D}_{\infty}$ sampled from the true environment $\mathcal{E}_0$, we have
    \[ \lim_{t\to \infty} \mathbb{E}_t\big[ f(\mathcal{E}) \big] = f(\mathcal{E}_0). \]
    And if $f:\Theta \times \Theta \rightarrow \mathbb{R}$ is bounded and $(\Pi \times \Pi)$-integrable, for almost every $\mathcal{D}_{\infty}$ sampled from the true environment $\mathcal{E}_0$, we have 
    \[ \lim_{t\to \infty} \mathbb{E}_t\big[ f(\mathcal{E},\mathcal{E}') \big] = f(\mathcal{E}_0,\mathcal{E}_0), \]
    where the expectation is taken over all $\mathcal{E}$ and $\mathcal{E}'$.
\end{lemma}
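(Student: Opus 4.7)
\textbf{Proof proposal for Lemma \ref{strong}.}
The plan is to invoke the standard Doob-style posterior consistency, already granted by the hypothesis, and then lift it from ``evaluation at $\mathcal{E}_0$'' (weak convergence of the posterior to $\delta_{\mathcal{E}_0}$) to the integral of $f$. Concretely, by the assumed strong consistency of the posterior on environments, for almost every $\mathcal{D}_\infty$ generated under the true $\mathcal{E}_0$, the sequence of posterior distributions $\mathbb{P}_t(\mathcal{E}\in\cdot)=\mathbb{P}(\mathcal{E}\in\cdot\mid\mathcal{D}_t)$ converges weakly to the point mass $\delta_{\mathcal{E}_0}$ on $\Theta$ (recall that $\Theta$ is a compact topological space by Assumption~\ref{assumption2}, so weak convergence is well-defined).

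For the first claim, I would first establish it for bounded continuous $f$: by the portmanteau characterization of weak convergence, $\mathbb{E}_t[f(\mathcal{E})]=\int_\Theta f\,d\mathbb{P}_t\longrightarrow \int_\Theta f\,d\delta_{\mathcal{E}_0}=f(\mathcal{E}_0)$ almost surely. Extension to general $\Pi$-integrable $f$ proceeds by a standard density/approximation argument: approximate $f$ in $L^1(\Pi)$ by bounded continuous $f_n$, and use uniform integrability of $\{f-f_n\}$ under $\{\mathbb{P}_t\}$ (which, because $\Theta$ is compact and under the boundedness regime in which the lemma is actually applied in the paper---occupancy measures $d_{h,\pi}^{\mathcal{E}}$ and value diameters $\alpha_{\mathcal{E}}\leq H+1$ are uniformly bounded---reduces immediately to bounded continuous $f$). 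An alternative route is the martingale view: $M_t\triangleq \mathbb{E}[f(\mathcal{E})\mid\mathcal{D}_t]$ is a uniformly integrable martingale with respect to the filtration generated by $\mathcal{D}_t$, hence $M_t\to \mathbb{E}[f(\mathcal{E})\mid\mathcal{D}_\infty]$ almost surely by Doob's martingale convergence theorem, and strong posterior consistency identifies this limit with $f(\mathcal{E}_0)$.

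For the second claim, the key structural observation is that the secondary sample $\mathcal{E}'$ is drawn independently from the same posterior $\mathbb{P}_t(\cdot)$ (as is made explicit in Step~4 of the proof of Theorem~\ref{theorem}, where $\mathcal{E}'\sim\mathbb{P}(\cdot\mid\mathcal{D}_t)$ is introduced as an independent copy). Hence the joint posterior factorizes, and by Fubini
\[
\mathbb{E}_t[f(\mathcal{E},\mathcal{E}')]=\int_\Theta\!\int_\Theta f(\mathcal{E},\mathcal{E}')\,d\mathbb{P}_t(\mathcal{E})\,d\mathbb{P}_t(\mathcal{E}').
\]
Since $\mathbb{P}_t\otimes\mathbb{P}_t\Rightarrow \delta_{\mathcal{E}_0}\otimes\delta_{\mathcal{E}_0}$ weakly on $\Theta\times\Theta$ (products of weakly convergent sequences of probability measures on Polish spaces converge weakly to the product limit), and $f$ is bounded, applying the first claim twice (inner integral then outer integral, using dominated convergence because $f$ is bounded) yields $\mathbb{E}_t[f(\mathcal{E},\mathcal{E}')]\to f(\mathcal{E}_0,\mathcal{E}_0)$ almost surely.

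The main obstacle I anticipate is handling unbounded $\Pi$-integrable $f$ in the first part: weak convergence alone does not transfer expectations of unbounded test functions, and one needs some form of uniform integrability $\sup_t\int_{\{|f|>K\}}|f|\,d\mathbb{P}_t\to 0$ as $K\to\infty$. Fortunately every invocation of Lemma~\ref{strong} in the paper (namely to $\alpha_{\mathcal{E}}^2\,d_{h,\pi_{\mathcal{E}}^*}^{\mathcal{E}'}(s,a)^2$ and to $d_{h,\pi_{\mathcal{E}}^*}^{\mathcal{E}'}(s,a)$, with $\alpha_{\mathcal{E}}\leq H+1$ and $d_{h,\pi}^{\mathcal{E}}\leq M_d$) falls under the bounded regime, so the bounded-continuous case combined with the compactness of $\Theta$ is all that is materially needed, and the martingale convergence route cleanly supplies the almost-sure limit.
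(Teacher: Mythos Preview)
Your proposal is sound, and in fact supplies more than the paper does: the paper gives no proof of Lemma~\ref{strong} at all, instead deferring entirely to Theorem~6.9 of Ghosal--van der Vaart and to Appendix~K of Moradipari et al. Your martingale route (defining $M_t=\mathbb{E}[f(\mathcal{E})\mid\mathcal{D}_t]$, invoking Doob's $L^1$ martingale convergence, and then using the assumed strongly consistent estimator to conclude $\mathcal{E}\in\sigma(\mathcal{D}_\infty)$ so that the limit equals $f(\mathcal{E})=f(\mathcal{E}_0)$ a.s.) is precisely the content of Doob's consistency theorem, which is exactly what the paper cites. The product/Fubini argument for the two-variable case is likewise the standard extension.

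Two minor remarks. First, your weak-convergence/portmanteau route needs continuity of $f$, which the lemma does not assume; you correctly flag this and retreat to the martingale argument, which is the right move and does not require continuity. Second, your observation that every actual use of the lemma in the paper involves uniformly bounded integrands ($\alpha_{\mathcal{E}}\le H+1$, $d_{h,\pi}^{\mathcal{E}}\le M_d$) is accurate and renders any uniform-integrability concern moot in context. So your proposal is correct and aligned with the referenced argument; there is nothing further to compare because the paper contains no in-text proof.
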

We refer the readers to  Theorem 6.9 in~\cite{ghosal2017fundamentals} or Appendix K in~\cite{moradipari2023improved} for the definition of a strongly consistent estimator and for more details of the proof.

\end{document}